\newtheorem{thm}{Theorem}[section]
\newtheorem{lem}[thm]{Lemma}
\newtheorem{prop}[thm]{Proposition}
\newtheorem{assume}{Assumption}[section]
\numberwithin{equation}{section}
\newcommand{\calV}{\mathcal{V}}
\newcommand{\R}{\ensuremath{\mathbb{R}}}
\newcommand{\norm}[1]{\lVert #1 \rVert}
\newcommand{\bignorm}[1]{\left\lVert #1 \right\rVert}
\newcommand{\bigip}[2]{\ensuremath{\left\langle #1, #2 \right\rangle}}
\newcommand{\ip}[2]{\ensuremath{\langle #1, #2 \rangle}}
\newcommand{\E}{\mathbb{E}}
\newcommand{\abs}[1]{\ensuremath{| #1 |}}
\newcommand{\bigabs}[1]{\ensuremath{\left| #1 \right|}}
\newcommand{\bigceil}[1]{\left\lceil #1 \right\rceil}
\newcommand{\ind}{\mathbf{1}}
\newcommand{\leb}{\mu_{\mathsf{Leb}}}
\renewcommand{\Pr}{\mathbb{P}}
\newcommand{\T}{\mathsf{T}}
\newcommand{\TM}{\mathcal{TM}}
\newcommand{\calD}{\mathcal{D}}
\newcommand{\calE}{\mathcal{E}}
\newcommand{\calX}{\mathcal{X}}
\newcommand{\calT}{\mathcal{T}}
\newcommand{\calF}{\mathcal{F}}
\newcommand{\calR}{\mathcal{R}}
\newcommand{\calK}{\mathcal{K}}
\newcommand{\cvectwo}[2]{\begin{bmatrix} #1 \\ #2 \end{bmatrix}}
\newcommand{\p}{\partial}
\newcommand{\Otilde}{\tilde{O}}
\algrenewcommand{\algorithmiccomment}[1]{\hskip5em\# #1}
\title{Learning Stability Certificates from Data}
\author[2]{Nicholas M.\ Boffi\thanks{
Both authors contributed equally.
Work done while N.\ M.\ Boffi was interning at Google Brain Robotics.}}
\author[1]{Stephen Tu$^*$}
\author[3]{Nikolai Matni}
\author[4,1]{\authorcr Jean-Jacques E.\ Slotine}
\author[1]{Vikas Sindhwani}
\affil[1]{Google Brain Robotics}
\affil[2]{John A.\ Paulson School of Engineering and Applied Sciences, Harvard University}
\affil[3]{Department of Electrical and Systems Engineering, University of Pennsylvania}
\affil[4]{Nonlinear Systems Laboratory, Massachusetts Institute of Technology}
\date{August 14, 2020, Revised: \today}
\begin{document}
\maketitle

%===============================================================================

\begin{abstract}
Many existing tools in nonlinear control theory
for establishing stability or safety of a dynamical system
can be distilled to the construction of a
\emph{certificate function} that guarantees a desired property.
However, algorithms for synthesizing certificate functions
typically require a closed-form analytical expression
of the underlying dynamics, which rules out their use
on many modern robotic platforms.
To circumvent this issue, we develop algorithms for learning certificate functions only
from trajectory data. 
We establish bounds on the generalization error -- 
the probability that a certificate will not
certify a new, unseen trajectory -- when learning
from trajectories,
and we convert such generalization error bounds into global stability guarantees.
We demonstrate empirically that 
certificates for complex dynamics can be efficiently learned, and 
that the learned certificates can be used for downstream tasks such as adaptive control.
\end{abstract}

\section{Introduction}
\label{sec:introduction}

A fundamental barrier to widespread deployment of
reinforcement learning policies on real robots is the lack of formal
safety and stability guarantees. While much research
has focused on how to train control
policies for complex systems,
considerably less emphasis has been placed on verifying stability for the resulting  closed-loop system.
Without any a-priori guarantees, practitioners will be
hesitant to deploy learned solutions in the real world regardless
of performance in simulation.

Many powerful tools have been developed in nonlinear control theory to address the safety and stability of systems with known dynamics. The most well-known technique is the construction of a Lyapunov function~\cite{lyap_original, slot_li_book} to demonstrate asymptotic stability of a system with respect to an equilibrium point.
Similarly, barrier functions~\cite{blanchini99setinvariance,ames19cbf, brett_barrier} are used to show set-invariance, which
has been widely used in safety-critical applications to
prove that a system does not exit a desired safe set.
Contraction analysis~\cite{lohmiller98contraction} provides an alternative view of stability, applicable to  many problems in nonlinear control and robotics, by considering the convergence of trajectories towards each other rather than to an equilibrium point.
The unifying theme among these tools is the construction of a
\emph{certificate function} (the Lyapunov/barrier function or contraction metric) that proves a given desirable property
for the system of interest. These certificates have
strong converse results~\cite{kellett15converse,giesl15converse, converse_contr}, which imply the existence of a certificate
function if the desired property does hold,
and can also be used for
controller synthesis~\cite{sontag89universal,ames19cbf,krstic95clf}.

The main obstacle for producing certificate functions in modern
robotics and reinforcement learning is that existing synthesis and verification tools
such as sum-of-squares (SOS) optimization~\cite{ahmadi17sos}
or SMT solvers~\cite{gao13dreal}
typically assume the dynamics can be written down analytically in closed form.
Furthermore,
the functions of interest are often constrained to lie in
restrictive classes such as
polynomial basis functions of fixed degree.
This presents a serious hurdle
in modern robotics, where (a) sophisticated physics simulators are widely used to model complex environments
and (b) control policies are often represented
with complex deep neural networks.
Finally, both SOS optimization and formal verification
tools are computationally intensive,
thus limiting their applicability.

To avoid these limitations, recent approaches have proposed to treat certificate
synthesis as a machine learning problem, and train powerful function
approximators such as deep neural networks and reproducing kernel Hilbert space (RKHS) predictors on trajectory data collected from a dynamical system \cite{richards18lyapunov,manek19learningstable,taylor19cbf,robey20cbf,jin20neural,singh19learning}.
The general strategy is to enforce the desired certificate condition
(e.g. the Lie derivative of a function $V$ should be negative)
along collected samples. Empirically, this has been shown to be
quite effective, and the learned certificate often generalizes well
outside of the training data. However, a deeper theoretical understanding of when and why this approach works is missing.

\paragraph{Contributions.}

\begin{wrapfigure}[16]{r}{0.34\textwidth}
  \vspace{-20pt}
  \begin{center}
    \includegraphics[width=0.34\textwidth]{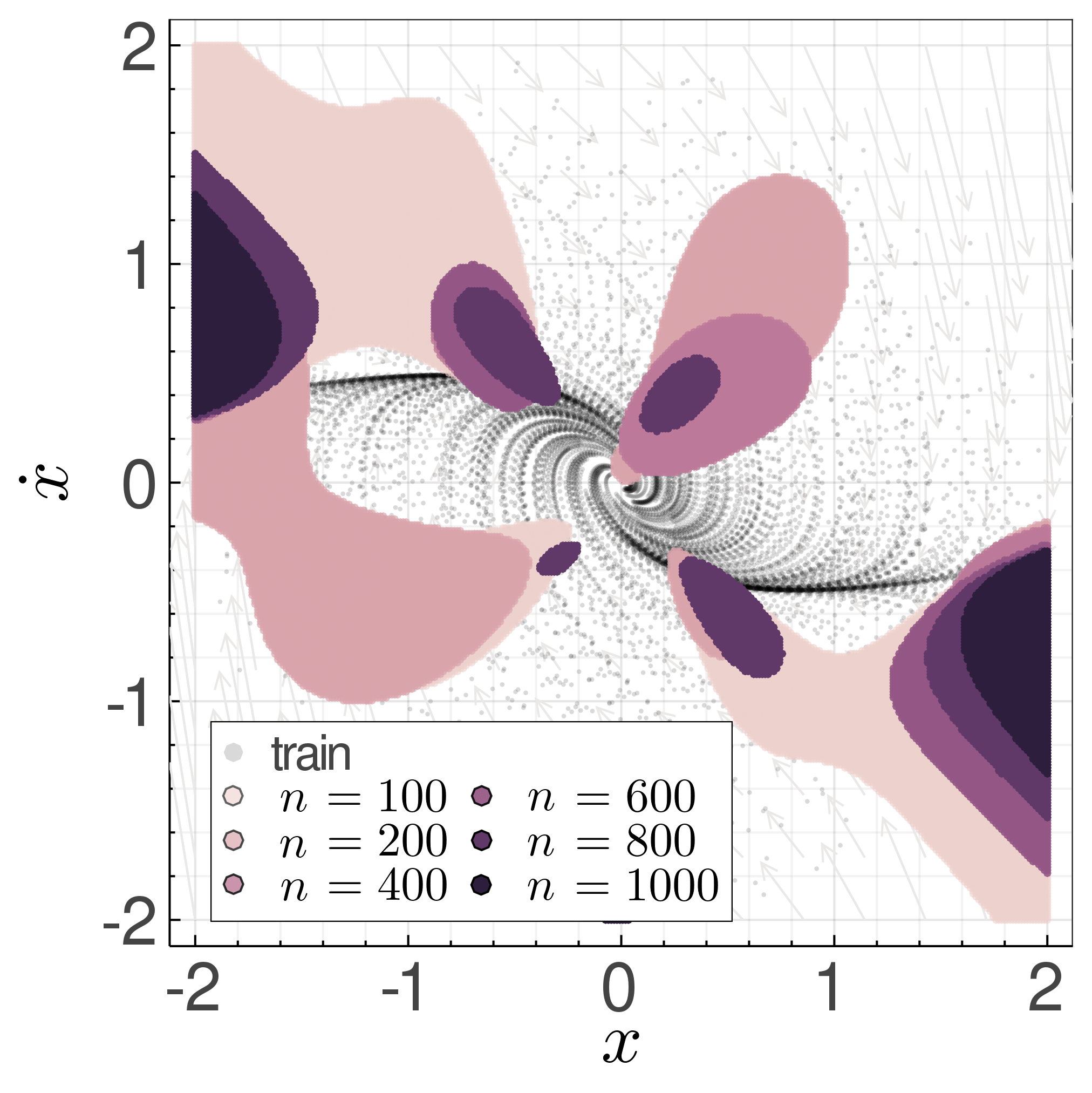}
  \end{center}
  \vspace{-10pt}
  \caption{Violation sets for contraction metric learning on the damped Van der Pol oscillator.
  }
  \label{fig:vdp_pp}
\end{wrapfigure}
Consider Figure~\ref{fig:vdp_pp}, where
a contraction metric, which certifies pairwise convergence of
trajectories, is learned from rollouts of a
damped Van der Pol oscillator. Regions of the state space
for which the learned metric is \emph{not} contracting are shown
as a function of the number of trajectories $n$.
While the size of the violating regions appears to shrink as
$n$ increases,
Figure~\ref{fig:vdp_pp} raises many questions. How much data does one need to collect so that the violating
regions cover at most a prescribed fraction of the relevant state space?
Is the learning consistent, i.e.~do the regions vanish as $n \to \infty$?

In this paper, we show that learning is indeed consistent.
To this end, we compute upper bounds on the volume of the violating
regions which tend to zero as $n$ grows.
We do this in two steps.
First, we formulate a general optimization framework
that encompasses learning many existing certificate functions, and
use statistical learning theory to prove a fast
$\Otilde(k/n)$ rate on the generalization error -- the probability the learned certificate will not certify a new, unseen trajectory --  where $k$ is the effective number of parameters of the
function class for the certificate.
We then translate bounds on the generalization error
into non-probabilistic bounds on the volume of the
violating regions.
We conclude with experiments, which show
that certificates can be efficiently learned from
trajectories, and that the learned certificates
can perform downstream tasks such as adaptive control against unknown disturbances.

\section{Related Work}
\label{sec:related}

Prior research generally focuses on learning certificates for a fixed system from trajectories, or on using certificate conditions as regularizers when learning models for control.

\paragraph{Learning Lyapunov functions from data.}

\citet{giesl20lyapunov} propose to learn a Lyapunov function from noisy trajectories using a specific reproducing kernel.
Their algorithm first fits a dynamics model from data, and then
uses interpolation to construct a Lyapunov function from the learned model.
The authors prove $L_\infty$ convergence results on the Lie derivative
of the constructed Lyapunov function compared to the ground truth,
with rates depending on a dense cover of the state space.

Our work circumvents this two-step identification procedure by directly analyzing the generalization error of a Lyapunov function learned by enforcing derivative conditions along the training data. 
Many other authors have proposed similar approaches. \citet{kenanian19switchedlinear} show how to estimate the joint spectral radius of a switched linear system 
by learning a common quadratic Lyapunov function directly
from data. Their analysis heavily exploits properties of linear systems.
\citet{chen20PWALyap} study how to learn a quadratic Lyapunov
function for piecewise affine systems in feedback with
a neural network controller.  
\citet{richards18lyapunov} use a sum-of-squares neural network
representation to learn the largest region of attraction of a nonlinear
system. \citet{manek19learningstable} jointly
train a neural network model and Lyapunov function.
Neither \citet{richards18lyapunov}
nor \citet{manek19learningstable} provide formal guarantees that the learned Lyapunov function will generalize to new trajectories.
Both \citet{chang19neuralcontrol} and \citet{ravanbakhsh19lyapunov} propose to use ideas from formal verification to falsify the validity of a learned candidate Lyapunov function. A significant limitation is the requirement of access to the true dynamics.

In many of these works, the Lie derivative constraint that defines a Lyapunov function is relaxed to a soft constraint, so that first-order gradient methods can be used for optimization. We note that our generalization analysis can be modified to handle soft constraints in a straightforward manner.

\paragraph{Learning barrier functions from data.}

Barrier functions are relaxations of Lyapunov functions
that demonstrate invariance of a subset of the state space.
Recently, many authors have proposed to use and learn barrier functions
from data for safety-critical applications. 
\citet{taylor19cbf} assume a control barrier function (CBF) is valid
for both a nominal and unknown system model, and use the CBF to guide safe learning of the unknown system dynamics.  More closely related to our work,  
\citet{robey20cbf} learn a CBF for a known nonlinear dynamical system from expert demonstrations, and use Lipschitz
arguments to extend the validity of the CBF beyond the training data.
\citet{jin20neural} propose to jointly learn a Lyapunov, barrier, and
a policy function from data. They also prove validity of the learned
certificates using Lipschitz arguments.

\paragraph{Learning contracting vector fields and contraction metrics from data.}

The literature on learning contraction metrics~\cite{lohmiller98contraction}
from data is more sparse.
In an imitation learning context, \citet{sindhwani18vectorfields}
propose to learn a vector field from demonstrations
that satisfies contraction in the identity metric. The authors parameterize the vector field as a vector-valued reproducing kernel. 
\citet{elkhadir19teleop} also learn a vector field from demonstrations by using sum-of-squares to enforce contraction.
They argue by smoothness considerations that the
learned vector field actually contracts in a tube around the
demonstration trajectories.
We note that in both these works, the metric is held fixed and is assumed to be known.
\citet{singh19learning} jointly learn a model and a control contraction metric~\citep{ccm_orig, sumeet_icra} from data, and show empirically that using contraction as a regularizer in model
learning can lead to better sample efficiency when learning to control.
We leave studying the generalization properties of jointly learning 
an explicit model and a contraction metric to future work.

\paragraph{Statistical bounds in optimization and control.}
Our generalization bounds are similar in spirit to
those provided for
random convex programs (RCPs)~\cite{campi08rcp,calafiore10rcp}. Random convex programming is concerned with approximating solutions to convex programs with an infinite number of constraints. Such infinitely-constrained problems
are approximated by drawing $n$ i.i.d.\ samples
from a distribution $\nu$ over the constraint parameters and enforcing constraints on samples. 
One can then show that the probability
that a new sample from $\nu$ violates the constraint
for the approximate solution scales as $O(d/n)$ where $d$
is the number of decision variables. 
Our results can be viewed as generalizing these bounds beyond convex
programs, though our constants are less sharp.
In our experiments, we use the RCP bound for numerically computing generalization bounds when the problem is convex.

\section{Learning Certificates Framework}
\label{sec:local}

\subsection{Problem Statement}

We assume the underlying dynamical system is given by a continuous-time autonomous system of the form
$\dot{x} = f(x)$, where $f$ is continuous, unknown, and
the state $x \in \R^p$ is fully observed.
Let $X \subseteq \R^p$ be a compact set
and let $T \subseteq \R_+$ be the maximal interval
starting at zero for which a unique solution $\varphi_t(\xi)$
exists for all initial conditions $\xi \in X$ and $t \in T$.
We assume access to sample trajectories generated from random initial conditions.
Specifically, let
$\calD$ denote a distribution over $X$,
and let $\xi_1, ..., \xi_n$ be $n$ i.i.d.\ samples from $\calD$.
We are given access to the $n$ trajectories $\{ \varphi_t(\xi_i) \}_{i=1, ..., n, t \in T}$ .
For simplicity of exposition, we
assume that we can exactly differentiate the trajectories $\varphi_t(\xi)$ with respect to
time. In our experiments, we compute $\dot{x}$ numerically.

Let $\calV$ be a space of continuously differentiable
functions $V : \R^p \mapsto \R^q$.
Let $h : \R^p \times \R^p \times \R^q \times \R^{q \times p} \mapsto \R$
be a fixed and known continuous function. Our goal is to choose a $V \in \calV$
such that
\begin{align}
    h\left(\varphi_t(\xi), \dot{\varphi}_t(\xi), V(\varphi_t(\xi)),  \frac{\p V}{\p x}(\varphi_t(\xi))\right) \leq 0 \:\: \forall \xi \in X, \: t \in T \:. \label{eq:h_decrease}
\end{align}
As we describe below, through suitable choices of the function $h$, equation \eqref{eq:h_decrease} can be used to enforce various defining conditions for certificates such as Lyapunov functions and contraction metrics.
We note that our framework can be modified to allow for more
derivatives of $V$, including higher order
derivatives and also time derivatives for handling time-varying dynamics.

We study the following optimization problem
for searching for a solution to \eqref{eq:h_decrease}:
\begin{align}
    &\operatorname{find}_{V \in \calV}~\mathrm{s.t.} ~~h\left(\varphi_t(\xi_i), \dot{\varphi}_t(\xi_i), V(\varphi_t(\xi_i)), \frac{\p V}{\p x} (\varphi_t(\xi_i))\right) \leq -\gamma \:, \:\: i=1, ..., n,  \: t \in T \:. \label{eq:V_opt}
\end{align}
Here, $\gamma > 0$ is a positive margin value which will allow us to generalize
the behavior of $V$ on $h$ outside of the sampled data.
In practice, we often solve \eqref{eq:V_opt}
with a cost term on $V$ such as its norm.
Let $\hat{V}_n \in \calV$ denote a solution to \eqref{eq:V_opt},
assuming one exists.
We quantify the \emph{generalization} of $\hat{V}_n$ by the probability of
violation over trajectories starting from $\xi \sim \calD$:
\begin{align}
    \mathsf{err}(\hat{V}_n) := \Pr_{\xi \sim \calD}\left\{ \max_{t \in T} h\left(\varphi_t(\xi), \dot{\varphi}_t(\xi), V(\varphi_t(\xi)), \frac{\p V}{\p x} (\varphi_t(\xi))\right) > 0 \right\} \:. \label{eq:err}
\end{align}
In Section~\ref{sec:local:bounds}, we prove $O( k \cdot \mathrm{polylog}(n) /n)$ decay rates for $\mathsf{err}(\hat{V}_n)$ for various parametric and non-parametric
function classes $\calV$, where $k$ denotes the effective number of parameters of the class
$\calV$.
In Section~\ref{sec:global}, we show how $\mathsf{err}(\hat{V}_n) \leq \varepsilon$ bounds translate into
global, non-probabilistic results. Before we state our main results, we instantiate our framework for two key certificate functions.

\subsubsection{Lyapunov stability analysis}

Let zero be an equilibrium point for $\dot{x} = f(x)$.
Let $D \subseteq \R^p$ be an open set containing the origin.
A Lyapunov function $V : \R^p \mapsto \R$ is a locally positive definite function
such that $V(0) = 0$, $V(x) > 0$ for $x \in D \setminus \{0\}$,
and $\ip{\nabla V(x)}{f(x)} < 0$ for $x \in D \setminus \{0\}$.
It is well known (see e.g.~\citet{slot_li_book}) that the existence of such a Lyapunov function $V$ proves the
local asymptotic stability of the origin.
Our framework can be used to learn a Lyapunov function from stable trajectories
by taking
$h(x, \dot{x}, V(x), \nabla V(x)) = \ip{\nabla V(x)}{\dot{x}} + \alpha(V(x))$.
Here, $\alpha : \R \mapsto \R$ is a class $\calK$ function, i.e. a continuous, strictly increasing function
satisfying $\alpha(0) = 0$.

\subsubsection{Contraction metrics}
A system is said to be contracting in a region $D$ with
rate $\alpha$ if there exists a
uniformly positive definite Riemannian metric $M(x)$ such that
$\frac{\p f}{\p x}(x)^\T M(x) + M(x) \frac{\p f}{\p x}(x) + \dot{M}(x) \preceq - 2 \alpha M(x)$
for $x \in D$~\cite{lohmiller98contraction}.
Given knowledge of $\frac{\p f}{\p x}$,
this condition fits into our framework by taking
$h\left(x, \dot{x}, M(x), \frac{\p f}{\p x}(x)\right) = \lambda_{\max}\left(\frac{\p f}{\p x}(x)^\T M(x) + M(x) \frac{\p f}{\p x}(x) + \dot{M}(x) + 2 \alpha M(x)\right)$.

Without knowledge of $\frac{\p f}{\p x}$, it is not immediately clear how to evaluate $h$ from trajectories.
Instead, we leverage results from \citet{forni13differentiallyapunov}, who reformulate contraction in terms of Lyapunov theory.
Consider a
candidate differential Lyapunov function $V(x, \delta x) = \delta x^\T M(x) \delta x$ for the \textit{prolongated system}
$\cvectwo{\dot{x}}{\delta\dot{x}} = \cvectwo{f(x)}{\frac{\p f}{\p x}(x) \delta x}$
defined on the tangent bundle $\calT D = \cup_{x\in D}\{x\}\times\calT_{x}D \simeq D\times \mathbb{R}^{p}$.
The contraction condition is equivalent to:
\begin{align}
    \ip{\nabla_x V(x, \delta x)}{f(x)} + \ip{\nabla_{\delta x} V(x, \delta x)}{ \frac{\p f}{\p x}(x) \delta x} \leq - \alpha V(x, \delta x) \:\:\forall x \in D \:, \delta x \in \R^p \:. \label{eq:diff_lyap_condition}
\end{align}

We can enforce \eqref{eq:diff_lyap_condition} by directly sampling trajectories on $\calT D$, by exploiting that the \textit{variational dynamics} obeyed by $\delta x(t)$ is identical to the local linearization of $f$ around $x(t)$.
Specifically, we sample pairs of initial conditions $x^{(1)}_0$ and $x^{(2)}_0 = x^{(1)}_0 + \delta x_0$ for some small perturbation $\delta x_0$. Numerical differentiation of $x^{(1)}(t)$ and $\delta x(t) = x^{(1)}(t) - x^{(2)}(t)$ provides access to $\dot{x}^{(1)} = f(x^{(1)})$ and $\delta\dot{x}(t) = \frac{\p f}{\p x}\delta x(t)$, which then
allows us to evaluate \eqref{eq:diff_lyap_condition} along system trajectories.

\section{Generalization Error Results}
\label{sec:local:bounds}

We first define the notion of stability we will assume.
Recall that $X$ is the set containing sample initial conditions,
and $T$ is the interval over which our trajectories evolve.
\begin{assume}[Stability in the sense of Lyapunov]
\label{assume:stability}
We assume there exists a compact set $S \subseteq \R^p$
such that $\varphi_t(\xi) \in S$ for all $\xi \in X$, $t \in T$. Let the constant $B_S := \sup_{x \in S} \norm{x}$.
\end{assume}
Note that contraction implies Assumption~\ref{assume:stability}, so that contracting systems are also covered in this setting. Next, we make some regularity assumptions
on the function class $\calV$.
\begin{assume}[Uniform boundedness of $\calV$]
\label{assume:regularity}
We assume there exist finite constants
$B_V$, $B_{\nabla V}$ such that
$\sup_{V \in \calV} \sup_{x \in S} \norm{V(x)} \leq B_V$
and $\sup_{V \in \calV} \sup_{x \in S} \bignorm{\frac{\p V}{\p x}(x)} \leq B_{\nabla V}$.
\end{assume}
Given Assumptions~\ref{assume:stability}--\ref{assume:regularity}, we define
$B_h$ (resp. $L_h$) to be an upper bound on
$\abs{h(x, f(x), V, \frac{\p V}{\p x})}$
(resp. the
Lipschitz constant of $(V, \frac{\p V}{\p X}) \mapsto h(x, f(x), V, \frac{\p V}{\p X})$)
over $x\in S$, $\abs{V} \leq B_V$ and
$\norm{\frac{\p V}{\p x}} \leq B_{\nabla V}$.
Note that both $B_h$ and $L_h$ are guaranteed to be finite by our assumptions.

We now introduce, with slight abuse of notation, the shorthand
$h(\xi, V)$ for $\xi \in X$, $V \in \calV$
as $h(\xi, V) := \max_{t \in T} h\left(\varphi_t(\xi), \dot{\varphi}_t(\xi), V(\varphi_t(\xi)), \frac{\p V}{\p x}(\varphi_t(\xi))\right)$.
The key insight to our analysis is the simple observation
that any feasible solution $\hat{V}_n$ to \eqref{eq:V_opt}
achieves zero empirical risk on the loss
$\hat{R}_n(V) := \frac{1}{n} \sum_{i=1}^{n} \ind_{\{ h(\xi_i, V) > - \gamma  \}}$. In particular, since $\Pr_{\xi \sim \calD}( h(\xi, \hat{V}_n) > 0 ) = \E_{\xi \sim \calD}\ind_{\{ h(\xi, \hat{V}_n) > 0  \}}$, we can use results from statistical learning
theory which give us fast rates for zero empirical risk minimizers with margin $\gamma$.
The following result is adapted from Theorem 5 of~\citet{srebro10smoothness}.
\begin{lem}
\label{lemma:fast_rate}
Fix a $\delta \in (0, 1)$.
Assume that Assumption~\ref{assume:stability}
and Assumption~\ref{assume:regularity} hold.
Suppose that the optimization problem
\eqref{eq:V_opt} is feasible and let $\hat{V}_n$ denote a solution.
The following statement holds
with probability at least $1-\delta$
over the randomness of $\xi_1, ..., \xi_n$ drawn i.i.d. from $\calD$:
\begin{align*}
    \Pr_{\xi \sim \calD}( h(\xi, \hat{V}_n) > 0 ) \leq K  \left( \frac{\log^3{n}}{\gamma^2} \calR_n^2(\calV)
    + \frac{2 \log(\log(4 B_h /\gamma)/\delta)}{n}
    \right) \:.
\end{align*}
Here, $\calR_n(\calV) := \sup_{\xi_1, ..., \xi_n \in X} \E_{\varepsilon \sim \mathrm{Unif}(\{\pm 1\}^n)}  \sup_{V \in \calV} \frac{1}{n} \bigabs{ \sum_{i=1}^{n} \varepsilon_i h(\xi_i, V) }$ is the Rademacher complexity of the function class $\calV$
and $K$ is a universal constant.
\end{lem}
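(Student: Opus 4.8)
The plan is to reduce the bound to a uniform \emph{optimistic-rate} statement for a bounded nonnegative surrogate loss that happens to vanish empirically at $\hat V_n$, and then read off the conclusion from the fast-rate theorem of \citet{srebro10smoothness}.

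\textbf{Step 1 (surrogate and realizability).} First I would fix a nondecreasing, once-continuously-differentiable surrogate $\phi_\gamma:\R\to[0,1]$ with $\phi_\gamma(u)=1$ for $u\ge 0$ and $\phi_\gamma(u)=0$ for $u\le-\gamma$, chosen so that it is $H$-smooth with $H=O(1/\gamma^2)$ (a mollified ramp; this also makes $\phi_\gamma$ Lipschitz with constant $O(1/\gamma)$). By construction $\ind_{\{u>0\}}\le\phi_\gamma(u)\le\ind_{\{u>-\gamma\}}$ for all $u$. Write $\Phi_V(\xi):=\phi_\gamma(h(\xi,V))$; Assumptions~\ref{assume:stability}--\ref{assume:regularity} guarantee that $\xi\mapsto h(\xi,V)$ is a well-defined bounded (measurable) map with range in $[-B_h,B_h]$, so each $\Phi_V$ is a $[0,1]$-valued function of $\xi$. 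The lower sandwich gives $\Pr_{\xi\sim\calD}(h(\xi,\hat V_n)>0)=\E_\xi\ind_{\{h(\xi,\hat V_n)>0\}}\le\E_\xi\Phi_{\hat V_n}(\xi)$, while feasibility of \eqref{eq:V_opt} forces $h(\xi_i,\hat V_n)=\max_{t\in T}h(\varphi_t(\xi_i),\dots)\le-\gamma$ for every $i$, so the upper sandwich (together with $\phi_\gamma\ge 0$) gives $\tfrac1n\sum_{i=1}^n\Phi_{\hat V_n}(\xi_i)=0$. Hence it suffices to bound the \emph{population} risk of a member of $\{\Phi_V:V\in\calV\}$ whose \emph{empirical} risk is zero.

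\textbf{Step 2 (apply the optimistic-rate theorem).} I would then instantiate Theorem~5 of \citet{srebro10smoothness} with the abstract ``hypothesis class'' $\mathcal G:=\{\xi\mapsto h(\xi,V):V\in\calV\}$ of real-valued functions on $X$, whose worst-case Rademacher complexity is precisely $\calR_n(\calV)$, and the ``loss'' $\phi_\gamma$, which is nonnegative, bounded by $1$, and $O(1/\gamma^2)$-smooth. That theorem yields, with probability at least $1-\delta$, for every $V\in\calV$,
\begin{align*}
\E_\xi\Phi_V \;\le\; \hat\E_n\Phi_V \;+\; K\Bigl(\sqrt{\hat\E_n\Phi_V}\cdot s_n \;+\; s_n^2\Bigr), \qquad s_n^2 \;\asymp\; \frac{\log^3 n}{\gamma^2}\,\calR_n^2(\calV) \;+\; \frac{\log(1/\delta)}{n}\,,
\end{align*}
where the $1/\gamma^2$ is the smoothness constant $H$ and the $\log^3 n$ and the precise constants are those carried by the source theorem. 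Substituting the zero empirical risk $\hat\E_n\Phi_{\hat V_n}=0$ from Step~1 kills the cross term and leaves $\E_\xi\Phi_{\hat V_n}\le K\bigl(\tfrac{\log^3 n}{\gamma^2}\calR_n^2(\calV)+\tfrac{\log(1/\delta)}{n}\bigr)$, which combined with the first inequality of Step~1 is the claimed bound, up to the replacement of $\log(1/\delta)$ by $\log(\log(4B_h/\gamma)/\delta)$.

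\textbf{Step 3 (the $\log\log$ factor and the main obstacle).} That last replacement is how Theorem~5 of \citet{srebro10smoothness} is stated once one is careful that the confidence parameter is set uniformly across the $O(\log(B_h/\gamma))$ dyadic scales separating the range $[-B_h,B_h]$ of $h$ from the margin $\gamma$ (equivalently, a union bound over the $O(\log(B_h/\gamma))$ candidate smoothness levels of $\phi_\gamma$ with failure probability $\delta/O(\log(B_h/\gamma))$ at each), so that each instance carries $\log(\log(4B_h/\gamma)/\delta)$ in place of $\log(1/\delta)$; ``adapting'' the cited theorem amounts to verifying that $\phi_\gamma$ and $\mathcal G$ meet its hypotheses and bookkeeping these scales. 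I expect the delicate point to be Step~2: obtaining the $1/\gamma^2$ (rather than $1/\gamma$) dependence requires a surrogate that is simultaneously sandwiched between $\ind_{\{u>0\}}$ and $\ind_{\{u>-\gamma\}}$, genuinely $C^1$, and smooth with constant $\Theta(1/\gamma^2)$ --- the obvious piecewise-quadratic ramp fails these jointly (its derivative jumps at $0$), so one needs a careful mollification together with the self-bounding inequality $|\phi_\gamma'(u)|^2\lesssim\phi_\gamma(u)/\gamma^2$ that drives the optimistic rate. A secondary nuisance is that \citet{srebro10smoothness} phrase their theorem for losses $\ell(\langle w,x\rangle,y)$, so one must restate their argument for a generic family of $[0,1]$-valued per-sample losses indexed by $V\in\calV$; this is routine given the boundedness and measurability noted in Step~1, but the exact log powers and constants must be propagated faithfully.
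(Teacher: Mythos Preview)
The paper does not supply its own proof of this lemma; it simply states the result as ``adapted from Theorem~5 of \citet{srebro10smoothness}'' and uses it as a black box in the subsequent theorems. Your proposal is exactly the right way to carry out that adaptation: identify the margin-classification setup with prediction $-h(\xi,V)$, constant label $+1$, and range bound $B_h$, then run the smooth-surrogate plus optimistic-rate argument that \citet{srebro10smoothness} use to derive their Theorem~5 from their Theorem~1. Your Step~3 concerns about constructing a genuinely $C^1$ surrogate with $\Theta(1/\gamma^2)$ smoothness and the self-bounding property are legitimate but already resolved in the cited paper (they exhibit an explicit smooth ramp), so you need not redo that construction. The $\log\log(4B_h/\gamma)$ factor is inherited verbatim from the union bound over dyadic margin levels in their Theorem~5; since $\gamma$ is fixed in the present lemma, one could in principle obtain a plain $\log(1/\delta)$ by invoking their Theorem~1 directly at a single scale, so the stated form is slightly loose but correct as an upper bound.
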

Lemma~\ref{lemma:fast_rate} reduces bounding
$\mathsf{err}(\hat{V}_n)$ to bounding the
Rademacher complexity $\calR_n(\calV)$.
Define the norm $\norm{\cdot}_{\calV}$ on
$\calV$ as $\norm{V}_{\calV} := \sup_{x \in S} \bignorm{ \cvectwo{ V(x)}{\frac{\p V}{\p x}(x)}}$.
By Assumptions \ref{assume:stability}--\ref{assume:regularity}
and Dudley's entropy inequality~\cite{wainwright19book}, we can bound $\calR_n(\calV)$ by the
estimate
$ \calR_n(\calV) \leq \frac{24 L_h}{\sqrt{n}} \int_0^\infty \sqrt{\log N(\varepsilon; \calV, \norm{\cdot}_{\calV})} \: d\varepsilon $.
Here, $N(\varepsilon; \calV, \norm{\cdot}_{\calV})$
is the covering number of $\calV$ at resolution $\varepsilon$
in the $\norm{\cdot}_{\calV}$-norm.
We use this strategy to obtain generalization bounds for
\eqref{eq:V_opt} over various representations.
For ease of exposition we assume that $q = 1$,
i.e. $V : \R^p \mapsto \R$. The extension to $q > 1$
is straightforward.

\subsection{Lipschitz parametric function classes}

We consider the following parametric representation:
\begin{align}
    \calV = \{ V_\theta(\cdot) = g(x, \theta) : \theta \in \R^k \:, \norm{\theta} \leq B_\theta \} \:. \label{eq:lipschitz_parametric}
\end{align}
We assume $g : \R^p \times \R^k \mapsto \R$ is
twice continuously differentiable,
which implies that $\calV$ satisfies
Assumption~\ref{assume:regularity}.
The parameterization \eqref{eq:lipschitz_parametric} is very
general and encompasses function classes such as
neural networks with differentiable activation functions.
Furthermore,
Dudley's estimate combined with a volume
comparison argument yields
$\calR_n(\calV)^2 \leq O(k/n)$,
which implies the following result.
\begin{thm}
\label{thm:gen_bound_linear}
Under Assumption~\ref{assume:stability},
if problem \eqref{eq:V_opt} over the
parametric function class \eqref{eq:lipschitz_parametric} is feasible, then
any solution $\hat{V}_n$ satisfies with probability
at least $1-\delta$ over $\xi_1, ..., \xi_n$:
\begin{align}
    \mathsf{err}(\hat{V}_n) \leq O(1) \left( B_\theta^2(L_g + L_{\nabla g})^2 L_h^2 \frac{k \log^3{n}}{\gamma^2 n} +  \frac{\log(\log(B_h/\gamma)/\delta)}{n} \right) \:.
\end{align}
Here, $L_g := \sup_{x \in S, \norm{\theta} \leq B_\theta} \norm{ \nabla_\theta g(x, \theta)}$
and $L_{\nabla g} := \sup_{x \in S, \norm{\theta} \leq B_\theta} \norm{ \frac{\p^2 g}{\p \theta \p x}(x, \theta)}$.
\end{thm}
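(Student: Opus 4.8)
The plan is to combine Lemma~\ref{lemma:fast_rate} with an explicit bound on the Rademacher complexity $\calR_n(\calV)$ for the parametric class \eqref{eq:lipschitz_parametric}, so that the main work is estimating the covering number $N(\varepsilon; \calV, \norm{\cdot}_{\calV})$ and feeding it through Dudley's entropy integral. First I would observe that the map $\theta \mapsto V_\theta$ is Lipschitz from $(\R^k, \norm{\cdot}_2)$ into $(\calV, \norm{\cdot}_{\calV})$: for any $\theta, \theta'$ with $\norm{\theta}, \norm{\theta'} \le B_\theta$, the mean value theorem gives $\sup_{x \in S} \abs{g(x,\theta) - g(x,\theta')} \le L_g \norm{\theta - \theta'}$ and $\sup_{x \in S} \norm{\frac{\p g}{\p x}(x,\theta) - \frac{\p g}{\p x}(x,\theta')} \le L_{\nabla g} \norm{\theta - \theta'}$, using the definitions of $L_g$ and $L_{\nabla g}$ and the twice-continuous-differentiability of $g$; hence $\norm{V_\theta - V_{\theta'}}_{\calV} \le (L_g + L_{\nabla g}) \norm{\theta - \theta'}$ (up to a harmless constant from combining the two coordinates of the stacked vector). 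Consequently a $\delta$-net of the Euclidean ball $\{\norm{\theta} \le B_\theta\} \subseteq \R^k$ of cardinality at most $(1 + 2 B_\theta (L_g + L_{\nabla g})/\varepsilon)^k$ (standard volume-comparison bound) induces an $\varepsilon$-net of $\calV$ in $\norm{\cdot}_{\calV}$, so $\log N(\varepsilon; \calV, \norm{\cdot}_{\calV}) \le k \log(1 + 2 B_\theta (L_g + L_{\nabla g})/\varepsilon)$.

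Next I would plug this into Dudley's inequality as stated in the excerpt, $\calR_n(\calV) \le \frac{24 L_h}{\sqrt n} \int_0^\infty \sqrt{\log N(\varepsilon; \calV, \norm{\cdot}_{\calV})}\, d\varepsilon$. The integrand vanishes for $\varepsilon$ larger than the diameter of $\calV$ in $\norm{\cdot}_{\calV}$, which is at most $2 B_\theta (L_g + L_{\nabla g})$, so the integral is effectively over $[0, 2 B_\theta(L_g + L_{\nabla g})]$. A change of variables $u = \varepsilon / (2 B_\theta (L_g + L_{\nabla g}))$ turns the integral into $2 B_\theta (L_g + L_{\nabla g}) \sqrt{k} \int_0^1 \sqrt{\log(1 + 1/u)}\, du$, and the last integral is an absolute constant. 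This yields $\calR_n(\calV) \le O(1) \cdot L_h B_\theta (L_g + L_{\nabla g}) \sqrt{k/n}$, hence $\calR_n^2(\calV) \le O(1) \cdot L_h^2 B_\theta^2 (L_g + L_{\nabla g})^2 k/n$.

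Finally I would substitute this Rademacher bound into Lemma~\ref{lemma:fast_rate}: the first term becomes $K \frac{\log^3 n}{\gamma^2} \cdot O(1) L_h^2 B_\theta^2 (L_g + L_{\nabla g})^2 \frac{k}{n}$ and the second term is $K \frac{2 \log(\log(4 B_h/\gamma)/\delta)}{n}$, and absorbing $K$ and numeric factors into $O(1)$ gives exactly the claimed bound, using that $\mathsf{err}(\hat V_n) = \Pr_{\xi \sim \calD}(h(\xi, \hat V_n) > 0)$ by definition \eqref{eq:err}. I should also briefly note at the outset that $\calV$ satisfies Assumption~\ref{assume:regularity} — $B_V$ and $B_{\nabla V}$ are finite because $g$ and $\frac{\p g}{\p x}$ are continuous on the compact set $S \times \{\norm{\theta} \le B_\theta\}$ — so Lemma~\ref{lemma:fast_rate} is applicable.

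The main obstacle is purely in getting the Lipschitz-in-$\theta$ estimate clean: one must be careful that $L_{\nabla g}$ as defined (a bound on the mixed partial $\frac{\p^2 g}{\p\theta\p x}$) is exactly what controls $\sup_x \norm{\frac{\p g}{\p x}(x,\theta) - \frac{\p g}{\p x}(x,\theta')}$ via the mean value inequality applied coordinatewise in $x$, and that stacking $V(x)$ and $\frac{\p V}{\p x}(x)$ into the $\norm{\cdot}_{\calV}$ norm only costs a constant (e.g.\ $\sqrt{2}$). Everything downstream — the volume bound on covering numbers of a Euclidean ball, the convergence of Dudley's integral near $\varepsilon = 0$, and the final substitution — is routine. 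There is no issue with the tail of the integral since $\calV$ has finite $\norm{\cdot}_{\calV}$-diameter.
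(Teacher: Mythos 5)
Your proof matches the paper's argument essentially step for step: establish the Lipschitz bound $\norm{V_{\theta_1} - V_{\theta_2}}_{\calV} \le (L_g + L_{\nabla g})\norm{\theta_1 - \theta_2}$, use the volume-comparison covering-number estimate for a Euclidean ball in $\R^k$, integrate via Dudley's inequality to get $\calR_n(\calV) \le O(1) B_\theta (L_g + L_{\nabla g}) L_h \sqrt{k/n}$, and substitute into Lemma~\ref{lemma:fast_rate}. The additional remarks about $\calV$ satisfying Assumption~\ref{assume:regularity} and the mean-value-theorem justification of the Lipschitz constants are fine and consistent with the paper's (more terse) presentation.
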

Often times \eqref{eq:lipschitz_parametric}
is more structured. For instance, in sum-of-squares (SOS) optimization, we have:
\begin{align}
    \calV = \left\{ V_Q(x) = m(x)^\T Q m(x) : Q \in \R^{d \times d}, Q = Q^\T \succeq 0, \norm{Q}_F \leq B_Q \right\} \:, \label{eq:sos_parametric}
\end{align}
where $m : \R^p \mapsto \R^d$ is a monomial feature map.
Note that $\eqref{eq:sos_parametric}$
is an instance of $\eqref{eq:lipschitz_parametric}$
with $k = d(d+1)/2$.
Hence Theorem~\ref{thm:gen_bound_linear}
implies a bound of the form $\mathsf{err}(\hat{V}_n) \leq \Otilde( d^2 /n )$.
However, we can actually use the matrix structure
of \eqref{eq:sos_parametric} to sharpen the
bound to $\mathsf{err}(\hat{V}_n) \leq \Otilde( d/n )$
by a more careful estimate of $\calR_n(\calV)$
using the dual Sudakov inequality~\cite{vershynin09gfa}.
\begin{thm}
\label{thm:gen_bound_sos}
Under Assumption~\ref{assume:stability},
if problem \eqref{eq:V_opt} over the
parametric linear function class \eqref{eq:sos_parametric} is feasible, then
any solution $\hat{V}_n$ satisfies with probability
at least $1-\delta$ over $\xi_1, ..., \xi_n$:
\begin{align}
    \mathsf{err}(\hat{V}_n) \leq O(1) \left( B_Q^2( B_m^2 + B_{Dm} B_m)^2 L_h^2 \frac{d \log^2{d} \log^3{n}}{\gamma^2 n} +  \frac{\log(\log(B_h/\gamma)/\delta)}{n} \right) \:.
\end{align}
Here, $B_m := \sup_{x \in S} \norm{m(x)}$
and $B_{Dm} := \sup_{x \in S} \norm{\frac{\p m}{\p x}(x)}$.
\end{thm}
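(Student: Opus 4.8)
The plan is to reuse the two-step template behind Theorem~\ref{thm:gen_bound_linear}: first invoke Lemma~\ref{lemma:fast_rate} to reduce $\mathsf{err}(\hat{V}_n)$ to controlling the Rademacher complexity $\calR_n(\calV)$, and then bound $\calR_n(\calV)$ via the Dudley estimate $\calR_n(\calV)\le\frac{24L_h}{\sqrt n}\int_0^\infty\sqrt{\log N(\varepsilon;\calV,\norm{\cdot}_{\calV})}\,d\varepsilon$ already recorded in the text. The one new idea is that I will \emph{not} treat \eqref{eq:sos_parametric} as a generic $k=d(d+1)/2$-dimensional special case of \eqref{eq:lipschitz_parametric} (which would just give back the $d^2/n$ rate of Theorem~\ref{thm:gen_bound_linear}); instead I will bound the metric entropy of $\calV$ using the \emph{operator} norm on the coefficient matrices rather than the Frobenius norm, and this is what turns $d^2$ into $d\log^2 d$.

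The first step is to reduce the covering number of $\calV$ in $\norm{\cdot}_{\calV}$ to an operator-norm covering number of a matrix ball. Using symmetry of $Q$, $\frac{\p V_Q}{\p x}(x)=2\,m(x)^\T Q\,\frac{\p m}{\p x}(x)$, so the elementary bounds $\abs{y^\T A y}\le\norm{y}^2\norm{A}_{\mathrm{op}}$ and $\norm{u^\T A W}\le\norm{u}\,\norm{A}_{\mathrm{op}}\,\norm{W}_{\mathrm{op}}$ yield $\norm{V_Q-V_{Q'}}_{\calV}\le(B_m^2+2B_mB_{Dm})\norm{Q-Q'}_{\mathrm{op}}$ for all symmetric $Q,Q'$. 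Writing $L:=B_m^2+2B_mB_{Dm}$ and $\calB:=\{Q=Q^\T:\norm{Q}_F\le B_Q\}$ (I drop the $Q\succeq0$ constraint, which only enlarges the set), any $(\varepsilon/L)$-net of $\calB$ in operator norm induces an $\varepsilon$-net of $\calV$ in $\norm{\cdot}_{\calV}$, so $N(\varepsilon;\calV,\norm{\cdot}_{\calV})\le N(\varepsilon/L;\calB,\norm{\cdot}_{\mathrm{op}})$. Since $N(u;\calB,\norm{\cdot}_{\mathrm{op}})=1$ for $u\ge B_Q$, a change of variables bounds the Dudley integral by $\frac{24L_hL}{\sqrt n}\int_0^{B_Q}\sqrt{\log N(u;\calB,\norm{\cdot}_{\mathrm{op}})}\,du$.

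The core estimate is then a two-scale bound on $\log N(u;\calB,\norm{\cdot}_{\mathrm{op}})$, where $\calB$ is the Euclidean ball of radius $B_Q$ in $\Sym^d\cong\R^{d(d+1)/2}$ and we cover it by operator-norm balls. At small scales I use the crude volume comparison $\log N(u;\calB,\norm{\cdot}_{\mathrm{op}})\le\log N(u;\calB,\norm{\cdot}_F)\le\frac{d(d+1)}{2}\log(3B_Q/u)$, hence $\sqrt{\log N(u;\calB,\norm{\cdot}_{\mathrm{op}})}\le d\,\sqrt{\log(3B_Q/u)}$. At larger scales I apply the dual Sudakov inequality~\cite{vershynin09gfa}: identifying a standard Gaussian on $(\Sym^d,\langle\cdot,\cdot\rangle_F)$ with a scaled GOE matrix $G$ — for which $\E\norm{G}_{\mathrm{op}}\le C_0\sqrt d$, e.g.\ by a $\tfrac14$-net of the sphere together with a Gaussian maximal inequality since $v^\T Gv$ is sub-Gaussian with variance $\le1$ for unit $v$ — dual Sudakov gives $\log N(u;\calB,\norm{\cdot}_{\mathrm{op}})\le C(\E\norm{G}_{\mathrm{op}}\,B_Q/u)^2\le C'dB_Q^2/u^2$. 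Splitting the Dudley integral at $u_0:=B_Q/\sqrt d$ (volume bound below $u_0$, dual Sudakov above it) gives $\int_0^{B_Q}\sqrt{\log N(u;\calB,\norm{\cdot}_{\mathrm{op}})}\,du\lesssim d\,u_0\sqrt{\log(B_Q/u_0)}+\sqrt d\,B_Q\log(B_Q/u_0)\lesssim B_Q\sqrt d\,\log d$. Consequently $\calR_n(\calV)\lesssim L_h(B_m^2+B_mB_{Dm})B_Q\sqrt d\,\log d/\sqrt n$, and substituting $\calR_n^2(\calV)$ into Lemma~\ref{lemma:fast_rate} and absorbing constants gives the claimed bound.

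I expect the two-scale step to be the main obstacle. The dual Sudakov bound $\sqrt{\log N}\lesssim\sqrt d\,B_Q/u$ is not integrable at $u=0$, so it must be pieced together with the (near $u_0$, wasteful) volume bound $d\,\sqrt{\log(B_Q/u)}$ at a carefully chosen truncation level; and the improvement over Theorem~\ref{thm:gen_bound_linear} genuinely hinges on the matrix fact $\E\norm{G}_{\mathrm{op}}=O(\sqrt d)$ rather than the trivial $\E\norm{G}_F=\Theta(d)$. Keeping the operator norm versus the Frobenius norm straight on each side of the Lipschitz and covering estimates is the other place requiring care; the remainder parallels the proof of Theorem~\ref{thm:gen_bound_linear}.
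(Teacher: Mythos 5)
Your proposal is correct and follows essentially the same route as the paper: the Lipschitz reduction to an operator-norm covering of the matrix ball, Dudley's integral, and the two-scale metric-entropy bound that combines volume comparison at small scales with dual Sudakov at larger scales (splitting near $u\sim B_Q/\sqrt d$), followed by Lemma~\ref{lemma:fast_rate}. The only cosmetic difference is that you restrict the Gaussian-width calculation to the symmetric subspace $\Sym^d$ (GOE matrix, $\E\norm{G}_{\mathrm{op}}=O(\sqrt d)$), whereas the paper applies dual Sudakov directly to the full ball $\mathbb{B}_2^{d\times d}(1)$; both give the same $\sqrt d$ and hence the same $\sqrt{d}\log d$ in the Dudley integral.
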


In general, using prior knowledge about the system
to add more structure and reduce the number of parameters
of the certificate function
(e.g. diagonal contraction metrics for positive systems) yields
better generalization bounds.

\subsection{Reproducing kernel Hilbert space function classes}

We now consider the following non-parametric function class:
\begin{align}
    \calV = \left\{ V_\alpha(\cdot) = \int_{\Theta} \alpha(\theta) \phi(\cdot; \theta) \: d\theta : \norm{V_\alpha}_\nu := \sup_{\theta \in \Theta} \bigabs{\frac{\alpha(\theta)}{\nu(\theta)}} \leq B_\alpha \right\} \:. \label{eq:mixture_rkhs}
\end{align}
Here, $\phi(\cdot; \theta)$ is a nonlinear function and $\nu$ is a probability distribution over $\Theta$.
This function class is a subset of the
reproducing kernel Hilbert space (RKHS)
defined by the kernel
$k(x, y) = \int_{\Theta} \phi(x; \theta) \phi(y; \theta) \nu(\theta) \: d\theta$, and
is dense in the RKHS as $B_\alpha \to \infty$~ \cite{rahimi08uniform}.
We further assume that $\phi(x; \theta)$ is of the
form $\phi(x; \theta) = \phi(\ip{x}{w} + b)$ with $\phi$ differentiable and
$\theta = (w, b)$.
RKHSs of this type often arise naturally.
For instance, Bochner's theorem~\cite{rahimi07randomfeatures} states that every translation invariant kernel
can be expressed in this form.
\begin{thm}
\label{thm:gen_bound_rkhs}
Suppose that $\abs{\phi} \leq 1$, $\phi$ is $L_\phi$-Lipschitz,
$\phi$ is differentiable, $\phi'$ is $L_{\phi'}$-Lipschitz,
and that $B_\theta := \sup_{\theta \in \Theta} \norm{\theta}$ is finite.
Under Assumption~\ref{assume:stability},
if problem \eqref{eq:V_opt} over the
non-parametric class \eqref{eq:mixture_rkhs} is feasible, then
any solution $\hat{V}_n$ satisfies with probability
at least $1-\delta$ over $\xi_1, ..., \xi_n$:
\begin{align*}
    \mathsf{err}(\hat{V}_n) \leq O(1) \left( B_\alpha^2 (1 + B_\theta L_\phi)^2  L_h^2  \frac{\kappa \log^3{n}}{\gamma^2 n} + \frac{\log(\log(1+B_h/\gamma)/\delta)}{n} + 1/n^2  \right) \:,
\end{align*}
where $\kappa := \frac{B_\alpha^2 L_h^2}{\gamma^2} \left( (1 + B_\theta L_\phi)^2 \log{n} + B_\theta^2 (B_S + 1)^2 (L_\phi + B_\theta L_{\phi'} )^2 p \right)$.
\end{thm}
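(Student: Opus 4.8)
The plan is to bound $\calR_n(\calV)$ for the mixture RKHS class \eqref{eq:mixture_rkhs} via Dudley's entropy integral and feed the result into Lemma~\ref{lemma:fast_rate}. First I would record what Assumption~\ref{assume:regularity} looks like for this class: since $\abs{\phi}\le 1$, we get $\abs{V_\alpha(x)} = \bigabs{\int \alpha(\theta)\phi(\ip{x}{w}+b)\,d\theta} \le \sup_\theta\abs{\alpha(\theta)/\nu(\theta)}\int\nu(\theta)\,d\theta \le B_\alpha$, so $B_V \le B_\alpha$. Differentiating under the integral, $\frac{\p V_\alpha}{\p x}(x) = \int\alpha(\theta)\phi'(\ip{x}{w}+b)w\,d\theta$, and since $\abs{\phi'}\le L_\phi$ and $\norm{w}\le B_\theta$ we get $B_{\nabla V}\le B_\alpha B_\theta L_\phi$. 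Combining, a valid choice is $\norm{V_\alpha}_\calV \le B_\alpha(1+B_\theta L_\phi)$, so $\calV$ sits inside a $\norm{\cdot}_\calV$-ball of that radius (up to constants); this explains the prefactor $B_\alpha^2(1+B_\theta L_\phi)^2 L_h^2$ in the bound.

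The crux is the covering number $N(\varepsilon;\calV,\norm{\cdot}_\calV)$. I would \emph{not} use the crude $\log N \sim k\log(1/\varepsilon)$ volume bound, because $\calV$ is infinite-dimensional; instead I would split at a scale $\varepsilon_0$. For $\varepsilon \ge \varepsilon_0$, use the random-features / Maurey-type empirical approximation: any $V_\alpha$ can be approximated in $\norm{\cdot}_\calV$ by an average of $m$ features $\frac{1}{m}\sum_{j=1}^m c_j\phi(\ip{\cdot}{w_j}+b_j)$ with $m \approx 1/\varepsilon^2$ (the bound on the weights coming from $B_\alpha$), and then one counts the number of such $m$-term combinations over a grid of $\Theta\subseteq\{\norm\theta\le B_\theta\}$ refined to accuracy driven by the Lipschitz constants $L_\phi, L_{\phi'}$ in the $x$-variable over $S$ (here the ambient dimension $p$ and $B_S$ enter, exactly matching the $p(B_S+1)^2(L_\phi+B_\theta L_{\phi'})^2$ term inside $\kappa$). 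This gives $\log N(\varepsilon;\calV,\norm{\cdot}_\calV) \lesssim \frac{1}{\varepsilon^2}\big[(1+B_\theta L_\phi)^2\log(1/\varepsilon) + B_\theta^2(B_S+1)^2(L_\phi+B_\theta L_{\phi'})^2 p\big]$ (times $B_\alpha^2$-type constants). For $\varepsilon < \varepsilon_0$, just use the trivial bound $\varepsilon_0 \sqrt{\log N(\varepsilon_0;\cdot)}$ to truncate the tail of Dudley's integral --- choosing $\varepsilon_0 \approx 1/n$ makes that contribution $O(1/n^2)$, which is the stray $+1/n^2$ term in the theorem.

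Plugging $\log N \lesssim \varepsilon^{-2}(A\log(1/\varepsilon) + B)$ into $\calR_n(\calV)\le \frac{24 L_h}{\sqrt n}\int_0^{\text{diam}}\sqrt{\log N(\varepsilon;\calV,\norm\cdot_\calV)}\,d\varepsilon$ gives an integrand $\sim \varepsilon^{-1}\sqrt{A\log(1/\varepsilon)+B}$, whose integral from $\varepsilon_0\approx 1/n$ up to the diameter $\approx B_\alpha(1+B_\theta L_\phi)$ is $O(\sqrt{A}\log^{3/2} n + \sqrt{B}\log n)$ --- i.e. of order $\sqrt{\kappa}\cdot\sqrt{\log n}$ with $\kappa$ as defined (the $\log n$ factors absorbed into $\kappa$ and the ambient $\log^3 n$ of Lemma~\ref{lemma:fast_rate}). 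So $\calR_n^2(\calV) \lesssim L_h^2 \kappa \log n / n$; feeding this into Lemma~\ref{lemma:fast_rate} with the $\log^3 n/\gamma^2$ prefactor, together with the $B_\alpha^2(1+B_\theta L_\phi)^2$ radius factor pulled out front, and the $+ 2\log(\log(4B_h/\gamma)/\delta)/n$ deviation term (here $\abs{h}\le B_h$, with $B_h\lesssim 1+$ the relevant Lipschitz-scaled bounds, giving the $\log(1+B_h/\gamma)$ form), yields exactly the claimed inequality, the $1/n^2$ being the Dudley truncation remainder.

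The main obstacle I expect is getting the two-scale covering-number estimate honest: one must carefully track that the $m$-term empirical approximation error is controlled in the $\norm{\cdot}_\calV$ norm (which involves both $V$ \emph{and} $\frac{\p V}{\p x}$), so the perturbation of the frequency grid must be fine enough that both $\phi$ and $\phi'$ composed with $x\mapsto\ip xw+b$ move by $\le\varepsilon$ uniformly over the compact set $S$ --- this is where $L_{\phi'}$, $B_\theta$, $B_S$ and the dimension $p$ all appear, and keeping the dependence tight enough to land the stated $\kappa$ rather than something lossier is the delicate part. The feasibility hypothesis of \eqref{eq:V_opt} and Assumption~\ref{assume:stability} are used exactly as in the other theorems, so no new idea is needed there.
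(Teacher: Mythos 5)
Your proposal takes a genuinely different route from the paper. The paper does not attempt to bound the covering number (or Rademacher complexity) of the infinite-dimensional class $\calV$ at all. Instead it (i) draws $K$ random features $\{\theta_k^*\}$ i.i.d.\ from $\nu$, \emph{independently of the training data}, with $K \asymp \frac{B_\alpha^2 L_h^2}{\gamma^2}\bigl((1+B_\theta L_\phi)\sqrt{\log n} + B_\theta(B_S+1)(L_\phi+B_\theta L_{\phi'})\sqrt{p}\bigr)^2$; (ii) invokes a gradient-augmented version of the Rahimi--Recht uniform approximation lemma (Lemma~\ref{lem:function_finite_approx}) with failure probability $1/n^2$ to show that, on an event $\calE_{\text{approx}}$, the feasible $f_n$ is approximated to $\norm{\cdot}_\calV$-error $\gamma/(8\sqrt{2}L_h)$ by some $\hat f_n$ in the \emph{data-independent finite-dimensional} class $\hat\calF(B_\alpha,\{\theta_k^*\})$; (iii) observes that $\hat f_n$ then has zero empirical risk with margin $\gamma/2$ for the shifted map $\tilde h = h + \gamma/4$; and (iv) applies Lemma~\ref{lemma:fast_rate} to the $\ell_1$-ball $\hat\calF$, whose Rademacher complexity is bounded by the ordinary finite-dimensional Dudley estimate. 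The $+1/n^2$ term is $\Pr(\calE_{\text{approx}}^c)$, and $\kappa$ is essentially $K$. The paper explicitly flags that a naive probabilistic-method argument ("there exist weights $\{\bar\theta_k\}$ approximating $f_n$") fails because the resulting finite class would depend on the training sample, so the weights must be drawn independently. Your direct-Dudley route cleverly sidesteps that data-dependence issue entirely (a real structural advantage), so it is not wrong in spirit.

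However, as written your proposal has gaps that would change the resulting bound. First, a careful Maurey-plus-grid estimate on $N(\varepsilon;\calV,\norm{\cdot}_\calV)$ does not give the shape $\frac{1}{\varepsilon^2}\bigl[A\log(1/\varepsilon) + B\bigr]$ with $p$ appearing additively; it gives roughly $\frac{\rho^2}{\varepsilon^2}\,p\log(1/\varepsilon)$ — the Maurey order $m\sim\rho^2/\varepsilon^2$ multiplies the per-slot grid entropy $\sim p\log(1/\varepsilon')$ — so $p$ and $\log(1/\varepsilon)$ are multiplied, not decoupled. Your claimed covering bound appears reverse-engineered from $\kappa$ rather than derived. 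Second, the $+1/n^2$ is attributed to Dudley truncation, but truncating Dudley at scale $\varepsilon_0$ contributes an additive $\varepsilon_0$ to $\calR_n$ (not $\varepsilon_0\sqrt{\log N(\varepsilon_0)}$), hence $\varepsilon_0^2$ to $\calR_n^2$, which after Lemma~\ref{lemma:fast_rate}'s $\log^3 n/\gamma^2$ prefactor becomes $\log^3 n/(\gamma^2 n^2)$, not a clean $1/n^2$; in the paper the $1/n^2$ is a probability, a qualitatively different object. Third, the reconciliation with $\kappa$ does not go through: the paper's $\kappa$ contains an extra $B_\alpha^2 L_h^2/\gamma^2$ factor because $K$ must be large enough to approximate to $\norm{\cdot}_\calV$-error $\sim\gamma/L_h$, yielding a $(B_\alpha L_h/\gamma)^4$-type dependence overall; a direct Dudley bound integrates across all scales and gives a $(B_\alpha L_h/\gamma)^2$-type dependence instead (better in that parameter, but worse in $\log n$ once the multiplicative $p\log(1/\varepsilon)$ is accounted for). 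In short: the route is viable and arguably more natural, but it yields a bound with a different profile of $\log n$ and $B_\alpha L_h/\gamma$ factors, not the theorem's $\kappa$, and the details you sketch would not close without reworking both the covering-number shape and the source of the $1/n^2$.
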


\section{Global Stability Results}
\label{sec:global}

In this section, we show how the
bounds from Section~\ref{sec:local:bounds}
can be translated into global results
for the learned certificate functions.
To facilitate our analysis, we assume the dynamics is incrementally stable. Incremental
stability is implied by contraction,
but is stronger than Lyapunov stability.
Before stating the assumption,
we say that $\beta(s, t)$ is a class $\mathcal{KL}$ function
if for every $t$ the map $s \mapsto \beta(s, t)$ is
a class $\mathcal{K}$ function and
for every $s$ the map $t \mapsto \beta(s, t)$ is
continuous and non-increasing.
\begin{assume}[Incremental stability, c.f.~\citet{raginsky}]
\label{assume:incr_stability}
There exists a class $\mathcal{KL}$ function $\beta$
such that for all $\xi_1, \xi_2 \in X$,
$\norm{\varphi_t(\xi_1) - \varphi_t(\xi_2)} \leq \beta(\norm{\xi_1 - \xi_2}, t)$
for all $t \in T$.
\end{assume}

With Assumption~\ref{assume:incr_stability} in hand,
we are ready to state a result regarding learned
Lyapunov functions.
For what follows, let $\mathbb{B}_2^p(r)$ denote the closed $\ell_2$-ball in $\R^p$ of
radius $r$, $\mathbb{S}^{p-1}$ denote the sphere in $\R^p$, and
$\leb(\cdot)$ denote the Lebesgue measure on $\R^p$.
\begin{thm}
\label{thm:global_lyap}
Suppose the system satisfies Assumption~\ref{assume:incr_stability},
and suppose the set $X$ is full-dimensional and compact.
Define the set $S := \cup_{t \in T} \varphi_t(X)$.
Let $V : S \mapsto \R$ be a twice-differentiable positive definite function
satisfying $V(x) \geq \mu \norm{x}^2$ for all $x \in S$.
Define the violation set $X_b$ as:
\begin{align}
    X_b := \left\{ \xi \in X : \max_{t \in T} \ip{\nabla V(\varphi_t(\xi))}{f(\varphi_t(\xi))} > \lambda V(\varphi_t(\xi)) \right\} \:.
\end{align}
Let $\nu$ denote the uniform probability measure on $X$ and suppose that
$\nu(X_b) \leq \varepsilon$.
Define the function
$q(x) := \ip{\nabla V(x)}{f(x)}$, and denote the constants $B_{\nabla q} := \sup_{x \in S} \norm{\nabla q(x)}$, $B_{\nabla V} := \sup_{x \in S} \norm{\nabla V(x)}$.
Let $r(\varepsilon) := \left( \frac{\varepsilon \leb(X)}{\leb(\mathbb{B}_2^p(1))} \right)^{1/p}$.
Then for all $\eta \in (0, 1)$:
\begin{align}
    \ip{\nabla V(x)}{f(x)} \leq - (1-\eta)\lambda V(x) \:\:\forall x \in \tilde{S} \setminus \mathbb{B}_2^p\left(0, \sqrt{ \frac{\beta(r(\varepsilon), 0)}{\eta \mu} (B_{\nabla V} + \lambda^{-1} B_{\nabla q})   }\right) \:. \label{eq:lie_derivative_ball}
\end{align}
Here, $\tilde{S} := \cup_{t \in T} \varphi_t(\tilde{X})$
with $\tilde{X} := \{ \xi \in X : \mathbb{B}_2^p(\xi, r(\varepsilon)) \subset X \}$.
Furthermore, for every $\xi \in X$,
let $u_\xi(t)$ denote the solution to the differential equation:
\begin{align}
    \dot{u}_\xi = - \lambda u_\xi + (B_{\nabla q} + \lambda B_{\nabla V}) \beta( r(\varepsilon), t) \:, \:\: u_\xi(0) = V(\xi) \:. \label{eq:V_upper_bound}
\end{align}
Then for every $\xi \in \tilde{X}$ and $t \in T$,
the inequality $V(\varphi_t(\xi)) \leq u_\xi(t)$ holds.
\end{thm}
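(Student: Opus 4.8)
The plan is to derive both conclusions from a single pointwise inequality built from three ingredients: a volume‑packing argument to exploit $\nu(X_b)\le\varepsilon$, Assumption~\ref{assume:incr_stability} to compare a candidate violating trajectory with a nearby non‑violating one, and a scalar Gr\"onwall/comparison step. \emph{Packing.} Fix $\xi\in\tilde X$, so $\mathbb{B}_2^p(\xi,r(\varepsilon))\subseteq X$. By the definition of $r(\varepsilon)$, $\leb(\mathbb{B}_2^p(\xi,r(\varepsilon)))=\leb(\mathbb{B}_2^p(1))\,r(\varepsilon)^p=\varepsilon\,\leb(X)$, hence $\nu(\mathbb{B}_2^p(\xi,r(\varepsilon)))=\varepsilon\ge\nu(X_b)$; therefore this ball is not contained in $X_b$ and we may pick $\xi'\in\mathbb{B}_2^p(\xi,r(\varepsilon))\setminus X_b$, so $\norm{\xi-\xi'}\le r(\varepsilon)$ and $\ip{\nabla V(\varphi_t(\xi'))}{f(\varphi_t(\xi'))}\le-\lambda V(\varphi_t(\xi'))$ for all $t\in T$.

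\emph{Transfer.} By Assumption~\ref{assume:incr_stability} and monotonicity of $s\mapsto\beta(s,t)$ we get $\norm{\varphi_t(\xi)-\varphi_t(\xi')}\le\beta(\norm{\xi-\xi'},t)\le\beta(r(\varepsilon),t)$ for every $t$. Since $\varphi_t(\xi),\varphi_t(\xi')\in S$ and the gradients of $V$ and of $q:=\ip{\nabla V}{f}$ are bounded on $S$ by $B_{\nabla V}$ and $B_{\nabla q}$ (finite under the stated assumptions), the mean‑value inequality gives $\abs{q(\varphi_t(\xi))-q(\varphi_t(\xi'))}\le B_{\nabla q}\beta(r(\varepsilon),t)$ and $\abs{V(\varphi_t(\xi))-V(\varphi_t(\xi'))}\le B_{\nabla V}\beta(r(\varepsilon),t)$. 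Chaining these with the certificate bound at $\xi'$,
\[
q(\varphi_t(\xi))\le q(\varphi_t(\xi'))+B_{\nabla q}\beta(r(\varepsilon),t)\le-\lambda V(\varphi_t(\xi'))+B_{\nabla q}\beta(r(\varepsilon),t)\le-\lambda V(\varphi_t(\xi))+(B_{\nabla q}+\lambda B_{\nabla V})\beta(r(\varepsilon),t),
\]
valid for every $\xi\in\tilde X$ and $t\in T$.

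\emph{The two conclusions.} For \eqref{eq:V_upper_bound}: along $x(t)=\varphi_t(\xi)$ one has $\tfrac{d}{dt}V(\varphi_t(\xi))=q(\varphi_t(\xi))$, so the transfer inequality reads $\tfrac{d}{dt}V(\varphi_t(\xi))\le-\lambda V(\varphi_t(\xi))+(B_{\nabla q}+\lambda B_{\nabla V})\beta(r(\varepsilon),t)$ with $V(\varphi_0(\xi))=V(\xi)=u_\xi(0)$; writing $\psi(t):=V(\varphi_t(\xi))-u_\xi(t)$ yields $\dot\psi\le-\lambda\psi$, $\psi(0)=0$, hence $\tfrac{d}{dt}(e^{\lambda t}\psi)\le0$ and $\psi\le0$, i.e.\ $V(\varphi_t(\xi))\le u_\xi(t)$. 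For \eqref{eq:lie_derivative_ball}: any $x\in\tilde S$ equals $\varphi_t(\xi)$ for some $\xi\in\tilde X$, $t\in T$, and since $\beta(r(\varepsilon),\cdot)$ is non‑increasing, $q(x)\le-\lambda V(x)+(B_{\nabla q}+\lambda B_{\nabla V})\beta(r(\varepsilon),0)$. If moreover $\norm{x}^2\ge\tfrac{\beta(r(\varepsilon),0)}{\eta\mu}(B_{\nabla V}+\lambda^{-1}B_{\nabla q})$, then $\mu\norm{x}^2\ge\tfrac{\beta(r(\varepsilon),0)}{\eta\lambda}(B_{\nabla q}+\lambda B_{\nabla V})$, and using $V(x)\ge\mu\norm{x}^2$ we obtain $(B_{\nabla q}+\lambda B_{\nabla V})\beta(r(\varepsilon),0)\le\eta\lambda V(x)$, whence $q(x)\le-\lambda V(x)+\eta\lambda V(x)=-(1-\eta)\lambda V(x)$.

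The only genuine subtlety is the packing step: when $\nu(X_b)=\varepsilon$ exactly, one must still exhibit an honest point of the ball outside $X_b$, which is why one invokes that $X_b$ is relatively open in $X$ — equivalently, that $\xi\mapsto\max_{t\in T}h(\varphi_t(\xi),\dot\varphi_t(\xi),V(\varphi_t(\xi)),\tfrac{\p V}{\p x}(\varphi_t(\xi)))$ is continuous (with a uniform‑in‑$t$ argument if $T$ is not compact). Everything else is bookkeeping; the two points that require care are that the Lipschitz transfer must be applied to $q=\ip{\nabla V}{f}$ itself (producing the constant $B_{\nabla q}$), not merely to $\nabla V$, and that the sign of $\lambda$ must be tracked consistently so that the forcing constant $B_{\nabla q}+\lambda B_{\nabla V}$ appears identically in the transfer inequality, the comparison ODE \eqref{eq:V_upper_bound}, and the ball radius in \eqref{eq:lie_derivative_ball}.
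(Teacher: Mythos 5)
Your overall plan — packing to find a nearby non-violating initial condition, incremental stability to transfer along the flow, and a scalar comparison — is exactly the paper's. The transfer chain, the comparison ODE step, and the derivation of the ball radius in \eqref{eq:lie_derivative_ball} all match. The gap is in the packing step, and it is a real one.

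You claim that since $\nu\bigl(\mathbb{B}_2^p(\xi,r(\varepsilon))\bigr)=\varepsilon\ge\nu(X_b)$, the ball cannot be contained in $X_b$. That implication is false: set inclusion gives a one-sided measure inequality, so $\mathbb{B}_2^p(\xi,r(\varepsilon))\subseteq X_b$ together with $\nu(X_b)\le\varepsilon=\nu(\mathbb{B}_2^p(\xi,r(\varepsilon)))$ is perfectly consistent (both sets then have measure $\varepsilon$). You acknowledge this when $\nu(X_b)=\varepsilon$ exactly and propose to rescue it by arguing $X_b$ is relatively open, but this requires a lower-semicontinuity argument for $\xi\mapsto\max_{t\in T}(\ip{\nabla V(\varphi_t(\xi))}{f(\varphi_t(\xi))}-\lambda V(\varphi_t(\xi)))$ that the theorem hypotheses do not provide — and even granting openness of $X_b$, if $X_b$ equals the open ball and $\mathbb{B}_2^p$ denotes the closed ball, you would only get $\xi'$ on the boundary sphere, which works, but the whole patch leans on topological facts you would have to establish.

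The paper's proof avoids all of this by never trying to find $\xi'$ within radius exactly $r(\varepsilon)$. It instead uses Lemma~\ref{lem:net} to conclude that $r(\varepsilon)$ is an upper bound on the radius of \emph{any} ball contained in $X_b$, so that for every $\delta>0$ small enough (small enough that $\mathbb{B}_2^p(\xi,r(\varepsilon)+\delta)\subset X$, which is why $\tilde X$ is defined with a strict inclusion) the enlarged ball cannot lie in $X_b$ and hence meets $X_g$. One then runs your transfer chain with $\beta(r(\varepsilon)+\delta,t)$ and sends $\delta\to 0$ using continuity of $\beta$ in its first argument. This $\delta$-slack-and-limit device is the missing ingredient; it makes the argument work under exactly the stated hypotheses without any topological assumption on $X_b$. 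The rest of your write-up (mean-value bounds via $B_{\nabla q}$, $B_{\nabla V}$; the Gr\"onwall step with $\psi:=V\circ\varphi-u_\xi$; the ball-radius computation) is correct and identical in substance to the paper's.
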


Theorem~\ref{thm:global_lyap}
states that the learned Lyapunov function
$V$ satisfies the Lie derivative decrease condition
on all of $\tilde{S}$ except for a ball
of radius $r_b \leq O(\sqrt{\beta(r(\varepsilon), 0)})$
around the origin. Since $r(\varepsilon) \to 0$
as $\varepsilon \to 0$, Theorem~\ref{thm:global_lyap}
shows that the quality of our Lyapunov function increases as  the measure of the violation set $X_b$ decreases.
Furthermore, we can
apply the bounds in Section~\ref{sec:local:bounds}
to obtain an upper bound on the
radius $r_b$ of the ball as a function of the
number of sample trajectories.
For example, Theorem~\ref{thm:gen_bound_linear}
states that $\nu(X_b) \leq O(k/n)$ if $n$ random samples are drawn uniformly from X.
For simplicity assume $X = \mathbb{B}_2^p(1)$
and $\beta(r, 0) \leq O(r)$,
which
implies
$r_b \leq O((k/n)^{1/2p})$.
Setting $r_b \leq \zeta$
and solving for $n$, we find $n \geq \Omega( k \cdot \zeta^{-2p})$.
Such an
exponential dependence on the
dimension $p$ is unavoidable
without assuming more structure.

Equation~\eqref{eq:V_upper_bound}
yields bounds of the form
$V(\varphi_t(\xi)) \leq V(\xi) e^{-\lambda t} + O(r(\varepsilon) h_\beta(t))$,
where $h_\beta$ depends on the specific form of $\beta$.
For example if
$\beta(s, t) \leq M e^{-\alpha t} s$ for some $\alpha > \lambda$,
then $h_\beta(t) = e^{-\lambda t}$.
On the other hand,
if we have the slower rate $\beta(s, t) \leq M s/(t+1)$,
then $h_\beta(t) = t^2 e^{-\lambda t}$.

We note that Theorem~\ref{thm:global_lyap}
is conceptually similar to the results
from \citet{liu20almostlyapunov}, but
incremental stability assumption dramatically simplifies the proof and
enables us to make the constants explicit.

We now state a similar result
to Theorem~\ref{thm:global_lyap}
for metric learning.
Let $\psi_t(\cdot)$ denote the induced flow on the prolongated system $g(x, \delta x) = (f(x), \frac{\p f}{\p x}(x) \delta x)$
and $\theta_t(\delta \xi; \xi)$ denote the second element of
$\psi_t(\xi, \delta \xi)$.
Further let $\zeta_p(r)$ be the Haar measure of a spherical cap in $\mathbb{S}^{p-1}$
with arc length $r$.
\begin{thm}
\label{thm:global_diff_lyap}
Fix an $\eta \in (0, 1)$.
Suppose that $X \subseteq \R^p$ is full-dimensional and $p \geq 2$.
Let $\dot{x} = f(x)$ be contracting in the metric $M_\star(x)$ with rate $\lambda > 0$. Assume that $m I \preceq M_\star(x) \preceq L I$.  Let $V(x, \delta x): \calT S\mapsto \R$ be of the form $V(x, \delta x) = \delta x^\T M(x) \delta x$
for some positive definite matrix function $M(x)$
satisfying $M(x) \succeq \mu I$.
Define the violation set $Z_b$ as:
\begin{align}
    Z_b := \left\{ (\xi, \delta \xi) \in X \times \mathbb{S}^{p-1} : \max_{t \in T} \ip{\nabla V(\psi_t(\xi, \delta \xi))}{g(\psi_t(\xi, \delta \xi))} > \lambda V(\psi_t(\xi, \delta \xi)) \right\} \:.
\end{align}
Suppose that $\nu(Z_b) \leq \varepsilon$,
where $\nu$ is the uniform probability measure on $X \times \mathbb{S}^{p-1}$.
Define
$r(\varepsilon) := \sup\left\{ r > 0 : r^p \zeta_p(r) \leq \frac{\varepsilon \leb(X)}{\leb(\mathbb{B}_2^p(1))} \right\}$,
and let the radius $r_b := \sqrt{ r(\varepsilon) B_H (B_{\nabla q} + \lambda B_{\nabla V})(L/m)^{3/2} / (\eta\lambda \mu) }$,
where $B_H := \sup_{x \in S} \left\Vert\frac{\p^2 f}{\p x^2}\right\Vert$.
Finally, define the sets $\tilde{X}_t(r_b) := \{ \xi \in \tilde{X} : \inf_{\delta \xi \in \mathbb{S}^{p-1}} \norm{\theta_t(\delta \xi; \xi)} \geq r_b \}$ for $t \in T$, with $\tilde{X} := \{ \xi \in X : \mathbb{B}_2^p(\xi, r(\varepsilon)) \subset X \}$.
Then the system will be contracting in the metric $M(x)$ at the rate $(1-\eta) \lambda$ for every $x \in \tilde{S}(r_b) := \cup_{t \in T} \varphi_t(\tilde{X}_t(r_b))$.

\end{thm}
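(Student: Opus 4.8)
The plan is to mirror the structure of the proof of Theorem~\ref{thm:global_lyap}, but carried out on the prolongated system $g(x,\delta x) = (f(x), \frac{\p f}{\p x}(x)\delta x)$ and on the tangent bundle $\calT S$, while keeping careful track of the fact that the base-point condition is now genuinely ``fattened'' in two directions (both in $\xi$ and in the unit direction $\delta\xi$). First I would observe that $Z_b \subseteq X \times \mathbb{S}^{p-1}$ has $\nu$-measure at most $\varepsilon$, and I want to conclude that for every ``good'' base point $\xi \in \tilde X$ and every unit direction $\delta\xi$, there is a nearby good pair $(\xi', \delta\xi') \notin Z_b$ with $\norm{\xi'-\xi} \le r(\varepsilon)$ and arc-distance between $\delta\xi, \delta\xi'$ at most $r(\varepsilon)$. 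This is exactly why the definition of $r(\varepsilon)$ here involves $r^p \zeta_p(r)$ rather than just $r^p$: the product of a $p$-ball of radius $r$ around $\xi$ with a spherical cap of arc length $r$ around $\delta\xi$ has $\nu$-measure proportional to $r^p \zeta_p(r) / \leb(X)$, and if this exceeds $\varepsilon$ the product set cannot be contained in $Z_b$, so a good pair exists. (One must also check $\mathbb{B}_2^p(\xi, r(\varepsilon)) \subset X$, which is why we restrict to $\tilde X$.)

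Next I would transfer the certified decrease from the good pair $(\xi', \delta\xi')$ to the target pair $(\xi, \delta\xi)$ using incremental stability of the prolongated flow. The key quantities are the differences $\norm{\varphi_t(\xi') - \varphi_t(\xi)}$ and $\norm{\theta_t(\delta\xi'; \xi') - \theta_t(\delta\xi; \xi)}$. Contraction of $\dot x = f(x)$ in $M_\star$ with rate $\lambda$ controls the first by $\sqrt{L/m}\,\norm{\xi'-\xi}e^{-\lambda t}$ via the standard Riemannian-energy argument, and it also controls the variational flow $\theta_t$ through the linearized dynamics; the factor $(L/m)^{3/2}$ in $r_b$ should come from composing the metric-distortion bound with a Grönwall estimate on the variational equation, where the Hessian bound $B_H = \sup_x \norm{\frac{\p^2 f}{\p x^2}}$ enters because we need to compare $\frac{\p f}{\p x}$ evaluated along two nearby trajectories. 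Having bounded how far $g(\psi_t(\xi,\delta\xi))$ and $\nabla V(\psi_t(\xi,\delta\xi))$ are from their values at the good pair — using $B_{\nabla V}$, $B_{\nabla q}$ and Lipschitzness of the relevant maps — I get that $\ip{\nabla V}{g} - \lambda V$ at the target point exceeds its value at the good point (which is $\le 0$) by at most a term of order $r(\varepsilon) B_H (B_{\nabla q} + \lambda B_{\nabla V})\sqrt{L/m}$ divided by a power of $t$-decay.

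Then I would absorb this additive error into a fraction $\eta$ of the decrease term, exactly as in Theorem~\ref{thm:global_lyap}: using $V(\psi_t(\xi,\delta\xi)) = \delta x^\T M(x)\delta x \ge \mu \norm{\delta x}^2 = \mu\norm{\theta_t(\delta\xi;\xi)}^2$, the inequality $\ip{\nabla V}{g} \le -(1-\eta)\lambda V$ follows at a point whenever $\mu\norm{\theta_t(\delta\xi;\xi)}^2$ is large enough relative to the error term, i.e. whenever $\norm{\theta_t(\delta\xi;\xi)} \ge r_b$ with $r_b$ as defined. Intersecting over unit directions $\delta\xi$ gives the set $\tilde X_t(r_b)$, and pushing forward by the base flow $\varphi_t$ and unioning over $t \in T$ gives the certified region $\tilde S(r_b)$; on that region the differential Lyapunov condition \eqref{eq:diff_lyap_condition} holds with rate $(1-\eta)\lambda$, which by \citet{forni13differentiallyapunov} is exactly contraction in $M$ at rate $(1-\eta)\lambda$.

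The main obstacle I anticipate is the second perturbation bound — controlling $\norm{\theta_t(\delta\xi';\xi') - \theta_t(\delta\xi;\xi)}$ uniformly over $t \in T$. Unlike the base trajectory, the variational flow is not itself contracting in an obvious way (it only inherits a bound from the linearization), so one has to combine (i) a Grönwall argument against the spectral norm of $\frac{\p f}{\p x}$ along trajectories with (ii) the contraction-induced decay of $\norm{\varphi_t(\xi')-\varphi_t(\xi)}$ feeding into the $B_H$-Lipschitz comparison of the two Jacobians, and then verify that the product stays bounded (indeed decays) in $t$. Getting the clean $(L/m)^{3/2}$ constant and confirming that no term grows in $t$ faster than the $e^{-\lambda t}$ (or $h_\beta(t)$-type) factor from the base flow is the delicate bookkeeping step; everything else is a direct adaptation of the Lyapunov case.
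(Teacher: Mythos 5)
Your overall structure mirrors the paper's, and the combined ball-times-spherical-cap covering argument (giving the $r^p\zeta_p(r)$ normalization in the definition of $r(\varepsilon)$) is the right starting point. Where you propose a Gr\"onwall estimate on the variational equation to control $\norm{\theta_t(\delta\xi';\xi')-\theta_t(\delta\xi;\xi)}$, the paper instead proves Lemma~\ref{lem:meta}: the prolongated system $(f,\frac{\p f}{\p x}\delta x)$ is itself contracting in a block-diagonal metric $M'(x)=\diag(M_\star(x),\epsilon^2 M_\star(x))$ for $\epsilon$ small, because the generalized Jacobian is lower block-triangular and the off-diagonal Hessian term $\epsilon\,\Theta\frac{\p^2 f}{\p x^2}\delta x\,\Theta^{-1}$ can be damped by scaling. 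This yields incremental stability of $\psi_t$ on the tangent bundle directly, with explicit condition number $\chi(M')\leq (L/m)^3 B_H^2/(4(1-\zeta)^2)$ --- the source of the $(L/m)^{3/2}$ and $B_H$ factors in $r_b$. Your Gr\"onwall route is workable in principle (and you correctly flag it as the delicate step), but the block-diagonal construction is the cleaner path and produces the stated constants.

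There is, however, a genuine gap in your endgame. Having established $q(\psi_t(\xi,\delta\xi)) \leq -(1-\eta)\lambda V(\psi_t(\xi,\delta\xi))$ for all $\delta\xi\in\mathbb{S}^{p-1}$ and $\xi\in\tilde X_t(r_b)$, you assert the differential Lyapunov condition \eqref{eq:diff_lyap_condition} holds on $\tilde S(r_b)$. But at a fixed $x=\varphi_t(\xi)$ this only certifies the inequality at pairs $(x,\delta x)$ with $\delta x\in\theta_t(\mathbb{S}^{p-1};\xi)$, i.e.\ on an ellipsoid of tangent vectors, not for every $\delta x\in\R^p$; the matrix inequality defining contraction in $M$ at $x$ requires the differential condition for all $\delta x$. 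The paper closes this with Lemma~\ref{lem:ltv_ball}: the state-transition matrix of the variational LTV system is invertible, so for any unit vector $\delta x$ there exist $\alpha>0$ and $\delta\varrho\in\mathbb{S}^{p-1}$ with $\theta_t(\delta\varrho;\xi)=\alpha\,\delta x$; then $2$-homogeneity of both $q$ and $V$ in $\delta x$ lets one divide by $\alpha^2$ and extend the inequality to $(x,\delta x)$. Without this directional-coverage-plus-homogeneity argument, the stated conclusion does not follow from the inequality you derived.
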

Theorem~\ref{thm:global_diff_lyap} is illustrated in Figure~\ref{fig:vdp_pp}, which shows the structure of the violation set. Further details and exploration of the effect of $\eta$ can be found in Section~\ref{app:vdp}. In Section~\ref{sec:app:known_dynamics} we prove a
result similar to Theorem~\ref{thm:global_diff_lyap} for metric learning
with known dynamics.

\section{Learning Certificates in Practice}
\label{sec:experiments}

We empirically study the generalization behavior
of both learning Lyapunov functions
and contraction metrics
from trajectory data.
We consider Lyapunov functions parameterized by
$V(x) = x^\T (L(x)L(x)^\T + I) x$,
where $L(x)$ is the (reshaped) value of a
fully connected neural network with $\tanh$ activations
of size $p \times h \times h \times p \cdot (2 p)$,
where $p$ is the state-dimension of $x$ and
$h$ is the hidden width.
For metric learning,
we study a convex formulation via SOS programming. Each matrix element $M_{ij}(x) = \ip{w_{ij}}{\phi(x)}$ is given by a polynomial where $w_{ij}$ are the learned weights and $\phi(x)$ is a feature map of monomials in the state vector. In our experiments, we numerically estimate the
generalization error of a learned certificate using a test set.
We compute an upper confidence bound (UCB) of the estimate
using the Chernoff inequality with $\delta=0.01$, as described by  \citet{langford05practical}. More experimental details are
given in the appendix.

\begin{figure}[ht]
\begin{tabular}{cc}
    \hspace{-10pt}\includegraphics[width=.49\textwidth]{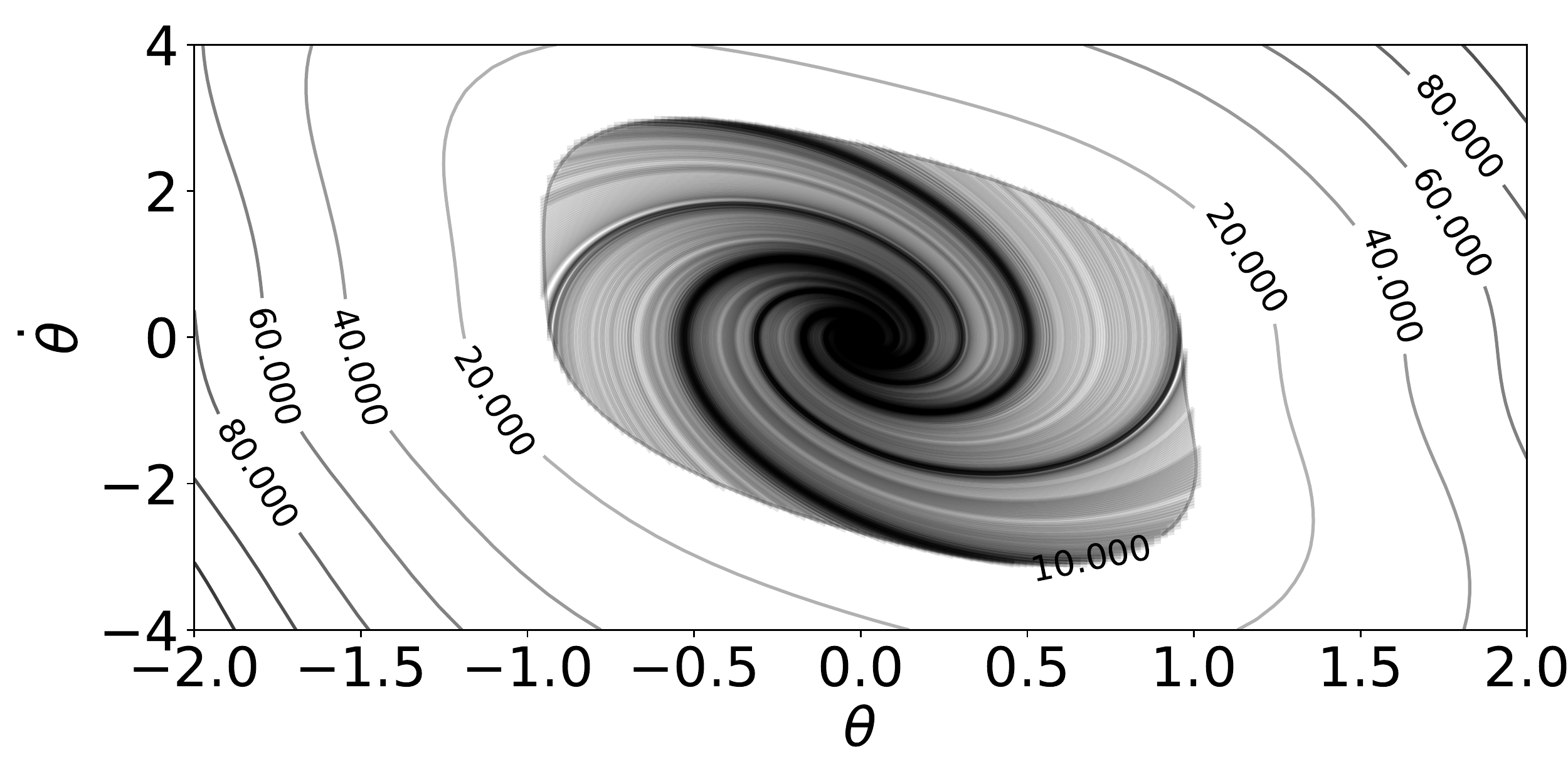} & \includegraphics[width=.49\textwidth]{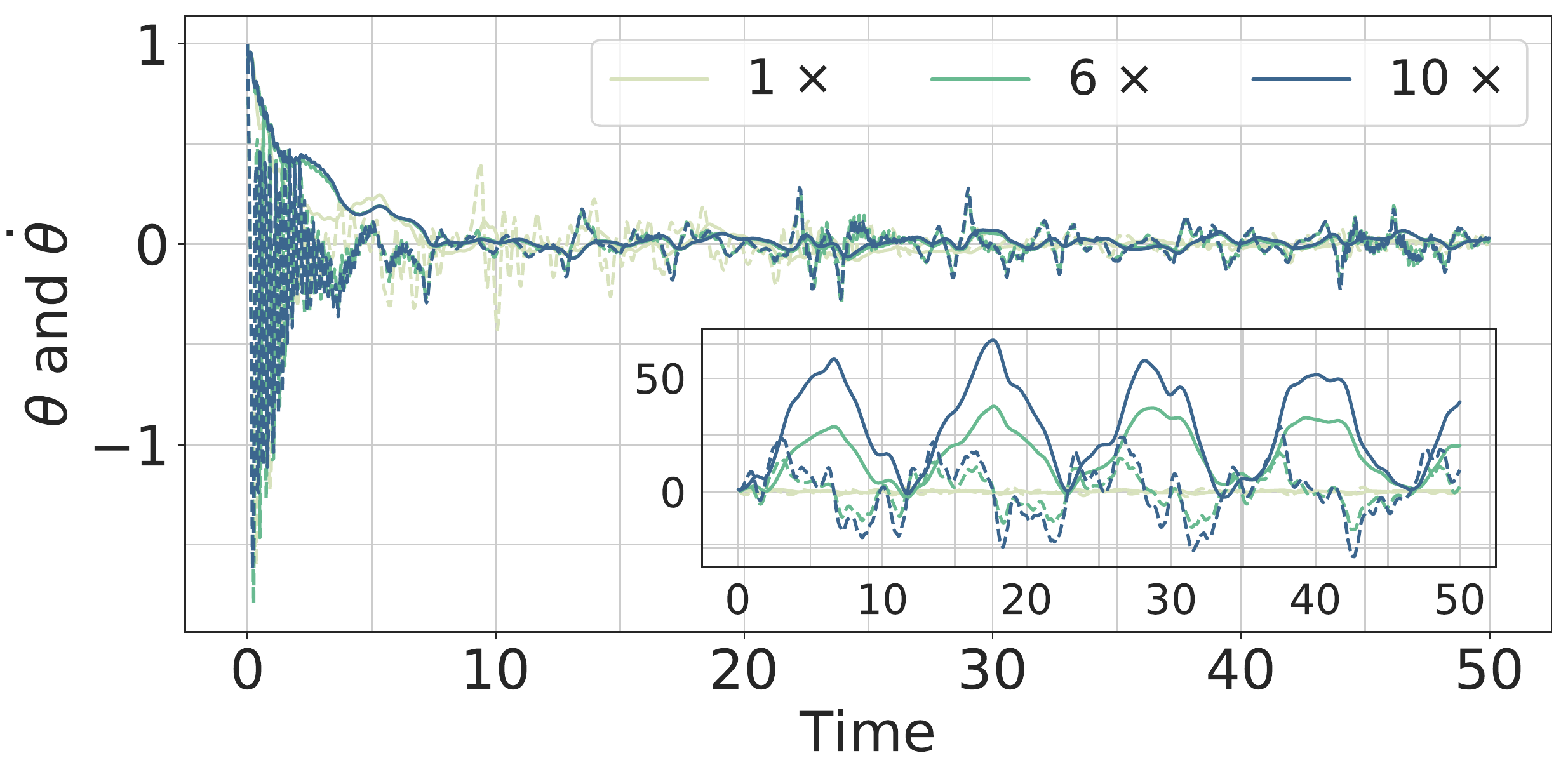} \\
    \hspace{-10pt}\includegraphics[width=.49\textwidth]{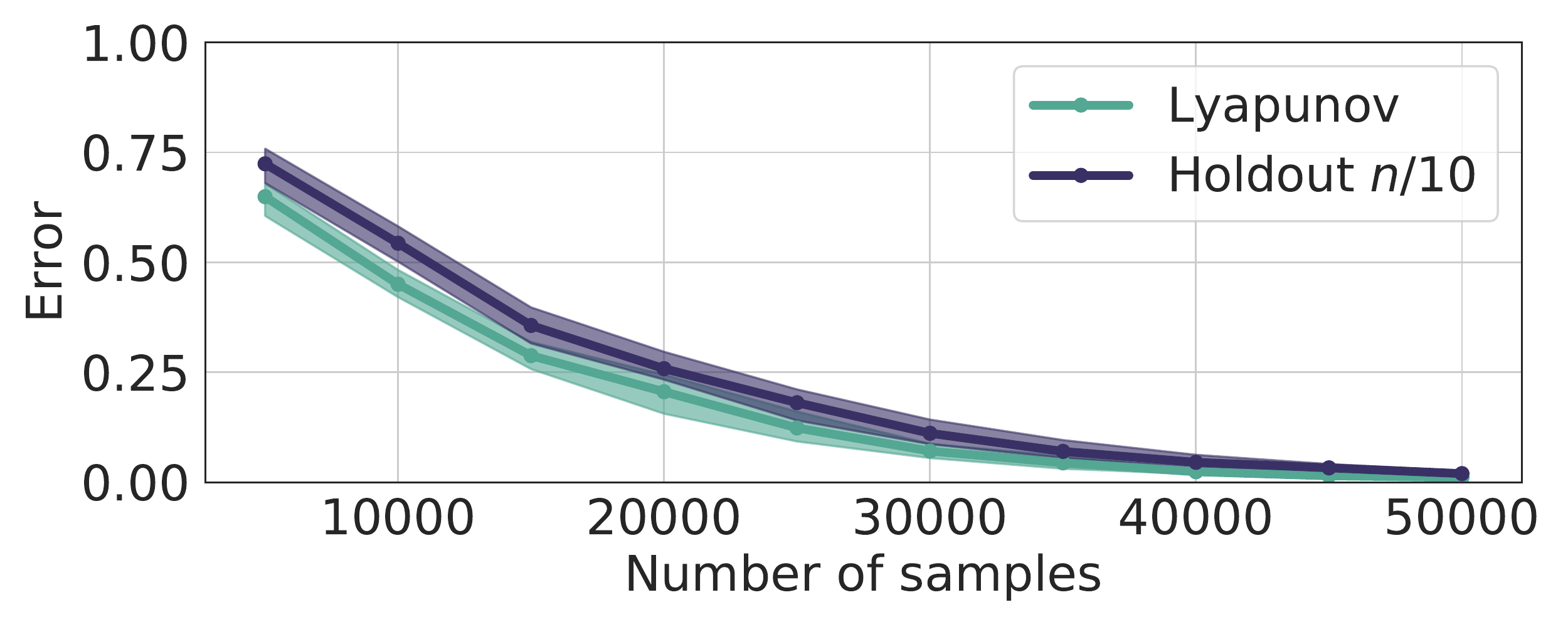} & \includegraphics[width=.49\textwidth]{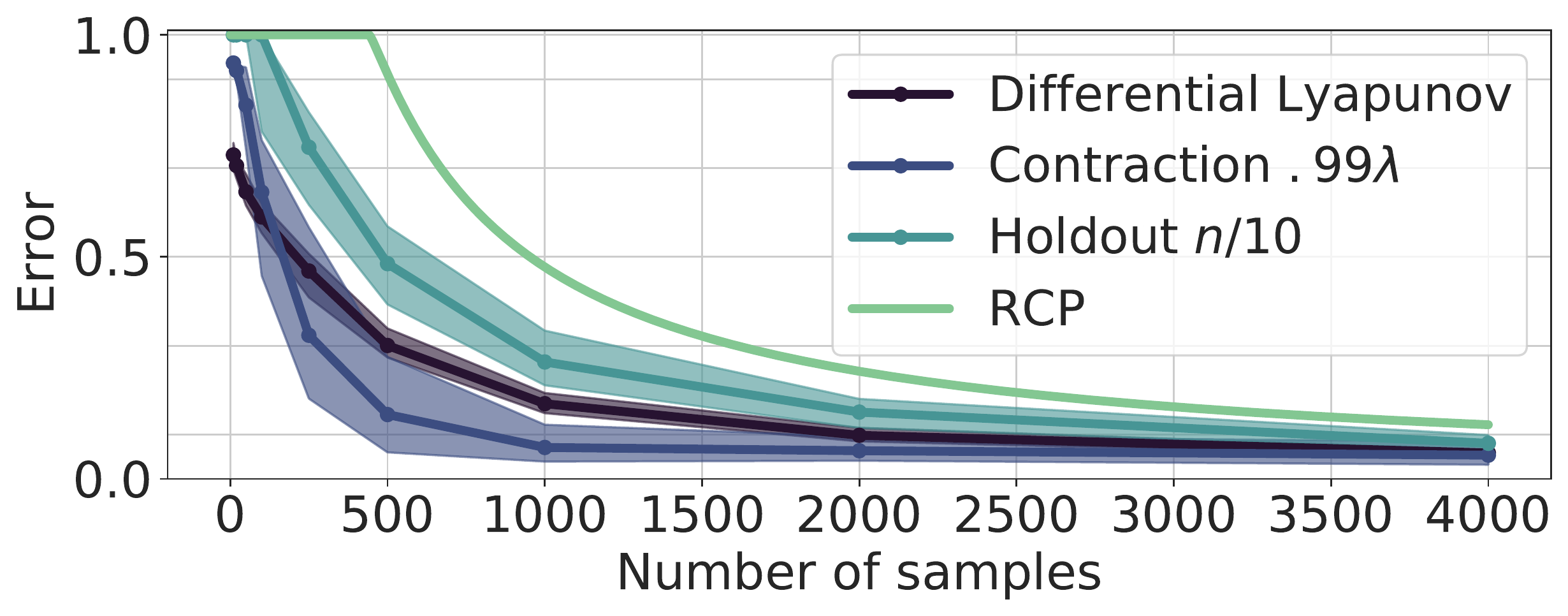}
\end{tabular}
\caption{
\textbf{(UL)} The level sets of a learned
Lyapunov function on trajectories from a
damped pendulum. Trajectories
are initialized along the $10$ level set
and rolled out to demonstrate set invariance.
\textbf{(UR)} Adaptive control of a damped
pendulum driven by a random sinusoidal input.
$\theta$ is shown in solid and $\dot{\theta}$ is shown dashed.
Color indicates strength of the input.
Main figure shows performance using adaptation via the learned Lyapunov function, while inset shows performance without adaptation.
\textbf{(LL)} Generalization error of a learned Lyapunov function
collected from trajectories of
a standing minitaur impacted with a random impulse
force and stabilized with a hand-tuned PD controller.
\textbf{(LR)} Generalization error of a learned
metric on a 6D gradient flow.
}
\label{fig:lyap_exps}
\end{figure}

\paragraph{Damped pendulum.}

We learn a Lyapunov function for the damped pendulum
from $1000$ training trajectories.
Figure~\ref{fig:lyap_exps} (UL) shows the
level sets of a typical learned Lyapunov function, where we also numerically rollout a dense set of trajectories starting
from $\{ x \in \R^2 : V_\theta(x) = 10 \}$ to check set invariance.
In Figure~\ref{fig:lyap_exps} (UR),
we add a disturbance $\ip{a}{\kappa \phi(t)}$ to the dynamics
where $a \in \R^{10}$ is unknown
and $\phi(t)$ are random sinusoids.
We use an adaptive control law~\cite{slot_li_book}
based on the learned Lyapunov function
to regulate $x \to 0$ (see the appendix for details). We vary $\kappa \in \{1, 6, 10\}$ to
study the robustness of the adaptation.
Figure~\ref{fig:lyap_exps} (UR) shows that the learned
Lyapunov function is able to provide enough information
to robustly regulate the state even as the disturbance $\kappa$
increases by a factor of $10$, whereas the system without adaptation is driven far from the origin.

\paragraph{Stable standing for quadrupeds.}

We learn a discrete-time Lyapunov function for a quadruped robot~\cite{kenneally2016design} as it recovers from external forcing. We apply a random impulse force in the $(x, y)$ plane at time $t=0$ to the Minitaur quadruped environment in PyBullet~\cite{coumans2020}, and use a hand-tuned PD
controller to return the minitaur to a standing position.
We train a discrete-time
Lyapunov function in order to handle the discontinuities in the trajectories
introduced by contact forces.

Figure~\ref{fig:lyap_exps} (LL) shows the result of this
experiment.
For the \textbf{Lyapunov} curve,
the resulting model trained on $n$ trajectories is
then validated using a $10000$ trajectory test set.
The generalization error is the ratio
of trajectories which violate the desired decrease condition
\emph{for any step $k \in \{1, ..., 199\}$}.
We run $30$ trials and plot the
10/50/90-th percentile of the generalization UCB.
With $n=50000$, the median generalization UCB
is $1.11\%$.
Since in practice a separate test set may not
be available,
we also compare
to splitting the available training data
into an actual training set of size $0.9 n$ and a
validation set of size $0.1 n$. The model is trained
on the actual training set, and a generalization
UCB is calculated from the validation set.
We run $30$ trials of this setup
and plot the 10/50/90-th percentile
in the \textbf{Holdout} curve.
After $n=50000$, the median generalization UCB is
$1.99\%$.

\paragraph{6-dimensional gradient system.}
Gradient flow has recently been explored in the context of Riemannian motion policies for robotics~\citep{rana2020euclideanizing, ratliff2018riemannian}, and converges for nonconvex losses with contracting dynamics~\citep{wensing2018convexity}.
We learn a metric for gradient flow on the nonconvex loss $\mathcal{L}(x) = \Vert x\Vert^2 + \sum_{i \neq j} x_i^2 x_j^2$ for $x \in \R^6$. Figure~\ref{fig:lyap_exps} (LR) shows the generalization error curves for the differential Lyapunov constraints.
Because the SOS program for metric learning is convex,
we can apply generalization bounds from randomized convex programming (RCP)~\citep{calafiore10rcp}. We also plot the probability that the learned metric is a true contraction metric with rate $.99\lambda$ on the test set (probability with rate $\lambda$ is low) and a generalization UCB obtained using a validation set. For each curve, we plot the 10/50/90-th percentile of the generalization UCB. As the number of samples increases, the error probability for differential Lyapunov constraints decreases, and the learned metric becomes a true metric with reduced rate with high probability. With $n=4000$, the median generalization UCBs are $5.85\%$, $8.02\%$, and $5.35\%$ for differential Lyapunov on the test set, differential Lyapunov on the validation set, and contraction with rate $.99\lambda$, respectively.

\section{Conclusion}
\label{sec:conclusion}

Our work shows that certificate functions can
be efficiently learned from data, and raises many
interesting questions for future work.
Extending the results to handle both noisy state observations and process noise in the dynamics would 
allow for learning certificates in uncertain environments.
Another interesting question is to establish bounds
for joint learning of both the unknown dynamics and
a certificate, which has shown to be effective
in practice~\cite{singh19learning,manek19learningstable}.
Finally, lower bounds on the learning
certificate problem would highlight the
amount of conservatism introduced in our results.

\section*{Acknowledgements}
The authors would like to thank Amir Ali Ahmadi,
Brett Lopez, Alexander Robey, and Sumeet Singh for providing helpful feedback.
{\small
\bibliography{paper}
}
%\clearpage
\appendix

\section{Experiment Details}
\label{sec:app:experiments}
\subsection{Metric learning algorithm}
Here we give pseudocode for the metric learning algorithm used in the main text. Algorithm~\ref{alg:metric} is written for a parameterization $M_{w}(x)$ that yields a convex optimization problem, and where uniform positive definiteness may be enforced globally, such as via SOS matrix constraints. It may be readily relaxed to nonconvex parameterizations such as neural networks by using soft constraints and minimizing the loss using a variant of stochastic gradient descent. Uniform positive definiteness can be imposed along trajectories rather than globally.
\begin{center}
\begin{algorithm}[h!]
\caption{Metric learning}\label{alg:metric}
    \begin{algorithmic}[1]
        \STATE{\bf Hyperparameters:} timestep $\Delta t$,
        set $X \subseteq \R^p$, horizon length $T$, linear approximation tolerance $\epsilon$, number of samples $n$, lower bound $\mu$, overshoot $L$, contraction rate $\lambda$.
        \STATE{\texttt{\# Generate samples.}}
        \WHILE{number of samples less than $n$}
        \STATE{Draw $x^{(1)} \in X$ from a distribution $\calD$ on $X$.}
        \STATE{\texttt{\# Rejection sample to ensure that $x^{(1)} + \delta x \in X$.}}
        \STATE{Draw $\delta x \in \R^p$ uniformly from a ball of radius $\epsilon$ around the origin.}
        \STATE{Set $x^{(2)} := x^{(1)} + \delta x$.}
        \STATE{Compute the flows $\varphi_t(x^{(1)}), \varphi_t(x^{(2)})$ for $t \in [0, T]$ with timestep $\Delta t$.}
        \IF{$\Vert \varphi_t(x^{(1)}) - \varphi_t(x^{(2)})\Vert\ \leq L \epsilon\ \forall\ t\in [0, T]$}
        \STATE{Compute numerical time derivatives of $\varphi_t(x^{(1)})$ and $\psi_t(\delta x) := \varphi_t(x^{(1)}) - \varphi_t(x^{(2)})$.}
        \STATE{Add $\varphi_t(x^{(1)})$, $\psi_t(\delta x)$, $\frac{d}{dt} \varphi_t(x^{(1)})$, $\frac{d}{dt} \psi_t(\delta x)$ to the dataset.}
        \STATE{Increment the number of samples.}
        \ENDIF
        \ENDWHILE
        \STATE{\bf{Solve} the optimization problem:}
        \begin{align*}
            &\min_w \frac{1}{2}\Vert w \Vert^2\\
            &~\mathrm{s.t.}\ \  \frac{d}{dt} \ip{\psi_t(\delta x_i)}{M_w(\varphi_t(x_i)) \psi_t(\delta x_i)} \leq -\lambda \ip{\psi_t(\delta x_i)}{M_w(\varphi_t(x_i)) \psi_t(\delta x_i)} \:, \:\:  i=1, \hdots, n \:,\\
            &~~~~~~~~~M_w(x) \succeq \mu I\ \forall x \in \R^p \:.
        \end{align*}
    \end{algorithmic}
\end{algorithm}
\end{center}

\subsection{Pendulum}

The pendulum dynamics are given by
$m \ell^2 \ddot{\theta} + b \dot{\theta} + m g \ell \sin{\theta} = 0$ with
$m = 1$, $g=9.81$, $\ell=1$, $b=2$.
The state space is $x = (\theta, \dot{\theta})$ and
the stable equilibrium is $x = 0$ with $\theta$ wrapped to the interval $(-\pi, \pi]$.

We generate $n=1000$ trajectories
initialized at $x_0 \sim \mathrm{Unif}([-2, 2] \times [-2,  2])$.
Each trajectory is rolled out
using the default integrator for \texttt{scipy.integrate.solve\_ivp}
for
$T=8$ seconds with $dt = 0.02$, yielding a dataset
of size $1000 \times 400 \times 2$.
We use scipy's \texttt{savgol\_filter}
with $\texttt{window\_length}=5$
and $\texttt{polyorder}=2$ to numerically
compute the derivatives $\dot{x}$.
We set the hidden
width $h=30$ and minimize the loss
$L(\theta) = \sum_{i=1}^{1000} \sum_{k=1}^{400} \mathsf{ReLU}( \ip{\nabla V_\theta(x_i(k))}{\dot{x}_i(k)} + \gamma V_\theta(x_i(k)) )  + \lambda \norm{\theta}^2$,
setting $\gamma=0.01$ and $\lambda=0.1$.
The loss is minimized for $1000$ epochs with Adam using
a step size $10^{-3}$
and a batch size of $1000$.

We repeat this experiment for $30$ trials.
For each trial we use a test set of size $1000$
to compute a UCB on the generalization error.
The 10/50/90-th
percentile of the UCBs are
$0.459\%$, $0.459\%$, and $1.163\%$.

We also uniformly grid the set $[-2, 2] \times [-4, 4]$
with $40000$ points
and check numerically how often the condition $\ip{\nabla V_\theta(x)}{\dot{x}} \leq - \gamma V_\theta(x)$
is violated.
The 10/50/90-th
percentile of these violations over $30$ trials are
$0.168\%$, $1.548\%$, and $3.696\%$.
We note that these numbers are higher than the
generalization error because the set
$[-2, 2] \times [-4, 4]$ contains points that
are outside the flow starting
from $[-2, 2] \times [-2, 2]$.

For our adaptive control experiments, the
dynamics combined with the added disturbance are
\begin{align*}
    m \ell^2 \ddot{\theta} + b \dot{\theta} + m g \ell \sin{\theta} + \ip{a}{\kappa \phi(t)} = u \:,
\end{align*}
where $u \in \R$ is the control input.
We sample $a \in \R^{10}$ from $a \sim N(0, I)$.
We set $\phi(t) = (\sin(\omega_1 t), ..., \sin(\omega_{10} t))$
where each $\omega_i \stackrel{\mathrm{i.i.d}}{\sim} \mathrm{Unif}([0, 2\pi])$
The adaptive control law we use is
$u(t) = \ip{\hat{a}(t)}{\kappa \phi(t)}$
where $\hat{a}(t)$ evolves according to:
\begin{align}
    \dot{\hat{a}}(t) = - \gamma \phi(t) \ip{\nabla_x V_\theta(x(t))}{e_2} \:, \:\: \hat{a}(0) = 0 \:, \label{eq:pendulum_adaptive}
\end{align}
where $\gamma = 15$,
$V_\theta$ is the learned Lyapunov function,
and $e_2 = \cvectwo{0}{1} \in \R^2$.
The idea behind the adaptive control law \eqref{eq:pendulum_adaptive}
is to rely on the nominal stability of the
pendulum dynamics and learn to cancel out the
uncertain disturbance, similar to the adaptive law presented by~\citet{brett_adapt}.
We note that this adaptive law is also a special case of a more general class
of speed-gradient algorithms from \citet{fradkov99}.
We give a self-contained proof of its correctness.
\begin{lem}
Consider the dynamical system
\begin{align}
    \dot{x} = f(x) + B( u(t) - Y(x, t) a ) \:, \label{eq:matched_uncertainty}
\end{align}
with $f$ continuously differentiable and $Y(x,t)$ locally bounded in $x$ uniformly in $t$.
Let $V$ be a twice continuously differentiable positive definite function such that $\ip{\nabla V(x)}{f(x)} \leq -\rho(x)$ for all $x$ for some continuously differentiable positive definite function $\rho(\cdot)$.
Let $M$ be a positive definite matrix.
Let $\hat{a}(t)$ be defined by the differential equation
\begin{align}
    \dot{\hat{a}} = - M^{-1} Y(x, t)^\T B^\T \nabla V(x(t)) \:.
\end{align}
Then the adaptive control law
\begin{align}
    u(t) = Y(x, t) \hat{a}(t) \:,
\end{align}
in feedback with \eqref{eq:matched_uncertainty}
drives $x \to 0$ and $\dot{x} \to 0$.
\end{lem}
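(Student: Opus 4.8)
The plan is a standard composite-Lyapunov (certainty-equivalence/speed-gradient) argument closed off by Barbalat's lemma. Write $\tilde a(t) := \hat a(t) - a$ for the parameter error; since $u(t) = Y(x,t)\hat a(t)$, the closed loop becomes $\dot x = f(x) + BY(x,t)\tilde a$, while the adaptation law gives $\dot{\tilde a} = \dot{\hat a} = -M^{-1}Y(x,t)^\T B^\T \nabla V(x(t))$. I would introduce the composite function
\[
W(x,\hat a) := V(x) + \tfrac12 \tilde a^\T M \tilde a \:,
\]
which is nonnegative since $V$ is positive definite and $M \succ 0$, and differentiate it along closed-loop trajectories. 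The point is that $\frac{d}{dt}V(x)$ produces the cross term $\ip{\nabla V(x)}{BY(x,t)\tilde a}$, which is cancelled exactly by $\tilde a^\T M\dot{\tilde a} = -\tilde a^\T Y(x,t)^\T B^\T\nabla V(x)$ (the two are transposes of the same scalar); what survives is $\dot W = \ip{\nabla V(x)}{f(x)} \le -\rho(x) \le 0$.

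From $\dot W \le 0$ the function $W$ is nonincreasing, so $V(x(t)) \le W(0)$ and $\tfrac12\tilde a(t)^\T M\tilde a(t) \le W(0)$ for all $t$; hence $\hat a(t)$ is bounded and --- using that $V$ is radially unbounded, as holds for the learned certificates considered here (otherwise one gets only a semiglobal statement) --- $x(t)$ is bounded, which also rules out finite escape so the closed-loop solution is defined on $[0,\infty)$. Integrating the inequality gives $\int_0^\infty \rho(x(t))\,dt \le W(0) < \infty$. On the compact set containing the trajectory, $f(x)$, $Y(x,t)$ (by local boundedness in $x$ uniform in $t$) and $\hat a(t)$ are all bounded, so $\dot x(t)$ is bounded; thus $t\mapsto x(t)$ is Lipschitz and, since $\rho$ is $C^1$, $t\mapsto \rho(x(t))$ is uniformly continuous. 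Barbalat's lemma then yields $\rho(x(t))\to 0$, and positive definiteness of $\rho$ together with precompactness of the trajectory forces $x(t)\to 0$.

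For $\dot x(t)\to 0$ I would argue as follows. Since $x(t)\to 0$, continuity of $\nabla V$ and $\nabla V(0)=0$ give $\nabla V(x(t))\to 0$, hence $\dot{\hat a}(t)\to 0$; and $\int_0^t \dot x = x(t)-x(0)$ converges as $t\to\infty$. A second application of Barbalat to $\dot x$ then gives $\dot x(t)\to 0$, provided $\dot x$ is uniformly continuous, i.e.\ provided $\ddot x$ is bounded. In $\ddot x = \frac{\p f}{\p x}(x)\dot x + B(\dot Y\tilde a + Y\dot{\tilde a})$ every factor has already been shown bounded except $\dot Y$, which is bounded under mild regularity of $Y$ in $t$ (for instance the sinusoidal disturbance model used in the experiments).

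The main obstacle I anticipate is precisely this last uniform-continuity bookkeeping: the conclusion $x(t)\to 0$ is essentially forced once the cancellation identity $\dot W = \ip{\nabla V(x)}{f(x)}$ is established, but getting $\dot x(t)\to 0$ cleanly under only the stated ``locally bounded'' hypothesis on $Y$ requires either a mild extra assumption on the $t$-dependence of $Y$ or a more careful argument; the rest is routine.
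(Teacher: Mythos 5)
Your proof follows essentially the same route as the paper's: form the composite Lyapunov function $W(x,\hat a) = V(x) + \tfrac12(\hat a - a)^\T M (\hat a - a)$, observe the exact cancellation of the cross term so that $\dot W = \ip{\nabla V(x)}{f(x)} \le -\rho(x)$, integrate to obtain $\rho(x(\cdot)) \in \mathcal{L}_1$, and close with Barbalat's lemma. Your two side remarks are also correct criticisms of the paper's own write-up: the paper passes from boundedness of $V(x(t))$ to boundedness of $x(t)$ without stating radial unboundedness of $V$, and its final sentence infers $\dot x \to 0$ from positive definiteness of $\rho$ alone, whereas — as you note — this step needs an additional argument (uniform continuity of $\dot x$, hence boundedness of $\ddot x$, hence a mild regularity assumption on the $t$-dependence of $Y$, which holds for the sinusoidal disturbance model used in the experiments).
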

\begin{proof}
Let $\ip{x}{y}_M = x^\T M y$
and $\norm{x}_M^2 = \ip{x}{x}_M$.
Define the new candidate Lyapunov function
\begin{align}
    \bar{V}(t) := V(t) + \frac{1}{2} \norm{\hat{a} - a}_M^2 \:.
\end{align}
Let $\tilde{a} := \hat{a} - a$.
Differentiating $\bar{V}$ with respect to time:
\begin{align*}
    \dot{\bar{V}} &= \ip{\nabla V(x)}{f(x) + B(u - Y a)} + \ip{\tilde{a}}{\dot{\hat{a}}}_M \\
    &= \ip{\nabla V(x)}{f(x)} + \ip{\nabla V(x)}{B Y \tilde{a}} + \ip{\tilde{a}}{\dot{\hat{a}}}_M \\
    &= \ip{\nabla V(x)}{f(x)} + \ip{Y^\T B^\T \nabla V(x)}{ \tilde{a}} + \ip{\tilde{a}}{\dot{\hat{a}}}_M \\
    &= \ip{\nabla V(x)}{f(x)} + \ip{M^{-1} Y^\T B^\T \nabla V(x)}{ \tilde{a}}_M + \ip{\tilde{a}}{\dot{\hat{a}}}_M \\
    &= \ip{\nabla V(x)}{f(x)}\\
    &\leq -\rho(x) \:.
\end{align*}
Since $-\rho(x) < 0$ for all $x \neq 0$,
this shows that $\bar{V}$ is bounded for all $t$,
which implies both that $V$ is bounded and that
$\tilde{a}$ is bounded for all $t$.
Since $V$ is positive definite, $V$ bounded implies that
$x$ is bounded. Integrating the above inequality shows that
\begin{equation*}
    \int_0^\infty \rho(x(\tau))d\tau \leq \bar{V}(0) \:,
\end{equation*}
so that $\rho \in \mathcal{L}_1$. Now, $\dot{\rho} = \ip{\nabla \rho}{f(x) + BY\tilde{a}}$. By continuity of $\nabla \rho$, $f$, and $Y$, and by boundedness of $\tilde{a}$, $\dot{\rho}$ is bounded. Hence $\rho$ is uniformly continuous, and by Barbalat's Lemma (see e.g. Lemma 4.2 of~\citet{slot_li_book}) $\rho \rightarrow 0$. By positive definiteness of $\rho$, $\rho \rightarrow 0$ implies that $x\rightarrow 0$ and $\dot{x} \rightarrow 0$.
\end{proof}

\subsection{Minitaur}

We collect $50000$ random training trajectories
and $10000$ random test trajectories
using the same distribution over the impulse force.
For each trajectory, we step the simulator $200$
times at $dt = 0.002$. The state dimension
excluding the base orientation is $16$.

The PD controller
is able to return the joint angles and velocities (excluding the base orientation)
to their original standing position up to a small
bias of size $\sim 0.2$ in $\ell_2$-norm.
Therefore, we train a discrete-time
Lyapunov function $V_\theta$ to satisfy
$V_\theta(e_i(k+1)) \leq \rho V_\theta(e_i(k)) + \gamma$ where $e_i(k)$ is the error state of the $i$-th trajectory at the $k$-th step.
The specific values we use are $(\rho, \gamma) = (0.945, 0.025)$.
The extra slack term $\gamma$ is necessary
for the Lyapunov function to converge to a ball
instead of zero.

We use
a hidden width of $h=40$ and minimize the loss
$L_n(\theta) = \sum_{i=1}^{n} \sum_{k=1}^{199} \mathsf{ReLU}( V_\theta(e_i(k+1)) - \rho V_\theta(e_i(k)) - \gamma) + \lambda \norm{\theta}^2$,
setting $\lambda = 0.01$.
The loss is minimized for $1000$ epochs with Adam using
a step size $10^{-3}$ with cosine decay\footnote{See \url{https://www.tensorflow.org/api_docs/python/tf/compat/v1/train/cosine_decay}.}
and a batch size of $1000$.

\subsection{6-dimensional gradient system}
\label{app:grad}
We parametrize the metric via monomials up to degree two in the state variables. We enforce global positive definiteness $M(x) \geq \mu I$ via SOS matrix constraints and set $\mu = 1$. We use a tolerance of $5\times 10^{-3}$ for the size of each perturbation $\delta x$. A pair of trajectories $\varphi_t(x_1)$, $\varphi_t(x_2)$ with $x_2 = x_1 + \delta x$ is considered to generate a trajectory $\delta x(t) = \varphi_t(x_2) - \varphi_t(x_1)$ if $\Vert\delta x(t)\Vert < 10^{-2}$ for all $t$, so that a small overshoot is permitted. Pairs of trajectories not satisfying this requirement are discarded until the desired number of training samples is reached. The size of this overshoot parameter sets the maximum allowed $L$ where $M(x) \leq L$ for any metric learned, as we impose $M(x) \geq I$. In general, the overshoot with respect to the Euclidean norm is given by $\sqrt{\frac{L}{l}}$ for $l I \leq M(x) \leq L I$. In practice, we require the maximum bound on $\Vert \delta x(t)\Vert$ to be sufficiently small that the dynamics of $\varphi_t(x_1) - \varphi_t(x_2)$ well-approximates the variational system on the trajectory $\varphi_t(x_1)$. To search for metrics with larger values of $\frac{L}{l}$, we can vary $\mu$, $\Vert \delta x\Vert$, and the maximum allowed $\Vert\delta x(t)\Vert$ while ensuring $\Vert\delta x(t)\Vert$ remains small throughout its entire trajectory.

Each trajectory is simulated until $T=2$ seconds with a timestep $dt = 5\times 10^{-3}$. Because we are interested in convergence of the variational dynamics, we use a small time horizon. This generates a dataset of size $n \times 400 \times 12$ where $n$ is the number of trajectories and $12$ is the dimension of the tangent bundle. We subsequently downsample and impose $25$ differential Lyapunov constraints along each trajectory. We search for a metric with a rate $\lambda = 4$. Initial conditions are drawn uniformly from the ball of radius $r=3$.

The time derivatives $\dot{x}$ and $\delta\dot{x}$ are computed numerically by fitting a cubic spline to the corresponding trajectories and analytically differentiating the spline. $M(x)$ is found by minimizing $\Vert w\Vert^2$ where $w$ is a vector containing all parameters. The test set is of size $1000$ and each data point in Figure~\ref{fig:lyap_exps} (LR) was computed by averaging over $25$ independent draws of the training set. The RCP bound is obtained with a confidence of $\delta = .01$.

\subsubsection{Randomized convex programs}

Consider the following optimization problem:
\begin{align}
    \min_{x \in X} \ip{c}{x} : f(x, \theta) \leq 0 \:\:  \forall \theta \in \Theta \:. \label{eq:rcp_problem}
\end{align}
We assume that $X \subseteq \R^d$ is a convex set and
$x \mapsto f(x, \theta)$ is convex for every $\theta \in \Theta$.
In the case where $\Theta$ is an infinite (or very large) set, we consider approximations to
\eqref{eq:rcp_problem} formulated as follows.
Let $\nu$ denote a distribution over $\Theta$.
Let $\theta_1, ..., \theta_n$ be i.i.d.\ samples
from $\nu$.
Let $\hat{x}_n$ denote a solution to:
\begin{align}
    \min_{x \in X} \ip{c}{x} : f(x, \theta_i) \leq 0 \:, \:\: i = 1, ..., n \:.
\end{align}

\begin{thm}[See e.g. Theorem 3.1 of \citet{calafiore10rcp}]
\label{thm:rcp}
Fix any $\varepsilon \in [0, 1]$.
Define $\beta(\varepsilon)$ as:
\begin{align*}
    \beta(\varepsilon) := \sum_{i=0}^{d-1} { n \choose i } \varepsilon^{i} (1 - \varepsilon)^{n-i} \:.
\end{align*}
With probability at least $1 - \beta(\varepsilon)$
over $\theta_1, ..., \theta_n$,
we have that:
\begin{align*}
    \Pr_{\theta \sim \nu}( f(\hat{x}_n, \theta) > 0) \leq \varepsilon \:.
\end{align*}
\end{thm}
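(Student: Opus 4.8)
Since the statement is quoted verbatim from \citet{calafiore10rcp}, the natural proposal is to reconstruct the \emph{scenario-optimization} argument built on support constraints; a fully self-contained proof would ultimately defer to that reference. The proof has two ingredients: a deterministic, Helly-type bound on how many sampled constraints actually pin down the optimizer, and a combinatorial exchangeability argument over which samples those are.

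\textbf{Reduction and support constraints.} First I would pass to the non-degenerate case by assuming that for every $S\subseteq\{1,\dots,n\}$ the restricted program $\min_{x\in X}\ip{c}{x}$ subject to $\{f(x,\theta_i)\le 0:i\in S\}$ is feasible with a unique minimizer $x_S$; if uniqueness fails one breaks ties by a fixed rule (e.g.\ minimizing $\norm{x}$), and the general statement follows by a standard regularization/limiting argument. Say $i$ is a \emph{support constraint} of the $n$-scenario program if $x_{\{1,\dots,n\}\setminus\{i\}}\neq x_{\{1,\dots,n\}}$, i.e.\ deleting it strictly improves the objective. The key deterministic fact --- which I would import directly from \citet{calafiore10rcp} --- is that the (random) set $J$ of support constraints obeys $|J|\le d$ almost surely; this is the Helly-type statement for convex programs in $\R^d$. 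Moreover $\hat x_n = x_J$, since removing all non-support constraints does not move the (unique) optimizer.

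\textbf{Exchangeability argument.} Write $V(x):=\Pr_{\theta\sim\nu}(f(x,\theta)>0)$; the goal is to bound $\Pr(V(\hat x_n)>\varepsilon)$. Decomposing over the value of $J$,
\begin{align*}
\Pr\!\left(V(\hat x_n)>\varepsilon\right) = \sum_{I:\,|I|\le d}\Pr\!\left(J=I,\ V(x_I)>\varepsilon\right),
\end{align*}
and since $\theta_1,\dots,\theta_n$ are i.i.d.\ (hence exchangeable), each summand depends on $I$ only through its cardinality. Fixing the canonical set $I_k:=\{1,\dots,k\}$ and conditioning on $\theta_1,\dots,\theta_k$ (which determines $x_{I_k}$), the event $\{J=I_k\}$ forces the remaining scenarios $\theta_{k+1},\dots,\theta_n$ to satisfy $f(x_{I_k},\cdot)\le 0$; as these are independent given $\theta_1,\dots,\theta_k$, that conditional probability is at most $(1-V(x_{I_k}))^{n-k}\le(1-\varepsilon)^{n-k}$ on $\{V(x_{I_k})>\varepsilon\}$. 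Summing over the $\binom{n}{k}$ choices of $I$ already yields a bound of the right order, $\Pr(V(\hat x_n)>\varepsilon)\le\sum_{k=0}^{d}\binom{n}{k}(1-\varepsilon)^{n-k}$.

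\textbf{Sharpening to $\beta(\varepsilon)$, and the main obstacle.} The union bound above is slightly loose; recovering the stated $\beta(\varepsilon)=\sum_{i=0}^{d-1}\binom{n}{i}\varepsilon^{i}(1-\varepsilon)^{n-i}$ requires the refined accounting of \citet{calafiore10rcp}, which in the fully-supported regime identifies the law of $V(\hat x_n)$ exactly as $\mathrm{Beta}(d,\,n-d+1)$ --- independent of $\nu$ and $f$ --- and then uses the Beta--Binomial tail identity $\Pr(\mathrm{Beta}(d,n-d+1)>\varepsilon)=\beta(\varepsilon)$; dropping the non-degeneracy hypotheses only makes $V(\hat x_n)$ stochastically smaller. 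The gain over the crude estimate comes from the observation that each sample path has a \emph{single} well-defined support set, so one is really controlling a specific order statistic of the violation probabilities rather than unioning over all $\binom{n}{k}$ subsets. I expect this precise bookkeeping --- together with the rigorous treatment of degeneracy in the reduction step --- to be the main obstacle; since both are carried out completely in \citet{calafiore10rcp}, for our purposes we simply invoke that theorem.
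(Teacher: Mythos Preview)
The paper does not prove this theorem at all: it is stated as a direct citation of Theorem~3.1 in \citet{calafiore10rcp} and invoked as a black box (the only additional content is the remark that inverting $\beta(\varepsilon)=\delta$ numerically yields the RCP curve, together with a Chernoff estimate on the resulting $\varepsilon$). Your proposal correctly recognizes this and then goes further, giving an accurate outline of the scenario-approach proof --- the Helly-type bound $|J|\le d$ on support constraints, the exchangeability decomposition over which indices are supporting, and the sharpening via the $\mathrm{Beta}(d,n{-}d{+}1)$ identification and Beta--Binomial tail identity --- before deferring the technical details to the cited reference, exactly as the paper does.
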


We use Theorem~\ref{thm:rcp} as follows.
We fix a failure probability $\delta \in (0, 1)$.
We then numerically solve for $\varepsilon$
such that $\beta(\varepsilon) = \delta$.

To understand the scaling of $\varepsilon$ as a function
of $n$ and $\delta$,
despite the lack of closed form expression, consider the following.
If $n \geq d$ then
by a Chernoff bound on the CDF of a Binomial random variable~(c.f.\ Section 5 of \citet{calafiore10rcp}), we can derive the upper bound
\begin{align*}
    \varepsilon \leq c \frac{(d - 1 + \log(1/\delta))}{n} \:,
\end{align*}
where $c$ is a universal constant.

\subsection{Van der Pol}
\label{app:vdp}
The study of the Van der Pol (VDP) oscillator was foundational to the development of nonlinear dynamics~\citep{strogatz}, and its global contraction properties have been analyzed algoithmically via SOS programming~\citep{sos_contr}. We study the damped VDP to visualize the violation set for the metric condition categorized in Theorem~\ref{thm:global_diff_lyap}. The dynamics of the damped Van der Pol are given by
\begin{equation*}
    \ddot{x} + \alpha(x^2 + k)\dot{x} + \omega^2x = 0 
\end{equation*}
We set $\alpha = k = \omega = 1$.

We parameterize the metric via monomials up to degree four in the state variables. Similar to the gradient system, we set $M(x) \geq I$ and use a tolerance $\Vert \delta x\Vert \leq 5\times 10^{-3}$, $\Vert \delta x(t) \Vert \leq 10^{-2}$ for all $t \in [0, T]$. We use a timestep $dt = 5\times 10^{-3}$ and simulate until a final time $T = 3$ seconds. This generates a dateset of size $n \times 600 \times 4$ where $4$ is the dimension of the tangent bundle and $n$ is the number of training trajectories.

We subsequently downsample and impose $50$ differential Lyapunov constraints along each trajectory with a rate $\lambda = 3/4$. Initial conditions are drawn uniformly from a ball of radius $r = 2$. The same techniques are used for numerical differentiation as for the gradient system.

Theorem~\ref{thm:global_diff_lyap} predicts that the size of the violation set will decrease as the rate tested for the metric condition $\eta\lambda$ with $0 < \eta < 1$ decreases, or as the number of training samples $n$ increases. In both Figure~\ref{fig:vdp_pp} and Figure~\ref{fig:vdp_pp_2}, we use a uniform grid with $9\times 10^4$ points over $[-2, 2]\times [-2, 2]$ to test the metric condition $\frac{\p f}{\p x}(x)^\T M(x) + M(x)\frac{\p f}{\p x}(x) + \dot{M}(x) \leq -2\eta\lambda$ for the learned metric and the true dynamics.

In Figure~\ref{fig:vdp_pp}, we plot the violation set for fixed $\eta = 1$ as a function of the number of training samples. As discussed in the main text, the size of the violation set decreases, and it is pushed to the boundary of the sampled region as $n$ increases.

In Figure~\ref{fig:vdp_pp_2}, we plot the violation set as a function of $\eta$. As $\eta$ decreases to zero, the size of the violation set decreases and is pushed to the boundary of the sampled region.

\begin{figure}[ht]
    \centering
    \includegraphics[width=.45\textwidth]{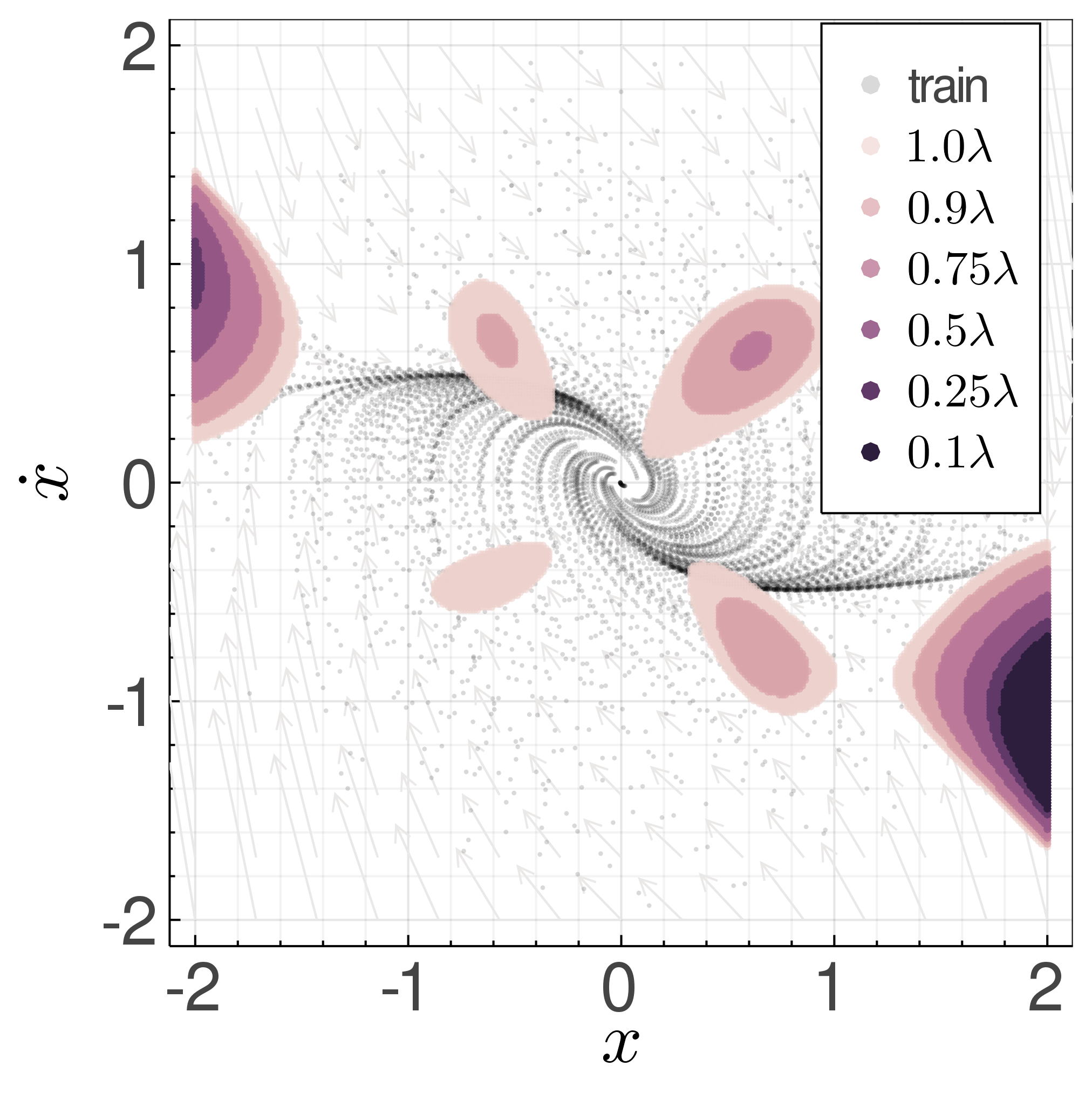}
    \caption{Violation set for contraction metric learning as a function of the tested contraction rate $\eta \lambda$ with $0 < \eta < 1$ for the damped VDP system.}
    \label{fig:vdp_pp_2}
\end{figure}

\section{Proofs for Section~\ref{sec:local:bounds}}
\label{sec:app:local}

Recall that Dudley's inequality gives us the
following estimate on $\calR_n(\calV)$:
\begin{align}
    \calR_n(\calV) \leq \frac{24 L_h}{\sqrt{n}} \int_0^\infty \sqrt{\log N(\varepsilon; \calV, \norm{\cdot}_{\calV})} \: d\varepsilon \:. \label{eq:dudley}
\end{align}

\subsection{Proof of Theorem~\ref{thm:gen_bound_linear}}

For every $V_{\theta_1}, V_{\theta_2} \in \calV$,
$\norm{ V_{\theta_1} - V_{\theta_2} }_{\calV} \leq (L_g + L_{\nabla g}) \norm{\theta_1 - \theta_2}$.
Furthermore, a standard volume comparison
argument tells us that
$\log{N(\varepsilon; \mathbb{B}_2^k(1), \norm{\cdot})} \leq k \log(1 + 2/\varepsilon)$,
where $\mathbb{B}_2^k(1)$ is the closed $\ell_2$-ball in $\R^k$ of radius $1$.
Therefore, by \eqref{eq:dudley}:
\begin{align*}
    \calR_n(\calV) &\leq \frac{24 B_\theta (L_g + L_{\nabla g})L_h }{\sqrt{n}} \int_0^\infty \sqrt{\log N(\varepsilon; \mathbb{B}_2^k(1), \norm{\cdot})} \: d\varepsilon \leq O(1) B_\theta (L_g + L_{\nabla g}) L_h \sqrt{\frac{k}{n}} \:.
\end{align*}
Theorem~\ref{thm:gen_bound_linear} now follows
from Lemma~\ref{lemma:fast_rate}.

\subsection{Proof of Theorem~\ref{thm:gen_bound_sos}}

A simple calculation shows that
$\norm{V_{Q_1} - V_{Q_2}}_{\calV} \leq (B_m^2 + 2 B_{Dm} B_m) \norm{Q_1 - Q_2}$.
Hence by \eqref{eq:dudley},
\begin{align*}
    \calR_n(\calV) \leq \frac{24B_Q (B_m^2 + 2 B_{Dm} B_m) L_h}{\sqrt{n}} \int_0^\infty \sqrt{ \log N(\varepsilon; \mathbb{B}_2^{d \times d}(1), \norm{\cdot}) } \: d\varepsilon \:.
\end{align*}
Here, $\mathbb{B}_2^{d \times d}(1)$ is the closed ball of $d \times d$ matrices
with Frobenius norm bounded by $1$, and $\norm{\cdot}$ for matrices
denotes the operator norm.
The metric entropy $\log{N(\varepsilon;\mathbb{B}_2^{d \times d}(1), \norm{\cdot})}$ can
be bounded by the minimum of the standard
volume comparison bound and applying the dual Sudakov inequality (see e.g. Theorem 2.2 of \citet{vershynin09gfa}):
\begin{align*}
    \log{N(\varepsilon;\mathbb{B}_2^{d \times d}(1), \norm{\cdot})} \leq \min\{ d^2 \log(1+2/\varepsilon), c d/\varepsilon^2 \} \:.
\end{align*}
Here, $c$ is an absolute constant.
By integrating this estimate we arrive at the bound:
\begin{align}
    \calR_n(\calV) \leq O(1) B_Q (B_m^2 + B_{D_m} B_m)L_h  \sqrt{\frac{d}{n}} \log{d} \:.
\end{align}
Theorem~\ref{thm:gen_bound_sos} now follows
from Lemma~\ref{lemma:fast_rate}.

\subsection{Proof of Theorem~\ref{thm:gen_bound_rkhs}}

Define the function classes $\calF(B)$ and $\hat{\calF}(B, \{\theta_k\})$ as:
\begin{align}
    \calF(B) &:= \left\{ f(x) = \int_{\Theta} \alpha(\theta) \phi(x; \theta) \: d\theta : \norm{f}_\nu := \sup_{\theta \in \Theta} \bigabs{\frac{\alpha(\theta)}{\nu(\theta)}} \leq B \right\} \:, \\
    \hat{\calF}(B, \{\theta_k\}_{k=1}^{K}) &:= \left\{ f(x) = \sum_{k=1}^{K} c_k \phi(x; \theta_k) : \norm{c}_1 \leq B \right\} \:.
\end{align}

The following is a simple modification
of Theorem 3.2 from \citet{rahimi08uniform},
which also accounts for uniform approximation
of the derivatives.
\begin{lem}
\label{lem:function_finite_approx}
Fix a $B > 0$ and a bounded space $X \subseteq \R^p$.
Let $B_X := \sup_{x \in X} \norm{x}$.
Suppose that $\Theta\subseteq \R^p \times \R$
with $B_\theta := \sup_{\theta \in \Theta} \norm{\theta}$.
Furthermore, suppose that
$\phi(x; \theta) = \phi(\ip{x}{w} + b)$,
$\phi$ is $L_\phi$-Lipschitz, and $\abs{\phi} \leq 1$.
Fix a $f \in \calF(B)$ and $\delta \in (0, 1)$.
Let $\theta_1, ..., \theta_K$ be i.i.d. draws from $\nu$.
With probability at least $1-\delta$,
there exists a $\hat{f} \in \hat{\calF}(\norm{f}_\nu, \{\theta_k\}_{k=1}^{K})$ such that:
\begin{align}
    \sup_{x \in X} \abs{\hat{f}(x) - f(x)} \leq \frac{2\norm{f}_\nu}{\sqrt{K}} \left( 1 + \sqrt{\log(1/\delta)} + 2 L_\phi\left(B_X \sqrt{\E_\nu \norm{w}^2} + \sqrt{\E_\nu \abs{b}^2} \right) \right) \:. \label{eq:approx_f}
\end{align}
Furthermore, now assume that $\phi$ is differentiable
and $\phi'$ is $L_{\phi'}$-Lipschitz.
Then every $f \in \calF(B)$ is differentiable with
\begin{align}
\nabla f(x) = \int_\Theta \alpha(\theta) \nabla \phi(x; \theta) \: d\theta \: \label{eq:grad_f_expr} \:.
\end{align}
Finally, with probability at least $1-2\delta$,
there exists a $\hat{f} \in \hat{\calF}(\norm{f}_\nu, \{\theta_k\}_{k=1}^{K})$
such that \eqref{eq:approx_f} holds
and also:
\begin{align}
    \sup_{x \in X} \norm{\nabla \hat{f}(x) - \nabla f(x)} \leq \frac{B_\theta \norm{f}_\nu}{\sqrt{K}} \left( L_\phi \sqrt{2 \log(1/\delta)} + 4 (L_\phi + B_\theta L_{\phi'})(B_X + 1) \sqrt{p} \right) \:. \label{eq:approx_grad_f}
\end{align}
\end{lem}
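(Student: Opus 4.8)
Write $\beta(\theta):=\alpha(\theta)/\nu(\theta)$, so that $f(x)=\E_{\theta\sim\nu}[\beta(\theta)\phi(x;\theta)]$ with $\sup_{\theta}|\beta(\theta)|\le\norm{f}_\nu$, and, given the i.i.d.\ draws $\theta_1,\dots,\theta_K\sim\nu$, set $\hat f(x):=\frac1K\sum_{k=1}^K\beta(\theta_k)\phi(x;\theta_k)$. Then $\hat f\in\hat\calF(\norm f_\nu,\{\theta_k\})$, since its coefficients $c_k=\beta(\theta_k)/K$ obey $\norm c_1=\frac1K\sum_k|\beta(\theta_k)|\le\norm f_\nu$, and $\E[\hat f(x)]=f(x)$ for every fixed $x$; the \emph{same} random $\hat f$ will be used for both \eqref{eq:approx_f} and \eqref{eq:approx_grad_f}, so the final claim follows by a union bound over the two failure events. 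For each of the two statements I would use the standard two-step recipe of \citet{rahimi08uniform}: concentrate a supremum around its mean by bounded differences, then bound the mean by symmetrization.

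\textbf{Function values.} For \eqref{eq:approx_f}, note that replacing a single $\theta_k$ changes $\hat f(x)$ by at most $2\norm f_\nu/K$ (using $|\phi|\le1$), hence changes $\sup_{x\in X}|\hat f(x)-f(x)|$ by at most $2\norm f_\nu/K$; McDiarmid's inequality then gives, with probability at least $1-\delta$, that $\sup_x|\hat f(x)-f(x)|\le\E\sup_x|\hat f(x)-f(x)|+\norm f_\nu\sqrt{2\log(1/\delta)/K}$, which is the $\sqrt{\log(1/\delta)}$ term. To bound the mean I would symmetrize, $\E\sup_x|\hat f(x)-f(x)|\le2\,\E_{\theta,\varepsilon}\sup_x\bigl|\frac1K\sum_k\varepsilon_k\beta(\theta_k)\phi(x;\theta_k)\bigr|$, and split $\phi(x;\theta_k)=\phi(b_k)+\bigl(\phi(\ip x{w_k}+b_k)-\phi(b_k)\bigr)$: the constant part is a one-point Khintchine bound contributing $\le2\norm f_\nu/\sqrt K$ (the ``$1$'' term), while the increment part is, conditionally on the $\theta_k$, a mean-zero process in $x$ vanishing at $x=0$ with sub-Gaussian increments of scale $\tfrac{L_\phi}{\sqrt K}\bigl(\tfrac1K\sum_k\norm{w_k}^2\bigr)^{1/2}\norm{x-x'}$, so Dudley's entropy bound over $X\subseteq\mathbb{B}_2^p(B_X)$, together with $\E\bigl(\tfrac1K\sum_k\norm{w_k}^2\bigr)^{1/2}\le\sqrt{\E_\nu\norm w^2}$ (and the analogous estimate for the $b_k$), yields the $2L_\phi(B_X\sqrt{\E_\nu\norm w^2}+\sqrt{\E_\nu|b|^2})$ term.

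\textbf{Differentiability and gradients.} Since $\phi$ is $L_\phi$-Lipschitz, $\norm{\nabla_x\phi(x;\theta)}=|\phi'(\ip xw+b)|\,\norm w\le L_\phi\norm w$, which is uniformly integrable against $|\alpha(\theta)|\,d\theta=|\beta(\theta)|\,d\nu(\theta)$; differentiating under the integral sign gives \eqref{eq:grad_f_expr}, and similarly $\nabla\hat f(x)=\frac1K\sum_k\beta(\theta_k)\phi'(\ip x{w_k}+b_k)w_k$ with $\E[\nabla\hat f(x)]=\nabla f(x)$. I would then rerun the recipe on the vector-valued process $x\mapsto\nabla\hat f(x)-\nabla f(x)$: a single $\theta_k$ moves $\nabla\hat f(x)$ by at most $\tfrac{2\norm f_\nu L_\phi\norm{w_k}}K\le\tfrac{2\norm f_\nu L_\phi B_\theta}K$ (using $\norm{w_k}\le\norm{\theta_k}\le B_\theta$), so McDiarmid supplies the $B_\theta\norm f_\nu L_\phi\sqrt{2\log(1/\delta)}/\sqrt K$ term; symmetrization plus Dudley over $X$ supplies the remainder, using that $x\mapsto\phi'(\ip x{w_k}+b_k)w_k$ is $L_{\phi'}\norm{w_k}^2$-Lipschitz and bounded by $L_\phi\norm{w_k}$, that the entropy integral of $\mathbb{B}_2^p(B_X)$ contributes the $\sqrt p$ factor, that combining the variation with a base-point evaluation produces the $(B_X+1)$, and that $\norm{w_k}\le B_\theta$ converts the random scales into the stated deterministic constant $4(L_\phi+B_\theta L_{\phi'})(B_X+1)\sqrt p$. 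A union bound over the two $\delta$-events completes the proof.

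\textbf{Main obstacle.} The only genuinely nonroutine step is the bound on the expected supremum of the (vector-valued) gradient empirical process: getting symmetrization, the Ledoux--Talagrand contraction / Dudley entropy machinery, and the bookkeeping of $L_\phi$, $L_{\phi'}$, $B_\theta$, $B_X$, and $p$ to line up with the claimed constants. By contrast, the McDiarmid concentration steps, the check that $\hat f\in\hat\calF(\norm f_\nu,\{\theta_k\})$, and the justification for differentiating under the integral sign are all straightforward.
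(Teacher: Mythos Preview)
Your overall architecture---define $\hat f$ via importance weights $c_k=\beta(\theta_k)/K$, concentrate $\sup_x|\hat f-f|$ around its mean by McDiarmid, bound the mean by symmetrization, and union-bound the two events---matches the paper exactly, as does the dominated-convergence justification of \eqref{eq:grad_f_expr}.

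The divergence, and the one place where your sketch does not quite deliver the stated bound, is in how you control the symmetrized expectation for the \emph{function values}. You propose Dudley's entropy integral over $X\subseteq\mathbb{B}_2^p(B_X)$; chaining over a $p$-dimensional Euclidean ball produces a $\sqrt p$ factor, which is \emph{absent} from \eqref{eq:approx_f}. The paper avoids this by using the Ledoux--Talagrand contraction inequality instead: centering at $\phi(0)$ (rather than your $\phi(b_k)$), the maps $z\mapsto c_k(\phi(z)-\phi(0))$ vanish at $0$ and are $\|f\|_\nu L_\phi/K$-Lipschitz, so contraction reduces the Rademacher supremum to the \emph{linear} process $\sup_{x\in X}\bigl|\sum_k\varepsilon_k(\langle x,w_k\rangle+b_k)\bigr|\le B_X\bigl\|\sum_k\varepsilon_k w_k\bigr\|+\bigl|\sum_k\varepsilon_k b_k\bigr|$, whose expectation is bounded by $\sqrt{K\,\E\|w\|^2}$ and $\sqrt{K\,\E|b|^2}$ via Jensen---dimension-free, and producing exactly the two terms in \eqref{eq:approx_f}. (Your split at $\phi(b_k)$, if you had followed it with contraction rather than Dudley, would actually drop the $\sqrt{\E|b|^2}$ term and give a slightly sharper bound; your parenthetical ``analogous estimate for the $b_k$'' suggests you were implicitly thinking of the paper's split.)

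For the gradient bound \eqref{eq:approx_grad_f}, which \emph{does} carry a $\sqrt p$, your Dudley-over-$X$ plan is viable in principle, but the paper again uses contraction: it rewrites $\sup_x\|\nabla\hat f-\nabla f\|=\sup_{x\in X,\,\|q\|\le 1}\langle q,\nabla\hat f(x)-\nabla f(x)\rangle$, symmetrizes, verifies that $(x,q)\mapsto c_k\,\phi'(\langle x,w_k\rangle+b_k)\langle q,w_k\rangle$ is Lipschitz on $X\times\mathbb{B}_2^p(1)$ with constant $\frac{\|f\|_\nu}{K}B_\theta(L_\phi+B_\theta L_{\phi'})\sqrt 2$, and then applies Maurer's vector-valued contraction theorem to reduce to $\E_\varepsilon\sup_{x,q}\sum_k\bigl\langle\varepsilon_k,(x,q)\bigr\rangle\le(B_X+1)\sqrt{Kp}$. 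Either route works here; the paper's makes the constants easier to track and cleanly explains the $(B_X+1)\sqrt p$ shape.
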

\begin{proof}
Following the proof of Theorem 3.2
of \citet{rahimi08uniform},
we set $c_k = \frac{\alpha(\theta_k)}{K \nu(\theta_k)}$,
and we define
$v(\theta_1, ..., \theta_K) = \norm{ \hat{f} - f }_\infty$.
It is shown in \cite{rahimi08uniform} that
for all $\theta_1, ..., \theta_K, \theta'_k \in \Theta$:
\begin{align*}
    \abs{v(\theta_1, ..., \theta_k, ..., \theta_K) - v(\theta_1, ..., \theta'_k, ..., \theta_K)} \leq \frac{2\norm{f}_\nu}{K} \:.
\end{align*}
Next, we control the expected value of $\nu$:
\begin{align*}
    \E v(\theta_1, ..., \theta_k) &= \E \sup_{x \in X} \abs{\hat{f}(x) - \E \hat{f}(x) } \\
    &\leq 2 \E_\theta \E_\varepsilon \sup_{x \in X} \bigabs{\sum_{k=1}^{K} \varepsilon_k c_k \phi(\ip{x}{w_k} + b_k)} \\
    &= 2 \E_\theta \E_\varepsilon \sup_{x \in X} \bigabs{\sum_{k=1}^{K} \varepsilon_k c_k ( \phi(\ip{x}{w_k} + b_k) - \phi(0)) + \sum_{k=1}^{K} \varepsilon_k c_k \phi(0) } \\
    &\leq 2 \E_\theta\E_\varepsilon \sup_{x \in X} \bigabs{ \sum_{k=1}^{K} \varepsilon_k c_k ( \phi(\ip{x}{w_k} + b_k) - \phi(0)) } + 2 \E_\theta \E_\varepsilon \bigabs{ \sum_{k=1}^{K} \varepsilon_k c_k \phi(0) } \\
    &=: T_1 + T_2 \:.
\end{align*}
Above, the first inequality is a standard
symmetrization argument where the Rademacher
variables $\{\varepsilon_k\}$ are introduced,
and the second inequality is the triangle inequality.
We first bound $T_1$.
Set $\psi_k(z) := c_k( \phi(z) - \phi(0))$.
Clearly $\psi_k(0) = 0$, and
also $\psi_k$ is $c_k L_\phi$-Lipschitz.
We bound $c_k L_\phi \leq \frac{\norm{f}_\nu L}{K}$.
Therefore by
the contraction inequality for Rademacher
complexities~(Theorem 4.12 of \citet{ledoux91book}) followed by Jensen's inequality:
\begin{align*}
    T_1 &\leq \frac{4 \norm{f}_\nu L}{K} \E_\theta\E_\varepsilon \sup_{x \in X} \bigabs{\sum_{k=1}^{K} \varepsilon_k (\ip{x}{w_k} + b_k)} \\
    &\leq \frac{4\norm{f}_\nu L B_X}{K} \E_\theta\E_\varepsilon \bignorm{ \sum_{k=1}^{K} \varepsilon_k w_k }  + \frac{4\norm{f}_\nu L}{K} \E_\theta \E_\varepsilon \bigabs{\sum_{k=1}^{K} \varepsilon_k b_k } \\
    &\leq \frac{4 \norm{f}_\nu L (B_X \sqrt{\E \norm{w_1}^2} + \sqrt{\E \abs{b_1}^2})}{\sqrt{K}} \:.
\end{align*}
Furthermore we can bound
$T_2 \leq \frac{2 \norm{f}_\nu}{\sqrt{K}}$ by similar
arguments.
The claim \eqref{eq:approx_f} now follows by invoking McDiarmid's inequality.

We note that \eqref{eq:grad_f_expr}
follows from a basic application
of the dominated convergence theorem,
since we have that:
\begin{align*}
    \E_{\theta \sim \nu} \sup_{x \in X} \norm{\nabla \phi(x; \theta)} < \infty \:.
\end{align*}

Finally, we focus on the derivative condition \eqref{eq:approx_grad_f}.
Let $g(\theta_1, ..., \theta_K) := \sup_{x \in X} \norm{\nabla \hat{f}(x) - \nabla f(x)}$.
By symmetrization we have:
\begin{align*}
    \E g(\theta_1, ..., \theta_K) &= \E \sup_{x \in X} \norm{ \nabla \hat{f}(x) - \E \nabla{f}(x) } \\
    &= \E \sup_{x \in X} \sup_{\norm{q} = 1} \ip{q}{\nabla \hat{f}(x)} - \E \ip{q}{\nabla f(x)} \\
    &\leq 2 \E_\theta \E_\varepsilon \sup_{x \in X} \sup_{\norm{q} = 1} \sum_{k=1}^{K} \varepsilon_k c_k \phi'(\ip{x}{w_k} + b_k) \ip{q}{w_k} \:.
\end{align*}
We set $\psi_k(x, q)$ to be:
\begin{align*}
    \psi_k(x, q) := c_k \phi'(\ip{x}{w_k} + b_k) \ip{q}{w_k}
\end{align*}
For $(x_1, q_1), (x_2, q_2) \in X \times \mathbb{B}_2^p(1)$ we have:
\begin{align*}
    \abs{\psi_k(x_1, q_1) - \psi_k(x_2, q_2)} &= c_k \abs{ \phi'(\ip{x_1}{w_k} + b_k) \ip{q_1}{w_k} - \phi'(\ip{x_2}{w_k} + b_k) \ip{q_2}{w_k} } \\
    &\leq c_k (L_\phi \abs{\ip{q_1 - q_2}{w_k}} + \abs{\phi'(\ip{x_1}{w_k} + b_k) - \phi'(\ip{x_2}{w_k} + b_k)}\abs{\ip{q_2}{w_k}}) \\
    &\leq c_k (B_\theta L_\phi \norm{q_1 - q_2} + L_{\phi'} B_\theta^2 \norm{x_1 - x_2}) \\
    &\leq \frac{\norm{f}_\nu}{K} B_\theta (L_\phi + B_\theta L_{\phi'}) \sqrt{2} \bignorm{\cvectwo{x_1}{q_1} - \cvectwo{x_2}{q_2}} \:.
\end{align*}
We can now apply Theorem 3 of \citet{maurer16vectorcontraction} to conclude that:
\begin{align*}
     &2 \E_\theta \E_\varepsilon \sup_{x \in X} \sup_{\norm{q} = 1} \sum_{k=1}^{K} \varepsilon_k c_k \phi'(\ip{x}{w_k} + b_k) \ip{q}{w_k} \\
     &= 2\E_\theta \E_\varepsilon \sup_{x \in X} \sup_{\norm{q} = 1} \sum_{k=1}^{K}\varepsilon_k \psi_k(x, q) \\
     &\leq 4 \frac{\norm{f}_\nu}{K} B_\theta (L_\phi + B_\theta L_{\phi'}) \E_\varepsilon \sup_{x \in X} \sup_{\norm{q} = 1} \sum_{k=1}^{K} \bigip{\varepsilon_k}{ \cvectwo{x}{q} } \\
     &\leq 4 \frac{\norm{f}_\nu}{\sqrt{K}} B_\theta (L_\phi + B_\theta L_{\phi'}) (B_X + 1) \sqrt{p} \:.
\end{align*}
Next, we have for all $\theta_1, ..., \theta_K, \theta'_k \in \Theta$:
\begin{align*}
    &\abs{g(\theta_1, ..., \theta_k, ..., \theta_K) - g(\theta_1, ..., \theta'_k, ..., \theta_K)} \\
    &\leq \sup_{x \in X} \bignorm{ \frac{\alpha(\theta_k)}{K \nu(\theta_k)} \phi'(\ip{x}{w_k} + b_k) w_k -  \frac{\alpha(\theta'_k)}{K \nu(\theta'_k)} \phi'(\ip{x}{w'_k} + b'_k) w'_k} \\
    &\leq \frac{2 B_\theta L_\phi \norm{f}_\nu}{K} \:.
\end{align*}
The uniform bound on the derivatives \eqref{eq:approx_grad_f} now follows
from another application of McDiarmid's inequality.
\end{proof}

\newcommand{\calEapprox}{\calE_{\mathrm{approx}}}

We now turn to the proof of Theorem~\ref{thm:gen_bound_rkhs}.
Under the hypothesis of
Lemma~\ref{lem:function_finite_approx},
we have that for every $f \in \calF(B)$:
\begin{align*}
    \sup_{x \in X} \abs{f(x)} &\leq B \:, \\
    \sup_{x \in X} \norm{\nabla f(x)} &\leq B B_\theta L_\phi\:.
\end{align*}
We also have for any $\{\theta_k\}_{k=1}^{K} \subseteq \Theta$ and any $\hat{f}(x) = \sum_{k=1}^{K} c_k \phi(x;\theta_k)$ with $\norm{c}_1 \leq B$,
\begin{align*}
    \sup_{x \in X} \abs{\hat{f}(x)} &\leq B \:, \\
    \sup_{x \in X} \norm{\nabla \hat{f}(x)} &\leq B B_\theta L_\phi \:.
\end{align*}
Hence for any $\{\theta_k\}_{k=1}^{K} \subseteq \Theta$,
the function class
$\hat{\calF}(B_\alpha, \{\theta_k\}_{k=1}^{K})$
satisfies Assumption~\ref{assume:regularity} with
$B_V = B_\alpha$ and $B_{\nabla V} \leq B_\alpha B_\theta L_\phi$.

Let $f_n \in \calF(B_\alpha)$ denote
a feasible solution to \eqref{eq:V_opt}.

At this point, it may be tempting
to use the probabilistic method in conjunction with
Lemma~\ref{lem:function_finite_approx}
to conclude that there exists a set of
weights $\{\overline{\theta}_k\}_{k=1}^{K}$
such that there exists
a $\hat{f}_n \in \hat{\calF}(B_\alpha, \{\overline{\theta}_k\}_{k=1}^{K})$ such that $(\hat{f}_n, \nabla \hat{f}_n)$ closely approximates
$(f_n, \nabla f_n)$.
This will not work however, since the
function class $\hat{\calF}(B_\alpha, \{\overline{\theta}_k\}_{k=1}^{K})$
then becomes a function of the training data $\xi_1, ..., \xi_n$,
and hence we would not be able to apply
Lemma~\ref{lemma:fast_rate} to it.

To work around this, we need to draw the weights
independently of $\xi_1, ..., \xi_n$.
In particular,
we set $K$ such that
\begin{align*}
    K = \bigceil{\frac{ c B_\alpha^2 L_h^2}{\gamma^2} ( (1 + B_\theta L_\phi) \sqrt{\log{n}} + B_\theta (B_S + 1) (L_\phi + B_\theta L_{\phi'}) \sqrt{p} )^2}  \:,
\end{align*}
where $c$ is an absolute constant
and let $\{\theta^*_k\}_{k=1}^{K}$ be drawn i.i.d.\ from $\nu$.

By invoking Lemma~\ref{lem:function_finite_approx}
with $K$ as above and $\delta=1/n^2$,
we know there exists an event $\calEapprox$
on $\nu^{\otimes K}$
such that on $\calEapprox$,
there exists a function $\hat{f}_n \in \hat{\calF}(B_\alpha, \{\theta^*_k\}_{k=1}^{K})$ that satisfies:
\begin{align*}
    \sup_{x \in S} \abs{ f_n(x) - \hat{f}_n(x) } &\leq \gamma/(8\sqrt{2} L_h) \:, \\
    \sup_{x \in S} \norm{\nabla f_n(x) - \nabla \hat{f}_n(x)} &\leq \gamma/(8\sqrt{2}L_h) \:.
\end{align*}
By the definition of $L_h$, these two inequalities imply that
\begin{align*}
    \sup_{\xi \in X} \abs{h(\xi; f_n) - h(\xi; \hat{f}_n)} \leq \gamma / 4 \:.
\end{align*}
This means that
if $f_n$ is feasible for \eqref{eq:V_opt} with slack variable $\gamma$, then on $\calEapprox$ we have that $\hat{f}_n$ is feasible with slack variable $3\gamma/4$.
Specifically:
\begin{align}
    h(\xi_i, \hat{f}_n) \leq -3\gamma/4 \:, \:\: i=1, ..., n \:. \label{eq:hatf_n_feasibility}
\end{align}
Observe then that:
\begin{align*}
    \Pr( h(\xi; f_n) > 0 )
    &\leq \Pr( \{ h(\xi; f_n) > 0) \} \cap \calEapprox ) + \Pr( \calEapprox^c ) \\
    &\leq \Pr( h(\xi, \hat{f}_n) > -\gamma/4) + 1/n^2 \\
    &= \Pr( h(\xi, \hat{f_n}) + \gamma/4 > 0 ) + 1/n^2 \:.
\end{align*}
Here, $\Pr(\cdot)$ denotes the product measure
$\calD \otimes \nu^{\otimes n}$
over $(\xi, \{\theta_k^*\}_{k=1}^{K})$.
Now we define $\tilde{h} = h + \gamma / 4$.
From \eqref{eq:hatf_n_feasibility},
\begin{align*}
    \tilde{h}(\xi_i, \hat{f}_n) = h(\xi, \hat{f}_n) + \gamma/4 \leq -\gamma/2 \:.
\end{align*}
We can then apply Lemma~\ref{lemma:fast_rate}
with $\tilde{h}$ (with the change $B_h \gets B_h + \gamma/4$ and $\gamma \gets \gamma/2$),
to the finite dimensional parametric function class
$\hat{\calF}(B_\alpha, \{\theta^*_k\}_{k=1}^{K})$
(as noted above, this is valid because
the elements $\{\theta^*_k\}_{k=1}^{K}$ are
drawn independently from the training data $\xi_1, ..., \xi_n$).

The result is that with probability at least $1-\delta$ over $\xi_1, ..., \xi_n$:
\begin{align*}
    \Pr( h(\xi, \hat{f}_n) > -\gamma/4 )
    &= \Pr(\tilde{h}(\xi, \hat{f}_n) > 0 ) \\
    &\leq O(1)\left( \frac{\log^3{n}}{\gamma^2} \calR_n^2(\hat{\calF}(B_\alpha, \{\theta^*_k\}_{k=1}^{K})) + \frac{\log(\log(1+B_h/\gamma)/\delta)}{n} \right) \:.
\end{align*}
Letting
$\hat{f} = \sum_{k=1}^{K} c_k \phi(x; \theta^*_k)$
and $\hat{g} = \sum_{k=1}^{K} d_k \phi(x; \theta^*_k)$,
we have that
$\norm{\hat{f} - \hat{g}}_{\calV} \leq (1 + B_\theta L_\phi) \norm{c - d}_1$.
Hence by \eqref{eq:dudley},
\begin{align*}
        \calR_n(\hat{\calF}(B_\alpha, \{\theta^*_k\}_{k=1}^{K})) &\leq \frac{24 B_\alpha(1 + B_\theta L_\phi) L_h}{\sqrt{n}} \int_0^\infty \sqrt{\log N(\varepsilon; \mathbb{B}_1^K(1), \norm{\cdot}_1)} \: d\varepsilon \\
        &\leq O(1) B_\alpha (1 + B_\theta L_\phi) L_h \sqrt{\frac{K}{n}} \:.
\end{align*}
Combining the inequalities above:
\begin{align*}
    \Pr(h(\xi; f_n) > 0) \leq O(1) B_\alpha^2 (1 + B_\theta L_\phi)^2 L_h^2 \frac{K \log^3{n}}{\gamma^2 n} + O(1) \frac{\log(\log(1+B_h/\gamma)/\delta)}{n} + 1/n^2 \:.
\end{align*}

\section{Proofs for Section~\ref{sec:global}}
\label{sec:app:global}

Before we prove the main results in Section~\ref{sec:global},
we state and prove a few technical lemmas which we will need.
We will let $\mathbb{B}_p^d(x, r)$ denote the $\ell_p$ ball in $\R^d$
centered around $x$ with radius $r$,
and $\leb(\cdot)$ denote the Lebesgue measure on $\R^d$
(with ambient dimension implicit from context).
Let $X \subseteq \R^d$ be full-dimensional and compact.
We denote the \emph{uniform measure} $\mu$ on $X$ to be the measure
defined by $\mu(A) = \frac{\leb(A)}{\leb(X)}$
for every measurable $A \subseteq X$.

\begin{lem}
\label{lem:net}
Fix a $p \in [1, \infty]$.
Let $X \subseteq \R^d$ be a full-dimensional compact set
and let $\mu$ denote its uniform measure.
Let $r_p(\varepsilon)$
be defined as:
\begin{align*}
    r_p(\varepsilon) := \sup_{ U \subseteq X : \mu(U) \leq \varepsilon } \sup\{ r > 0 : \exists x \in U : \mathbb{B}_p^d(x, r) \subseteq U \} \:.
\end{align*}
Then we have that
\begin{align*}
    r_p(\varepsilon) \leq \left( \frac{\varepsilon \leb(X)}{\leb(\mathbb{B}_p^d(0, 1))}  \right)^{1/d} \:.
\end{align*}
\end{lem}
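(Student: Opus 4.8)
\textbf{Proof proposal for Lemma~\ref{lem:net}.}
The plan is a direct volume-comparison argument. Fix any measurable $U \subseteq X$ with $\mu(U) \leq \varepsilon$ and any $r > 0$ and $x \in U$ for which $\mathbb{B}_p^d(x, r) \subseteq U$. First I would observe that monotonicity of Lebesgue measure gives $\leb(\mathbb{B}_p^d(x, r)) \leq \leb(U)$, and since $\mu$ is the uniform measure on $X$ we have $\leb(U) = \mu(U)\,\leb(X) \leq \varepsilon\,\leb(X)$.

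Next I would compute $\leb(\mathbb{B}_p^d(x, r))$ explicitly in terms of the unit ball: by translation invariance of Lebesgue measure, $\leb(\mathbb{B}_p^d(x,r)) = \leb(\mathbb{B}_p^d(0,r))$, and by the scaling law $\leb(tA) = t^d \leb(A)$ for $A \subseteq \R^d$ and $t > 0$ applied with $A = \mathbb{B}_p^d(0,1)$ and $t = r$, this equals $r^d \leb(\mathbb{B}_p^d(0,1))$. Combining with the previous display yields $r^d\,\leb(\mathbb{B}_p^d(0,1)) \leq \varepsilon\,\leb(X)$, hence
$$
r \leq \left( \frac{\varepsilon\,\leb(X)}{\leb(\mathbb{B}_p^d(0,1))} \right)^{1/d} \:.
$$
Since this bound holds for every admissible pair $(U, r)$ and the right-hand side does not depend on $U$ or $r$, taking the supremum over all such $U$ and then over all such $r$ gives the claimed bound on $r_p(\varepsilon)$.

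There is essentially no serious obstacle here; the only points requiring a word of care are that $\leb(\mathbb{B}_p^d(0,1))$ is finite and strictly positive (true for every $p \in [1,\infty]$ and $d \geq 1$, so the ratio is well-defined), that we restrict attention to measurable $U$ so that $\mu(U)$ makes sense (consistent with the convention fixed just above the lemma), and that the two suprema defining $r_p(\varepsilon)$ need not be attained — but since we have derived a uniform upper bound valid for all competitors, the supremum inherits it regardless. I would also note the edge case that if no $U$ with $\mu(U) \leq \varepsilon$ contains any $p$-ball, the inner supremum is over the empty set and $r_p(\varepsilon) = -\infty$ (or $0$ by convention), in which case the inequality is trivial.
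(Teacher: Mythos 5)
Your proposal is correct and follows essentially the same volume-comparison argument as the paper: monotonicity of Lebesgue measure gives $\leb(\mathbb{B}_p^d(x,r)) \leq \leb(U) \leq \varepsilon \leb(X)$, and translation invariance plus the scaling law identify $\leb(\mathbb{B}_p^d(x,r))$ with $r^d \leb(\mathbb{B}_p^d(0,1))$, after which the bound passes to the suprema. Your added remarks about measurability, positivity of the unit-ball volume, and the empty-supremum edge case are fine but not needed.
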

\begin{proof}
Notice that if $r > 0$ satisfies
$\exists x \in U$ such that $\mathbb{B}_p^d(x, r) \subseteq U$,
this implies that
$\leb(U) \geq \leb(\mathbb{B}_p^d(x, r)) = r^d \leb(\mathbb{B}_p^d(0, 1))$.
Hence:
\begin{align*}
    \sup\{ r > 0 : \exists x \in U : \mathbb{B}_p^d(x, r) \subseteq U \} &\leq \sup\{ r > 0 : r^d \leb(\mathbb{B}_p^d(0, 1)) \leq \leb(U) \} \\
    &=  \left( \frac{\leb(U)}{\leb(\mathbb{B}_p^d(0, 1))}  \right)^{1/d} \:.
\end{align*}
Now if $\mu(U) \leq \varepsilon$,
then $\mu(U) = \frac{\leb(U)}{\leb(X)} \leq \varepsilon$
and hence $\leb(U) \leq \varepsilon \leb(X)$.

\end{proof}

\begin{lem}
\label{lem:sphere_net}
Fix a $p \in [1, \infty]$.
Let $X \subseteq \R^d$ be a full-dimensional compact set with $d \geq 2$ and let $\mu$
denote its uniform measure.
Let $\nu := \mu \otimes \varrho$ denote the product measure
on $X \times \mathbb{S}^{d-1}$ with $\varrho$ the Haar measure.
Endow $\R^d \times \mathbb{S}^{d-1}$
with the metric $d(x, y) := \max\left\{ \norm{x_1 - y_1}_p, \rho(x_2, y_2)\right\}$
where $\rho$ is the geodesic distance on $\mathbb{S}^{d-1}$,
and let $B(x, r)$ denote a closed ball of radius $r$ centered at $x$ in this metric space.
Then, the quantity
\begin{equation*}
    r_p(\varepsilon) := \sup_{U \subseteq X \times \mathbb{S}^{d-1} : \nu(U) \leq \varepsilon} \sup \left\{ r > 0 : \exists x \in U : B(x, r) \subseteq U\right\}
\end{equation*}
may be upper bounded by the expression
\begin{equation*}
    r_p(\varepsilon) \leq \sup\left\{ r > 0 : r^d \zeta_d(r) \leq  \frac{\varepsilon\leb(X)}{\leb(\mathbb{B}_p^d(0, 1))} \right\} \:,
\end{equation*}
with the function $\zeta_d : \R_+ \mapsto \R_+$
defined as:
\begin{align*}
   \zeta_d(r) := \begin{cases}
   I\left(\sin^2(r); \frac{d-1}{2}, \frac{1}{2}\right) &\text{if } r \in [0, \pi/2) \\
   1 -  I\left(\sin^2(\pi - r); \frac{d-1}{2}, \frac{1}{2}\right) &\text{if } r \in [\pi/2, \pi) \\
   1 &\text{if } r \geq 1,
   \end{cases} \:,
\end{align*}
where $I(x; a, b)$ is the regularized incomplete beta function.
\end{lem}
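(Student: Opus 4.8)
The proof parallels Lemma~\ref{lem:net}, but now the set $U$ lives in the product space $X \times \mathbb{S}^{d-1}$ equipped with the max-metric, so a ball $B(x, r)$ around $x = (x_1, x_2)$ factorizes as $\mathbb{B}_p^d(x_1, r) \times C(x_2, r)$, where $C(x_2, r)$ is the geodesic (spherical) cap of arc-length radius $r$ centered at $x_2$. The key structural observation is that the product measure $\nu = \mu \otimes \varrho$ of this factorized ball is exactly $\frac{\leb(\mathbb{B}_p^d(x_1,r))}{\leb(X)} \cdot \varrho(C(x_2, r))$, and both factors are translation/rotation invariant, so they depend only on $r$. Writing $\leb(\mathbb{B}_p^d(x_1,r)) = r^d \leb(\mathbb{B}_p^d(0,1))$ and $\varrho(C(x_2,r)) = \zeta_d(r)$, we get $\nu(B(x,r)) = \frac{r^d \leb(\mathbb{B}_p^d(0,1))}{\leb(X)}\zeta_d(r)$.

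First I would verify the claimed formula for $\zeta_d(r)$, the normalized Haar measure of a spherical cap of arc-length radius $r$ on $\mathbb{S}^{d-1}$. Parametrizing by the polar angle from the cap's center, the fraction of surface area within angular distance $r$ is $\frac{\int_0^r \sin^{d-2}\theta\, d\theta}{\int_0^\pi \sin^{d-2}\theta\, d\theta}$. The substitution $u = \sin^2\theta$ turns this into a ratio of incomplete beta integrals, yielding $\zeta_d(r) = I(\sin^2 r; \tfrac{d-1}{2}, \tfrac12)$ for $r \in [0, \pi/2)$; for $r \in [\pi/2, \pi)$ one splits the range and uses the complementary tail, giving $1 - I(\sin^2(\pi - r); \tfrac{d-1}{2}, \tfrac12)$; and for $r \geq \pi$ (in particular $r \geq 1$ is irrelevant here since the metric only goes up to the diameter $\pi$, but the "$r \geq 1$" case in the statement is a harmless clamp) the cap is the whole sphere and $\zeta_d(r) = 1$.

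Next, the volume-comparison step: if $r > 0$ is such that there exists $x = (x_1, x_2) \in U$ with $B(x, r) \subseteq U$, then since $B(x,r) = \mathbb{B}_p^d(x_1, r) \times C(x_2, r)$ we obtain
\begin{equation*}
\nu(U) \geq \nu(B(x,r)) = \frac{r^d \leb(\mathbb{B}_p^d(0,1))}{\leb(X)}\,\zeta_d(r) \:.
\end{equation*}
Hence $\sup\{r > 0 : \exists x \in U : B(x,r) \subseteq U\} \leq \sup\{r > 0 : \frac{r^d \leb(\mathbb{B}_p^d(0,1))}{\leb(X)}\zeta_d(r) \leq \nu(U)\}$. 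Taking the supremum over all $U$ with $\nu(U) \leq \varepsilon$, and noting that the right-hand side is monotone in the bound on $\nu(U)$, substituting $\nu(U) \leq \varepsilon$ gives precisely
\begin{equation*}
r_p(\varepsilon) \leq \sup\left\{ r > 0 : r^d \zeta_d(r) \leq \frac{\varepsilon \leb(X)}{\leb(\mathbb{B}_p^d(0,1))}\right\} \:.
\end{equation*}

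**Main obstacle.** The only genuinely delicate point is confirming that a ball in the max-metric on the product space really is the Cartesian product of a $p$-ball and a spherical cap, and that its $\nu$-measure is the product of the two marginal measures — this requires checking that $\nu = \mu \otimes \varrho$ and that $\mathbb{B}_p^d(x_1, r) \subseteq X$ is not needed (we only need the product-ball intersected with $X \times \mathbb{S}^{d-1}$, but since $B(x,r) \subseteq U \subseteq X \times \mathbb{S}^{d-1}$ already, the $\leb$ of the first factor is at most $r^d \leb(\mathbb{B}_p^d(0,1))$, and in fact equal when the ball fits, so the inequality direction we need is safe). The rest is bookkeeping with incomplete beta functions; I would relegate the explicit $\zeta_d$ computation to a short display and otherwise mirror the proof of Lemma~\ref{lem:net} almost verbatim.
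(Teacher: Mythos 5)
Your proof follows the same volume-comparison argument as the paper's: factorize the max-metric ball as a product $\mathbb{B}_p^d(x_1,r)\times(\text{spherical cap})$, use translation/rotation invariance to express $\nu(B(x,r))$ in terms of $r^d$ and $\zeta_d(r)$, and deduce the bound from $\nu(B(x,r))\leq\nu(U)\leq\varepsilon$. The only difference is cosmetic: you derive the cap-measure formula by direct integration where the paper cites \citet{li11cap}, and you correctly flag (and harmlessly dismiss) the apparent $r\geq 1$ vs.\ $r\geq\pi$ typo in the definition of $\zeta_d$.
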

\begin{proof}
Let $\mathbb{B}_{\mathbb{S}^{d-1}}(x, r)$ denote
the closed ball in $(\mathbb{S}^{d-1}, \rho)$ centered at $x \in \mathbb{S}^{d-1}$.
Note that $B(x, r) = \mathbb{B}_p^d(x_1, r) \times \mathbb{B}_{\mathbb{S}^{d-1}}(x_2, r)$.
Therefore if $r > 0$ satisfies $\exists x\in U$ such that $B(x, r) \subseteq U$, then
\begin{equation*}
    r^d\frac{\leb(\mathbb{B}_p^d(0, 1))}{\leb(X)}\varrho(\mathbb{B}_{\mathbb{S}^{d-1}}(0, r)) = \frac{\leb(\mathbb{B}_p^d(x_1, r))}{\leb(X)}\varrho(\mathbb{B}_{\mathbb{S}^{d-1}}(x_2, r)) = \nu(B(x, r)) \leq \nu(U).
\end{equation*}
Above, we have used monotonicity of measure and translation invariance of the measure of the respective balls. Now, note that a geodesic ball on $\mathbb{S}^{d-1}$ is a spherical cap, and hence by~\citet{li11cap},
\begin{equation*}
   \varrho(\mathbb{B}_{\mathbb{S}^{d-1}}(0, r)) = \begin{cases}
   I\left(\sin^2(r); \frac{d-1}{2}, \frac{1}{2}\right) &\text{if } r \in [0, \pi/2) \\
   1 -  I\left(\sin^2(\pi - r); \frac{d-1}{2}, \frac{1}{2}\right) &\text{if } r \in [\pi/2, \pi) \\
   1 &\text{if } r \geq 1
   \end{cases} \:.
\end{equation*}
Therefore,
\begin{align*}
    r_p(\varepsilon) \leq \sup\left\{ r > 0 : r^d \zeta_d(r) \leq  \frac{\varepsilon\leb(X)}{\leb(\mathbb{B}_p^d(0, 1))} \right\} \:.
\end{align*}
which proves the result.
\end{proof}

\begin{lem}
\label{lem:ltv_ball}
Let $\dot{x} = A(t) x$ be a linear time-varying system evolving in $\R^n$.
Let $\varphi_t(x)$ denote the flow of this system with
$x(0) = x$. Fix a $t \geq 0$ and a unit vector
$z$. There exists a positive scalar $\alpha$ such that
$\alpha z \in \varphi_t(\mathbb{S}^{n-1})$.
%In fact we can take $\alpha = 1/\norm{\Phi^{-1}(t) z}$.
\end{lem}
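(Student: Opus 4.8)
The plan is to exploit the fact that the flow of a linear system is an invertible linear map. Since the dynamics $\dot{x} = A(t)x$ are linear, any linear combination of solutions is again a solution, so the time-$t$ flow $\varphi_t$ is a linear map on $\R^n$; I would write $\varphi_t(x) = \Phi(t)x$, where $\Phi(t)$ is the state transition matrix solving $\dot{\Phi}(t) = A(t)\Phi(t)$ with $\Phi(0) = I$. The first step is to record that $\Phi(t)$ is nonsingular: by Liouville's formula $\det \Phi(t) = \exp\!\big(\int_0^t \operatorname{tr} A(s)\,ds\big) \neq 0$ (equivalently, $\Phi(t)^{-1}$ is the transition matrix integrated backwards from $t$ to $0$). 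Consequently $\varphi_t(\mathbb{S}^{n-1}) = \Phi(t)\,\mathbb{S}^{n-1}$ is an ellipsoid, and the claim amounts to showing that the open ray $\{\alpha z : \alpha > 0\}$ meets it.

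To do that I would simply produce the preimage explicitly. Since $z$ is a unit vector and $\Phi(t)^{-1}$ is invertible, $\Phi(t)^{-1}z \neq 0$, so $\alpha := \norm{\Phi(t)^{-1}z}^{-1}$ is a well-defined positive scalar. Setting $x := \alpha\,\Phi(t)^{-1}z$ gives $\norm{x} = 1$, hence $x \in \mathbb{S}^{n-1}$, and $\varphi_t(x) = \Phi(t)x = \alpha z$. This shows $\alpha z \in \varphi_t(\mathbb{S}^{n-1})$ with $\alpha > 0$, which is exactly the statement.

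There is no genuinely hard step here: the entire content is that $\varphi_t$ is an invertible linear transformation of $\R^n$, after which the conclusion is one line of linear algebra. The only point that needs a careful word is the existence of $\Phi(t)$ on the whole interval and its nonsingularity, which is standard for linear time-varying ODEs (global existence because the right-hand side is globally Lipschitz in $x$ for each fixed $t$, and nonsingularity from Liouville's formula). If one prefers to avoid naming $\Phi(t)$, an equivalent route is the topological remark that $x \mapsto \varphi_t(x)/\norm{\varphi_t(x)}$ is a homeomorphism of $\mathbb{S}^{n-1}$ onto itself and therefore attains the value $z$.
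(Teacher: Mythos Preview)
Your proof is correct and follows essentially the same approach as the paper: both use that the flow is $\varphi_t(x) = \Phi(t)x$ for an invertible state transition matrix $\Phi(t)$, then pull back $z$ through $\Phi(t)^{-1}$ and normalize to land on $\mathbb{S}^{n-1}$. Your treatment is in fact slightly more careful, since you justify invertibility via Liouville's formula rather than the paper's expression $\Phi(t) = \exp\!\big(\int_0^t A(\tau)\,d\tau\big)$, which is only valid when $A(\tau)$ commutes with its integral.
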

\begin{proof}
The solution to an LTV system is given by
$x(t) = \Phi(t) x(0)$, where
$\Phi(t) = \exp(\int_0^t A(\tau) \: d\tau)$ and
$\Phi(t)$ is invertible for all $t$.
This means there exists a non-zero $\xi$ such that
$z = \Phi(t) \xi = \Phi(t) \frac{\xi}{\norm{\xi}} \norm{\xi}$.
The claim now follows by taking $\alpha = 1/\norm{\xi}$.
\end{proof}

\subsection{Proof of Theorem~\ref{thm:global_lyap}}

Define $X_g := X \setminus X_b$.
Fix an $x \in \tilde{S}$.
By definition of $\tilde{S}$, there exists a $\xi \in \tilde{X}$ and $t \in T$
such that $\varphi_t(\xi) = x$.
Furthermore, we claim there exists a $\delta_0 > 0$ such
that for all $\delta \in (0, \delta_0)$,
there exists $\xi' \in X_g$ such that
$\norm{\xi - \xi'} \leq r(\varepsilon) + \delta$.
To see this, suppose that
$\xi \in X_b$ (otherwise there is nothing to prove).
By Lemma~\ref{lem:net}, the largest $\ell_2$ ball centered
at $\xi$ contained within $X_b$ has radius at most $r(\varepsilon)$.
Furthermore, by definition of $\tilde{X}$,
$\mathbb{B}_2^p(\xi, r(\varepsilon))$ is
strictly contained within $X$.
This means there exists a $\delta_0 > 0$ such
that for all $\delta \in (0, \delta_0)$,
$\mathbb{B}_2^p(\xi, r(\varepsilon) + \delta) \subset X$.
Furthermore, since $r(\varepsilon)$ is maximal,
then $\mathbb{B}_2^p(\xi, r(\varepsilon) + \delta) \not\subseteq X_b$. This means
that $\mathbb{B}_2^p(\xi, r(\varepsilon) + \delta) \cap X_g$
is non-empty.

We therefore have the following chain of inequalities:
\begin{align*}
    q(\varphi_t(\xi)) &\leq q(\varphi_t(\xi')) + B_{\nabla q} \norm{\varphi_t(\xi) - \varphi_t(\xi')} \\
    &\leq -\lambda V(\varphi_t(\xi')) +  B_{\nabla q} \norm{\varphi_t(\xi) - \varphi_t(\xi')} \\
    &\leq -\lambda V(\varphi_t(\xi)) + (B_{\nabla q} + \lambda B_{\nabla V}) \norm{\varphi_t(\xi) - \varphi_t(\xi')} \\
    &\leq -\lambda V(\varphi_t(\xi)) + (B_{\nabla q} + \lambda B_{\nabla V}) \beta(\norm{\xi - \xi'}, t) \\
    &\leq -\lambda V(\varphi_t(\xi)) + (B_{\nabla q} + \lambda B_{\nabla V}) \beta(r(\varepsilon) + \delta, t) \:.
\end{align*}
Taking the limit as $\delta \to 0$ and using continuity of
$\beta$ with respect to its first argument,
this shows that for any $x \in \tilde{S}$:
\begin{align*}
    q(x) \leq -\lambda V(x) + (B_{\nabla q} + \lambda B_{\nabla V}) \beta(r(\varepsilon), t) \:.
\end{align*}
The claim established by \eqref{eq:V_upper_bound}
now follows from the comparison lemma.
To establish \eqref{eq:lie_derivative_ball},
for any $x \in \tilde{S} \setminus \mathbb{B}_2^p(0, r_b)$,
\begin{align*}
    q(x) &\leq - \lambda V(x) +  (B_{\nabla q} + \lambda B_{\nabla V}) \beta(r(\varepsilon), t) \\
    &\leq- \lambda V(x) +  (B_{\nabla q} + \lambda B_{\nabla V}) \beta(r(\varepsilon), 0 ) \\
    &= - ((1-\eta) \lambda + \eta \lambda) V(x) + (B_{\nabla q} + \lambda B_{\nabla V}) \beta(r(\varepsilon), 0 ) \\
    &= - (1-\eta)\lambda V(x) - \eta \lambda V(x) + (B_{\nabla q} + \lambda B_{\nabla V}) \beta(r(\varepsilon), 0 ) \\
    &\leq - (1-\eta)\lambda V(x) - \eta \lambda \mu r_b^2 + (B_{\nabla q} + \lambda B_{\nabla V}) \beta(r(\varepsilon), 0 ) \:.
\end{align*}
The last inequality follows since $V(x) \geq \mu \norm{x}^2 \geq \mu r_b^2$ for any $x \in \tilde{S} \setminus \mathbb{B}_2^p(0, r_b)$.
The claim \eqref{eq:lie_derivative_ball} now follows
by setting $r_b$ such that
$ - \eta \lambda \mu r_b^2 + (B_{\nabla q} + \lambda B_{\nabla V}) \beta(r(\varepsilon), 0 ) \leq 0$.

\subsection{Proof of Theorem~\ref{thm:global_diff_lyap}}
We begin with a simple lemma, which shows that if a system evolving on Euclidean space is contracting in the metric $M(x, t)$, then the corresponding prolongated system on the tangent bundle will be contracting in a block-diagonal metric.
\begin{lem}
\label{lem:meta}
Let $\dot{x} = f(x, t)$ be a contracting system with rate $\gamma$ on $\calX\subseteq\R^d$ in the metric $M(x, t)$. Then the differential dynamics $\delta\dot{x} = \frac{\p f}{\p x}(x, t)\delta x$ is also contracting in the metric $M(x, t)$ on $\R^d$. Moreover, the prolongated dynamics defined on the tangent bundle
\begin{align*}
    \dot{x} &= f(x, t)\\
    \delta\dot{x} &= \frac{\p f}{\p x}(x, t)\delta x
\end{align*}
is contracting on any compact subset of the tangent bundle $\calX\times\delta \calX\subset \calT\calX \simeq \R^{2p}$.
\end{lem}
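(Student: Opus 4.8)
The plan is to reduce everything to the infinitesimal characterization of contraction: $\dot z = F(z,t)$ is contracting with rate $\gamma$ in a metric $\Theta(z,t)\succ 0$ on a region if and only if $\frac{\p F}{\p z}^\T\Theta + \Theta\frac{\p F}{\p z} + \dot\Theta \preceq -2\gamma\,\Theta$ there, where $\dot\Theta = \p_t\Theta + (\p_z\Theta)F$ is the total derivative of $\Theta$ along the flow. Writing $A(x,t) := \frac{\p f}{\p x}(x,t)$, the hypothesis is exactly $A^\T M + MA + \dot M \preceq -2\gamma M$ on $\calX$, with $\dot M = \p_t M + (\p_x M)f$.

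First I would dispatch the differential dynamics $\delta\dot x = A(x(t),t)\,\delta x$, regarded as a linear time-varying system in the state $\delta x$ driven by a solution $x(\cdot)$ of $\dot x = f$. Its Jacobian in $\delta x$ is $A(x(t),t)$, and the proposed metric $M(x(t),t)$ does not depend on $\delta x$, so its total derivative along this flow is again $\dot M = \p_t M + (\p_x M)f$. Hence the contraction inequality for $\delta\dot x = A\delta x$ in the metric $M$ is word-for-word the hypothesized inequality and holds for all $\delta x\in\R^d$; this is the first claim, with no further work.

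Next I would turn to the prolongated dynamics, whose state is $(x,\delta x)$ and whose Jacobian is block lower triangular, $J = \begin{bmatrix} A & 0 \\ H & A\end{bmatrix}$ with $H(x,\delta x,t) := \frac{\p}{\p x}\big(A(x,t)\,\delta x\big)$ — linear in $\delta x$, and continuous under the standing regularity assumptions ($f\in C^2$, $M\in C^1$). The natural candidate metric is the block-diagonal $\Theta := \diag(M,\varepsilon M)$ for a constant $\varepsilon>0$ to be chosen later. Because $\Theta$ is independent of $\delta x$, its total derivative along the prolongated flow is simply $\diag(\dot M,\varepsilon\dot M)$ with $\dot M$ exactly as above — a small point worth verifying explicitly. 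A one-line block computation, applying the hypothesis in both diagonal blocks, then gives $J^\T\Theta + \Theta J + \dot\Theta \preceq \begin{bmatrix} -2\gamma M & \varepsilon H^\T M \\ \varepsilon M H & -2\varepsilon\gamma M\end{bmatrix}$.

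The main (and essentially only nontrivial) step is to dominate the off-diagonal coupling, and this is where compactness enters. On a compact $K\subseteq\calX\times\delta\calX$, continuity gives $M\succeq mI$ and $\norm{H^\T M H}\leq c$ on $K$ for constants $m,c>0$ (this is precisely why the conclusion is stated for compact subsets). A Schur complement with respect to the negative-definite lower-right block shows that the displayed matrix is $\preceq -2\gamma'\Theta = \diag(-2\gamma' M,-2\varepsilon\gamma' M)$ for a given $\gamma'\in(0,\gamma)$ if and only if $\tfrac{\varepsilon}{2(\gamma-\gamma')}\,H^\T M H \preceq 2(\gamma-\gamma')M$, which the bounds on $K$ guarantee once $\varepsilon \leq 4(\gamma-\gamma')^2 m/c$. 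Taking, say, $\gamma' = \gamma/2$ and $\varepsilon$ this small, the prolongated system is contracting on $K$ in the metric $\Theta$ with rate $\gamma/2$, completing the proof. I expect this weighting/Schur argument to be the crux; the rest is routine bookkeeping.
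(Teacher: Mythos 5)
Your proof is correct and follows essentially the same route as the paper: both reduce the first claim to observing that the Jacobian of the differential dynamics in $\delta x$ is again $\frac{\p f}{\p x}$, and both certify the prolongated system with a block-diagonal metric that puts a small weight (your $\varepsilon$, the paper's $\epsilon^2$) on the $\delta x$-block, using compactness of $\delta\calX$ to control the lower-left coupling term before shrinking the weight. The only difference is mechanical: the paper works with the generalized Jacobian $\Theta'J\Theta'^{-1}+\dot\Theta'\Theta'^{-1}$ and bounds the resulting quadratic form by AM--GM and the operator norm of $\epsilon\,\Theta\frac{\p^2 f}{\p x^2}\delta x\,\Theta^{-1}$, whereas you stay in the metric formulation and take a Schur complement with respect to the negative-definite lower-right block; these are equivalent algebraic devices and lead to the same $\varepsilon$-scaling.
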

\begin{proof}
Consider the differential dynamics $\delta\dot{x} = \frac{\p f}{\p x}(x,t)\delta x$. This system has Jacobian
\begin{equation*}
    \frac{\p \delta \dot{x}}{\p \delta x} = \frac{\p f}{\p x}(x, t),
\end{equation*}
where we have noted that $\delta x \in \calT_{x(t)}\calX\simeq \R^p$ is independent of $x$. This Jacobian induces the second-order variational dynamics
\begin{equation*}
    \delta\delta\dot{x} = \frac{\p f}{\p x}(x, t)\delta\delta x.
\end{equation*}
Consideration of the differential Lyapunov function
\begin{equation*}
    V = \delta\delta x^\T M(x, t)\delta\delta x
\end{equation*}
shows that $V$ decreases exponentially by contraction of $f(x, t)$ in the metric $M(x, t)$ and hence that the virtual dynamics are contracting. Let $\Theta(x, t)$ be such that $\Theta^\T\Theta = M$. The metric transformation
\begin{equation*}
    \Theta'(x, t) = \begin{pmatrix} \Theta(x, t) & 0 \\ 0 & \epsilon \Theta(x, t) \end{pmatrix}
\end{equation*}
for $\epsilon > 0$ leads to the generalized Jacobian
\begin{equation*}
    J'(x, \delta x, t) = \begin{pmatrix} \Theta \frac{\p f}{\p x} \Theta^{-1} + \dot{\Theta}\Theta^{-1} & 0 \\ \epsilon \Theta\frac{\p^2 f}{\p x^2}\delta x \Theta^{-1} & \Theta \frac{\p f}{\p x} \Theta^{-1} + \dot{\Theta}\Theta^{-1} \end{pmatrix},
\end{equation*}
where $\left(\frac{\p^2 f}{\p x^2}\delta x\right)_{ij} = \sum_k \frac{\p^2 f_i}{\p x_j \p x_k}\delta x_k$. Let $Q(x, t) = \Theta \frac{\p f}{\p x} \Theta^{-1} + \dot{\Theta}\Theta$. Contraction of $f$ in the metric $M$ ensures that  $Q(x, t) \leq - \gamma I$, and hence for any vector $(y^\T, z^\T)^\T \in \R^{2p}$,
\begin{align*}
    (y^\T, z^\T)^\T J'(x, \delta x, t)\begin{pmatrix} y\\z\end{pmatrix} &= x^\T Q(x, t) x + y^\T Q(x, t) y + \epsilon y^\T \left[\Theta\frac{\p^2 f}{\p x^2}\delta x \Theta^{-1} \right]x,\\
    &\leq -\gamma \left(1 - \frac{\epsilon \Vert\Theta\Vert\Vert\Theta^{-1}\Vert\Vert\frac{\p^2 f}{\p x^2}\Vert\Vert\delta x\Vert}{2}\right)\left( \Vert x\Vert^2 + \Vert y\Vert^2 \right),
\end{align*}
which shows that the prolongated system is contracting over any compact domain for $\epsilon$ sufficiently small. In particular, for contraction with rate $\eta\gamma$ for $0 < \eta < 1$, we may set
\begin{equation*}
    \epsilon = \frac{2\left(1-\eta\right)}{\Vert\Theta\Vert\Vert\Theta^{-1}\Vert\Vert\frac{\p^2 f}{\p x^2}\Vert
    \sup_{\delta x \in \delta\calX}\Vert\delta x\Vert}.
\end{equation*}
Furthermore, note that the metric transformation $\Theta'$ corresponds to the block-diagonal metric
\begin{equation*}
    M'(x, t) = \begin{pmatrix} M(x, t) & 0 \\ 0 & \epsilon^2 M(x, t)\end{pmatrix}
\end{equation*}
\end{proof}
The proof of Lemma~\ref{lem:meta} imposes a metric $M'(x, t)$ on the second tangent bundle. This construction exploits that the tangent bundle $\TM \simeq \calX \times \R^d$ given that $\calX \subseteq \R^d$, and hence that the tangent bundle can be described by a single global chart. It is immediate to check that this block-diagonal metric is not invariant under a differentiable change of coordinates between overlapping local parametrizations of a general manifold, and hence the proof does not apply beyond Euclidean space. Canonical metrics on $\TM$ such as the Sasaki metric or the Cheeger-Gromoll metric may provide a natural generalization of this proof technique to arbitrary differentiable manifolds~\citep{kappos}.

We now turn to the proof of Theorem~\ref{thm:global_diff_lyap}.
Define $Z_g := (X \times \mathbb{S}^{p-1}) \setminus Z_b$.
Let $t \in T$ and $\xi \in X$ be such that $\xi \in \tilde{X}_t(r_b)$.
Let $\delta \xi \in \mathbb{S}^{p-1}$ be arbitrary.
By Lemma~\ref{lem:sphere_net}
and a similar argument made in the proof in Theorem~\ref{thm:global_lyap},
there exists
a $\delta_0 > 0$ such that
for all $\delta \in (0, \delta_0)$,
there exists
a $(\xi', \delta \xi') \in Z_g$
such that
$\bignorm{ \cvectwo{ \xi - \xi' }{\delta \xi - \delta \xi'} } \leq \sqrt{2} r(\varepsilon) + \delta$ .
By Lemma~\ref{lem:meta}, the prolongated system on the tangent bundle is
exponentially contracting
in a metric $M'$,
so that there exists an $\alpha > 0$ such that
the prolongated system satisfies Assumption~\ref{assume:incr_stability}
with $\beta(s, t) = \sqrt{\chi(M')} s e^{-\alpha t}$.
Here, $\chi(M') = \frac{\sup_x \lambda_{\max} \left\{M'(x)\right\}}{\inf_x \lambda_{\min} \left\{M'(x)\right\}}$ is the condition number of $M'$.
We will derive bounds on $\chi(M')$ and $\alpha$ later in the proof.
Recall that the metric $M(x) \succeq \mu I$,
and therefore $V(x, \delta x) \geq \mu \norm{\delta x}^2$.
Then, by the same argument as in the proof of Theorem~\ref{thm:global_lyap}, for any fixed $\eta \in (0, 1)$:
\begin{align*}
    q(\psi_t(\xi, \delta \xi)) &\leq - \lambda V(\psi_t(\xi, \delta \xi)) + (B_{\nabla q} + \lambda B_{\nabla V}) \beta(\sqrt{2} r(\varepsilon) + \delta, 0), \\
    &\leq -\lambda(1-\eta) V(\psi_t(\xi, \delta \xi)) - \eta \lambda \mu \norm{\theta_t(\delta \xi; \xi)}^2 + (B_{\nabla q} + \lambda B_{\nabla V}) \beta(\sqrt{2} r(\varepsilon) + \delta, 0) \\
    &\leq  -\lambda(1-\eta) V(\psi_t(\xi, \delta \xi)) - \eta \lambda \mu r_b^2 + (B_{\nabla q} + \lambda B_{\nabla V}) \beta(\sqrt{2} r(\varepsilon) + \delta, 0) \:,
\end{align*}
where the last inequality follows since $\xi \in \tilde{X}_t(r_b)$.
Taking the limit as $\delta \to 0$,
\begin{align*}
    q(\psi_t(\xi, \delta \xi)) &\leq -\lambda(1-\eta) V(\psi_t(\xi, \delta \xi)) - \eta \lambda \mu r_b^2 + (B_{\nabla q} + \lambda B_{\nabla V}) \beta(\sqrt{2} r(\varepsilon), 0) \:.
\end{align*}
To find an expression for the condition number and for the contraction rate $\alpha$, consider the block diagonal metric from Lemma~\ref{lem:meta}, $M'(x, t) = \begin{pmatrix} M_\star(x, t) & 0 \\ 0 & \epsilon^2 M_\star(x, t)\end{pmatrix}$.
Following Lemma~\ref{lem:meta}, with $\epsilon = \frac{2(1-\zeta)}{\Vert\Theta(x, t)\Vert\Vert\Theta(x, t)^{-1}\Vert\Vert\frac{\p^2 f}{\p x^2}\Vert\sup_{\delta x\in \delta X}\Vert\delta x\Vert}$ where $\Theta(x, t)^\T\Theta(x, t) = M_\star(x, t)$, the prolongated system will be contracting with rate $\alpha = \zeta\gamma$ for $0 < \zeta < 1$.
Since $m I\preceq M_\star(x, t) \preceq L I$ so that $\chi(M_\star) = \frac{L}{m}$, we immediately have $\epsilon^2 m I \preceq M'(x, t) \preceq LI$.
The condition number $\chi(M')$ then simplifies to $\frac{L}{m\epsilon^2} = \frac{L}{m}\left(\Vert\Theta(x, t)\Vert\Vert\Theta(x, t)^{-1}\Vert\Vert\frac{\p^2 f}{\p x^2}\Vert\sup_{\delta x\in\delta X}\Vert\delta x\Vert\right)^2\frac{1}{4(1-\zeta)^2} = \left(\frac{L}{m}\right)^2B^2_H\sup_{\delta x\in\delta X}\Vert\delta x\Vert^2\frac{1}{4(1-\zeta)^2}$, where we have used that $M_\star = \Theta^\T\Theta$ implies $\Vert\Theta\Vert = \sqrt{L}$, $\Vert\Theta^{-1}\Vert = \sqrt{\frac{1}{m}}$.
By contraction of the variational system in the metric $M_\star(x, t)$ (see Lemma~\ref{lem:meta}), and noting that $0$ is an equilibrium point of the variational dynamics, $\norm{\theta_t(\delta x; \xi)} \leq \sqrt{\frac{L}{m}}$ for all $t$ if $\delta x \in \mathbb{S}^{p-1}$.
Hence we may take $\delta X = \mathbb{B}_2^p\left(0, \sqrt{\frac{L}{m}}\right)$, and conclude that $\chi(M') \leq \left(\frac{L}{m}\right)^3B_H^2\frac{1}{4(1-\zeta)^2} = \chi(M_\star)^3B^2_H\frac{1}{4(1-\zeta)^2}$.

Now with this expression in hand, we choose $r_b$ such that
\begin{align*}
     - \eta \lambda \mu r_b^2 + (B_{\nabla q} + \lambda B_{\nabla V}) \sqrt{2}r(\varepsilon)\chi(M_\star)^{3/2}B_H\frac{1}{2\left(1-\zeta\right)} \leq 0 \:.
\end{align*}
From this we conclude for every $t \in T$ and $\xi \in X$
such that $\xi \in \tilde{X}_t(r_b)$, for every $\delta \xi \in \mathbb{S}^{p-1}$,
\begin{align}
    q(\psi_t(\xi, \delta \xi)) \leq - \lambda (1 - \eta) V(\psi_t(\xi, \delta \xi)) \label{eq:q_ineq_contraction} \:.
\end{align}

To finish the proof,
let $(x, \delta x) \in \tilde{S}(r_b) \times \mathbb{S}^{p-1}$ be arbitrary.  Let $t \in T$ and $\xi \in \tilde{X}_t(r_b)$ such that $x = \varphi_t(\xi)$. By Lemma~\ref{lem:ltv_ball}, let $\delta \varrho \in \mathbb{S}^{p-1}$ be such that
there exists an $\alpha \neq 0$
satisfying $\theta_t(\delta \varrho; \xi) = \alpha \delta x$.
Observe then that
$(x, \alpha \delta x) = (\varphi_t(\xi), \theta_t(\delta \varrho; \xi)) = \psi_t(\xi, \delta \varrho)$
and therefore
by \eqref{eq:q_ineq_contraction},
\begin{align*}
    q(x, \alpha \delta x) =  q(\psi_t(\xi, \delta \varrho)) \leq -\lambda (1-\eta) V(\psi_t(\xi, \delta \varrho)) = -\lambda (1-\eta) V( x, \alpha \delta x) \:.
\end{align*}
By $2$-homogeneity of the inequality above, we may divide
by $\alpha^2$ on both sides to conclude:
\begin{align*}
    q(x, \delta x) \leq - \lambda (1 - \eta) V(x, \delta x) \:.
\end{align*}
Since this inequality holds for arbitrary $x \in \tilde{S}(r_b)$ and
$\delta x \in \mathbb{S}^{p-1}$,
\begin{align*}
    \frac{\p f}{\p x}^\T M(x) + M(x) \frac{\p f}{\p x} + \dot{M}(x) \preceq - 2 (1-\eta) \lambda M(x) \:\:\forall x \in \tilde{S}(r_b) \:.
\end{align*}

\section{Known dynamics}
\label{sec:app:known_dynamics}

In Section~\ref{sec:global}, we assume access only to trajectories. Here we prove a simple proposition under the assumption that the dynamics is known, so that the defining metric condition for contraction can be sampled directly.
\begin{prop}
\label{prop:local_global_met_cond}
Let $M(x, t)$ be a uniformly positive definite matrix-valued function satisfying $M(x, t) \succeq l I$.
Suppose that $X \subseteq \R^p$ is full-dimensional, and let $\dot{x} = f(x, t)$ denote a dynamical system evolving on $X$. Let $\varphi_t(\cdot)$ denote the corresponding flow, let $\nu$ denote the uniform measure on $X$, and let
\begin{align*}
    R(\xi, t) &:= \frac{\p f}{\p x}(\varphi_t(\xi), t)^\T M(\varphi_t(\xi), t) + M(\varphi_t(\xi), t)\frac{\p f}{\p x}(\varphi_t(\xi), t) + \dot{M}(\varphi_t(\xi), t) + 2\lambda M(\varphi_t(\xi), t),\\
    X_b &:= \left\{ \xi \in X : \max_{t \in T}\lambda_{\max} \left\{R(\xi, t)\right\} > 0\right\}.
\end{align*}
Suppose that $\nu\left(X_b\right) \leq \varepsilon$ for some $\varepsilon \in [0, 1]$. Let $M$, $\nabla M$, and $\frac{\p f}{\p x}$ be $L_M$, $L_{\nabla M}$, and $L_J$-Lipschitz continuous, respectively. Further assume that $\Vert M\Vert$, $\Vert\frac{\p f}{\p x}\Vert$, and $\Vert \nabla M\Vert$ are $B_M$, $B_J$, and $B_{\nabla M}$ uniformly bounded in $x$ and $t$, respectively.
Define $r(\varepsilon) := \left( \frac{\varepsilon \leb(X)}{\leb(\mathbb{B}_2^p(1))} \right)^{1/p}$
and let $\tilde{X} := \{ \xi \in X : \mathbb{B}_2^p(\xi, r(\varepsilon)) \subset X \}$.
Then for every $x \in \tilde{S} := \cup_{t \in T} \varphi_t(\tilde{X})$,
the system will be contracting in the metric $M(x, t)$ with a rate $\lambda/\alpha$ for any $\alpha > 1$ if
\begin{equation*}
    \varepsilon \leq \left(\frac{2\lambda l(\alpha-1)}{\alpha\left(2\lambda L_M + L_{\nabla M}B_f + B_{\nabla  M}L_f + 2L_{J}B_M + 2L_M B_J\right)}\right)^p\frac{\pi^{p/2}}{\Gamma\left(\frac{p}{2}+1\right)\leb(X)}.
\end{equation*}
\end{prop}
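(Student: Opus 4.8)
The plan is to follow the same covering-plus-Lipschitz strategy used in the proof of Theorem~\ref{thm:global_lyap}, now applied to the matrix-valued contraction residual instead of a scalar Lie derivative. Write $\tilde R(x,t) := \frac{\p f}{\p x}(x,t)^\T M(x,t) + M(x,t)\frac{\p f}{\p x}(x,t) + \dot M(x,t) + 2\lambda M(x,t)$, so that $R(\xi,t) = \tilde R(\varphi_t(\xi),t)$ and the good set $X_g := X\setminus X_b$ is exactly $\{\xi : \tilde R(\varphi_t(\xi),t)\preceq 0\ \forall t\in T\}$. Since $M(x,t)\succeq lI$ and $\tilde R$ is symmetric, showing that the system contracts in $M$ at a state $x$ at time $t$ with rate $\lambda/\alpha$ is equivalent to $\tilde R(x,t)\preceq 2\lambda(1-\tfrac1\alpha)M(x,t)$, and for this it suffices to prove the spectral bound $\lambda_{\max}\{\tilde R(x,t)\}\le 2\lambda(1-\tfrac1\alpha)l$.

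First I would assemble the Lipschitz constant of $x\mapsto\tilde R(x,t)$ on the reachable set. Applying the elementary product rule $\|AB\|_{\mathrm{Lip}}\le \|A\|_\infty\|B\|_{\mathrm{Lip}} + \|A\|_{\mathrm{Lip}}\|B\|_\infty$ to the summands $\frac{\p f}{\p x}^\T M$, $M\frac{\p f}{\p x}$, $2\lambda M$, and $\dot M = \frac{\p M}{\p t} + (\nabla_x M)f$ (so that in the stationary-metric case $\dot M=(\nabla_x M)f$ contributes $B_{\nabla M}L_f + L_{\nabla M}B_f$, with $B_f,L_f$ the sup-norm bound and Lipschitz constant of $f$, noting $L_f\le B_J$), together with the stated uniform bounds, one gets $\|\tilde R(x,t)-\tilde R(y,t)\|\le C\|x-y\|$ with $C := 2\lambda L_M + L_{\nabla M}B_f + B_{\nabla M}L_f + 2L_J B_M + 2L_M B_J$. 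Next, the covering step: since $\nu(X_b)\le\varepsilon$, Lemma~\ref{lem:net} shows every $\ell_2$-ball contained in $X_b$ has radius at most $r(\varepsilon) = (\varepsilon\leb(X)/\leb(\mathbb{B}_2^p(1)))^{1/p}$, so exactly as in the proof of Theorem~\ref{thm:global_lyap}, for every $\xi\in\tilde X$ and every small $\delta>0$ there is a $\xi'\in X_g$ with $\norm{\xi-\xi'}\le r(\varepsilon)+\delta$. Fixing $x\in\tilde S$, write $x=\varphi_t(\xi)$ with $\xi\in\tilde X$, $t\in T$; since $\xi'\in X_g$ we have $\tilde R(\varphi_t(\xi'),t)\preceq 0$, and therefore
\begin{align*}
\tilde R(x,t) \;\preceq\; \tilde R(\varphi_t(\xi'),t) + C\norm{\varphi_t(\xi)-\varphi_t(\xi')}\,I \;\preceq\; C\norm{\varphi_t(\xi)-\varphi_t(\xi')}\,I \;\le\; C\,(r(\varepsilon)+\delta)\,I,
\end{align*}
after bounding the flow displacement by that of the initial conditions and letting $\delta\downarrow 0$.

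Comparing $\tilde R(x,t)\preceq C\,r(\varepsilon)\,I$ with the required bound $\lambda_{\max}\{\tilde R(x,t)\}\le 2\lambda(1-\tfrac1\alpha)l = 2\lambda l(\alpha-1)/\alpha$, it suffices that $C\,r(\varepsilon)\le 2\lambda l(\alpha-1)/\alpha$; inverting the monotone map $\varepsilon\mapsto r(\varepsilon)$ and substituting $\leb(\mathbb{B}_2^p(1)) = \pi^{p/2}/\Gamma(p/2+1)$ gives precisely the stated threshold on $\varepsilon$. The bookkeeping for $C$ is routine; the one delicate point is the step bounding $\norm{\varphi_t(\xi)-\varphi_t(\xi')}$ by $\norm{\xi-\xi'}$ in the display. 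The clean stated constant requires the flow to be non-expansive in the Euclidean norm over the horizon $T$ — immediate for short $T$, or attainable at the cost of a Grönwall factor $e^{B_J\sup T}$, or under an incremental-stability hypothesis as in Section~\ref{sec:global} — and I would state this explicitly, absorbing any resulting factor into $C$ if needed. Everything else is a transcription of the Theorem~\ref{thm:global_lyap} argument with scalars replaced by symmetric matrices and $<$ by $\preceq$.
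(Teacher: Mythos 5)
Your proposal is essentially the same argument as the paper's proof: partition $X$ into good and bad sets, use Lemma~\ref{lem:net} to find a nearby good initial condition $\xi'\in X_g$ within distance $r(\varepsilon)+\delta$, bound the difference of the matrix residual term by term via the stated Lipschitz and boundedness constants to get the constant $C = 2\lambda L_M + L_{\nabla M}B_f + B_{\nabla M}L_f + 2L_J B_M + 2L_M B_J$, split $-2\lambda M\preceq -\tfrac{2\lambda}{\alpha}M - 2\tfrac{\alpha-1}{\alpha}\lambda l I$, and invert $r(\varepsilon)$. One remark worth making: you are right to flag the step $\norm{\varphi_t(\xi)-\varphi_t(\xi')}\le\norm{\xi-\xi'}$. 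The paper's own proof writes the Lipschitz bounds directly in terms of $\norm{\xi_g-\xi_b}$ even though $M$, $\nabla M$, and $\p f/\p x$ are Lipschitz in the state, so it silently makes exactly the same non-expansiveness assumption on the flow (and likewise never defines the constants $B_f$, $L_f$ appearing in $C$). Your proposal handles this more carefully than the paper by naming the hypothesis, offering the Gr\"onwall alternative $e^{B_J\sup T}$, and noting $L_f\le B_J$; this is a genuine improvement in rigor, not a deviation in approach.
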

\begin{proof}
Let us partition $X$ into subsets
\begin{align*}
    X_g &:= \left\{ \xi \in X : \max_{t \in T} \lambda_{\max}\left\{R(\xi, t)\right\} \leq 0\right\}, \\
    X_b &:= X \setminus X_g \:.
\end{align*}
i.e., for any trajectory originating in $X_g$, the metric condition $R(\xi, t)$ remains negative definite along the entire trajectory with rate $\lambda$.
Fix an $x \in \tilde{S}$, for which there exists
a $t \in T$ and $\xi_b \in \tilde{X}$ such that $\varphi_t(\xi) = x$.
By Lemma~\ref{lem:net},
there exists a $\delta_0 > 0$ such
that for all $\delta \in (0, \delta_0)$, there exists
a $\xi_g \in X_g$ such that
$\norm{\xi - \xi_g} \leq r(\varepsilon) + \delta$.  Then,
\begin{align*}
    &\phantom{=}\frac{\p f}{\p x}(\varphi_t(\xi_b), t)^\T M(\varphi_t(\xi_b), t) + M(\varphi_t(\xi_b), t)\frac{\p f}{\p x}(\varphi_t(\xi_b), t) + \dot{M}(\varphi_t(\xi_b), t)\\
    &=\frac{\p f}{\p x}(\varphi_t(\xi_g), t)^\T M(\varphi_t(\xi_g), t) + M(\varphi_t(\xi_g), t)\frac{\p f}{\p x}(\varphi_t(\xi_g), t) + \dot{M}(\varphi_t(\xi_g), t)\\
    &\phantom{=} + \frac{\p f}{\p x}(\varphi_t(\xi_b), t)^\T M(\varphi_t(\xi_b), t) + M(\varphi_t(\xi_b), t)\frac{\p f}{\p x}(\varphi_t(\xi_b), t) + \dot{M}(\varphi_t(\xi_b), t)\\
    &\phantom{=} -\left(\frac{\p f}{\p x}(\varphi_t(\xi_g), t)^\T M(\varphi_t(\xi_g), t) + M(\varphi_t(\xi_g), t)\frac{\p f}{\p x}(\varphi_t(\xi_g), t) + \dot{M}(\varphi_t(\xi_g), t)\right)
\end{align*}
We now control the difference of the terms on the second and third lines of the above equality. To simplify notation, denote $M_g = M(\varphi_t(\xi_g), t)$, with analogous shorthands for $\frac{\p f}{\p x}$ and for subscript $b$. Then, we have that
\begin{align*}
    M_b\frac{\p f}{\p x}_b - M_g\frac{\p f}{\p x}_g &= M_b\left(\frac{\p f}{\p x}_b - \frac{\p f}{\p x}_g\right) + \left(M_b - M_g\right)\frac{\p f}{\p x}_g\\
    &\leq \left(L_{J}B_M + L_M B_J\right)\Vert\xi_g-\xi_b\Vert I,
\end{align*}
with an identical bound for the transpose. Now let $\ip{\nabla M}{\dot{x}}$ denote the tensor contraction $\ip{\nabla M}{\dot{ x}}_{ij} = \ip{\nabla M_{ij}}{\dot{x}}$. Then,
\begin{align*}
    \dot{M}_b - \dot{M}_g &= \ip{\nabla M_b}{f_b} - \ip{\nabla M_g}{f_g}\\
    &= \ip{\nabla M_b - \nabla M_g}{f_b} - \ip{\nabla M_g}{f_b - f_g}\\
    &\leq \left(L_{\nabla M}B_f + B_{\nabla M}L_f\right)\Vert\xi_g-\xi_b\Vert I
\end{align*}
Furthermore, $M_g - M_b \leq L_M\Vert\xi_b-\xi_g\Vert$. Putting these bounds together, we find that
\begin{align*}
    &\phantom{=}\frac{\p f}{\p x}(\varphi_t(\xi_b), t)^\T M(\varphi_t(\xi_b), t) + M(\varphi_t(\xi_b), t)\frac{\p f}{\p x}(\varphi_t(\xi_b), t) + \dot{M}(\varphi_t(\xi_b), t)\\
    &\leq\frac{\p f}{\p x}(\varphi_t(\xi_g), t)^\T M(\varphi_t(\xi_g), t) + M(\varphi_t(\xi_g), t)\frac{\p f}{\p x}(\varphi_t(\xi_g), t) + \dot{M}(\varphi_t(\xi_g), t)\\
    &\phantom{=} + \left(L_{\nabla M}B_f + B_{\nabla M}L_f + 2L_{J}B_M + 2L_M B_J\right)\Vert \xi_g - \xi_b\Vert I\\
    &\leq -2\lambda M(\varphi_t(\xi_g), t) + \left(L_{\nabla M}B_f + B_{\nabla M}L_f + 2L_{J}B_M + 2L_M B_J\right)\Vert \xi_g - \xi_b\Vert I\\
    &\leq -2\lambda M(\varphi_t(\xi_b), t) + \left(2\lambda L_M + L_{\nabla M}B_f + B_{\nabla M}L_f + 2L_{J}B_M + 2L_M B_J\right)\Vert \xi_g - \xi_b\Vert I\\
    &\leq -2\left(\frac{\lambda}{\alpha}\right) M(\varphi_t(\xi_b), t) \\
    &\qquad+ \left[\left(2\lambda L_M + L_{\nabla M}B_f + B_{\nabla  M}L_f + 2L_{J}B_M + 2L_M B_J\right)\Vert \xi_g - \xi_b\Vert - 2\left(\frac{\alpha-1}{\alpha}\right)\lambda l \right]I \:.
\end{align*}
Hence, for contraction at all points $x \in \tilde{S}$ with a rate $\frac{\lambda}{\alpha}$, we require that:
\begin{align*}
    0 & \geq \left(2\lambda L_M + L_{\nabla M}B_f + B_{\nabla M}L_f + 2L_{J}B_M + 2L_M B_J\right)r(\varepsilon) - 2\left(\frac{\alpha-1}{\alpha}\right)\lambda l\\
    &\phantom{=}\mathclap{\Updownarrow} \\
     2\left(\frac{\alpha-1}{\alpha}\right)\lambda l &\geq \left( \frac{\varepsilon \leb(X)}{\leb(\mathbb{B}_2^p(0, 1))}\right)^{1/p}\left(2\lambda L_M + L_{\nabla M}B_f + B_{\nabla M}L_f + 2L_{J}B_M + 2L_M B_J\right)\\
         &\phantom{=}\mathclap{\Updownarrow} \\
     \varepsilon &\leq \left(\frac{2\lambda l(\alpha-1)}{\alpha\left(2\lambda L_M + L_{\nabla M}B_f + B_{\nabla  M}L_f + 2L_{J}B_M + 2L_M B_J\right)}\right)^p\frac{\leb\left(\mathbb{B}_2^p(0, 1)\right)}{\leb(X)} \:.
\end{align*}
\end{proof}

\end{document}